\documentclass[]{article}





\usepackage{hyperref}       
\usepackage{url}            
\usepackage{booktabs}       
\usepackage{amsfonts}       

\usepackage{natbib}

\usepackage{graphicx}
\usepackage{subcaption}

\usepackage{tcolorbox}
\definecolor{darkred}{RGB}{150,0,0}
\definecolor{darkgreen}{RGB}{0,150,0}
\definecolor{darkblue}{RGB}{0,0,150}
\hypersetup{colorlinks=true, linkcolor=red, citecolor=darkblue, urlcolor=darkblue}

\usepackage{tikz}
\usepackage{amssymb}
\usepackage{amsmath}
\usepackage{amsmath,amsfonts,amssymb,amsthm}
\usepackage{mathtools}
\usepackage{commath}
\usepackage{flexisym}
\usepackage[utf8]{inputenc}
\usepackage{csquotes}
\usepackage[english]{babel}

\usepackage{color}
\usepackage{bbm}
\usepackage{multirow}
\usepackage{amsmath}
\DeclareMathOperator*{\argmax}{arg\,max}

\newtheorem{lemma}{Lemma}
\newtheorem{corollary}{Corollary}
\newtheorem{theorem}{Theorem}
\newtheorem{myassum}{Assumption}
\theoremstyle{definition}
\newtheorem{remark}{Remark}
\theoremstyle{definition}
\newtheorem{definition}{Definition}

\usepackage[shortlabels]{enumitem}
\usepackage{lipsum}
\usepackage{xcolor}
\usepackage[linesnumbered,ruled,vlined]{algorithm2e}
\usepackage{enumitem}

\SetCommentSty{mycommfont}

\SetKwInput{KwInput}{Input}                
\SetKwInput{KwOutput}{Output} 

\usepackage[margin=1in]{geometry}

\title{Distributed Contextual Linear Bandits\\with Minimax Optimal Communication Cost}


\newcommand{\lamin}{\la_{\rm \min}}
\newcommand{\lamax}{\la_{\rm \max}}
\newcommand{\la}{\lambda}

\newcommand{\nn}{\nonumber}

\usepackage{xspace}
\usepackage[textsize=tiny]{todonotes}


\newcommand{\Unif}{{\rm Unif}}
\newcommand{\DKL}{{\rm D_{KL}}}

\newcommand{\DisBE}{DisBE-LUCB\xspace}
\newcommand{\DecBE}{DecBE-LUCB\xspace}

\newcommand{\bal}{\begin{align}}
\newcommand{\eal}{\end{align}}






%


\newcommand{\M}{\mathbf{M}}
\newcommand{\W}{\mathbf{W}}
\newcommand{\Ub}{\mathbf{U}}

\newcommand{\D}{\mathbf{D}}
\newcommand{\C}{\mathbf{C}}
\newcommand{\Pb}{\mathbf{P}}

\newcommand{\X}{\mathbf{X}}
\newcommand{\A}{\mathbf{A}}
\newcommand{\B}{\mathbf{B}}
\newcommand{\Yb}{\mathbf{Y}}
\newcommand{\Vb}{\mathbf{V}}






\newcommand{\x}{\mathbf{x}}

\newcommand{\ub}{\mathbf{u}}

\newcommand{\bb}{\mathbf{b}}
\newcommand{\e}{\mathbf{e}}
\newcommand{\y}{\mathbf{y}}

\newcommand{\sind}{\mathbf{1}}
\newcommand{\cT}{\mathcal T}
\newcommand{\z}{\mathbf{z}}



\newcommand{\Tc}{{\mathcal{T}}}
\newcommand{\Sc}{{\mathcal{S}}}

\newcommand{\Dc}{\mathcal{D}}
\newcommand{\Xc}{\mathcal{X}}

\newcommand{\Kc}{\mathcal{K}}
\newcommand{\Nc}{\mathcal{N}}

\newcommand{\Cc}{\mathcal{C}}
\newcommand{\Mc}{\mathcal{M}}

\newcommand{\Ec}{\mathcal{E}}

\newcommand{\Oc}{\mathcal{O}}




\newcommand{\beq}{\begin{equation}}
\newcommand{\eeq}{\end{equation}}
\newcommand{\bea}{\begin{align}}
\newcommand{\eea}{\end{align}}


\newcommand{\E}{\mathbb{E}}

\newcommand{\Otilde}{\tilde\Oc}

\newcommand{\mub}{\boldsymbol \mu}

\newcommand{\nub}{\boldsymbol \nu}
\newcommand{\Sigmab}{\boldsymbol \Sigma}

\newcommand{\thetab}{\boldsymbol\theta}

\newcommand{\Comm}{{\rm Comm}}
\newcommand{\Exp}{{\rm ExpPol}}

\newcommand{\MixedSoftMax}{{\rm MixedSoftMax}}

\newcommand{\piG}{\pi^{\rm G}}

\newcommand{\Vlambda}{{{\mathbb{V}}}}

\newcommand{\ip}[1]{\langle #1 \rangle}

%


\usepackage{authblk}
\author[1]{Sanae Amani}
\author[2]{Tor Lattimore}
\author[3]{András György}
\author[4]{Lin F. Yang}
\affil[1,4]{University of California, Los Angeles}
\affil[2,3]{DeepMind, London}
{
    \makeatletter
    \renewcommand\AB@affilsepx{, \protect\Affilfont}
    \makeatother
    \affil[1]{samani@ucla.edu}
\affil[2]{lattimore@google.com}
\affil[3]{agyorgy@deepmind.com}
\affil[4]{linyang@ee.ucla.edu}
}

\setlength\parindent{0pt}
\setlength{\parskip}{5pt}

\begin{document}

\sloppy
\date{}
\maketitle

\begin{abstract}
We study distributed contextual linear bandits with stochastic contexts, where $N$ agents act cooperatively to  solve a linear bandit-optimization problem with $d$-dimensional features over the course of $T$ rounds. For this problem, we derive the first ever information-theoretic lower bound $\Omega(dN)$ on the communication cost of any algorithm that performs optimally in a regret minimization setup. We then propose a distributed batch elimination version of the LinUCB algorithm, \DisBE, where the agents share information among each other through a central server. We prove that the communication cost of \DisBE matches our lower bound up to logarithmic factors. In particular, for scenarios with known context distribution, the communication cost of \DisBE is only $\Otilde(dN)$ and its regret is $\Otilde(\sqrt{dNT})$, which is of the same order as that incurred by an optimal single-agent algorithm for $NT$ rounds. We also provide similar bounds for practical settings where the context distribution can only be estimated. Therefore, our proposed algorithm is nearly minimax optimal in terms of \emph{both regret and communication cost}. Finally, we propose \DecBE, a fully decentralized version of \DisBE, which operates without a central server, where agents share information with their \emph{immediate neighbors} through a carefully designed consensus procedure. 
\end{abstract}

\section{INTRODUCTION}
In the contextual bandit problem, a learning agent repeatedly makes decisions based on contextual information, with the goal of learning a policy that maximizes their total reward over time. This model captures simple reinforcement learning tasks in which the agent must learn to make high-quality decisions in an uncertain environment, but does not need to engage in long-term planning. Contextual bandit algorithms are deployed in online personalization systems such as medical trials and product recommendation in e-commerce
\citep{agarwal2016making,tewari2017ads}. For example, by modelling personalized recommendation of articles as a contextual bandit problem, a learning algorithm sequentially selects articles to be recommended to users based on contextual information about the users and articles, while continuously updating its article-selection strategy based on user-click feedback to maximize total user clicks \citep{li2010contextual}.

Distributed cooperative learning is a paradigm where multiple agents collaboratively learn a shared prediction model. More recently, researchers have explored the potential of contextual bandit algorithms in distributed systems, such as in robotics, wireless networks, the power grid and medical trials \citep{li2013medicine,avner2019multi,berkenkamp2016bayesian,sui2018stagewise}. For example, in sensor/wireless networks \citep{avner2019multi} and channel selection in radio networks \citep{liu2010decentralized,liu2010distributed,liu2010distributed1}, a collaborative behavior is required for decision-makers/agents to select better actions as individuals. 
 
While a distributed nature is inherent in certain systems, distributed solutions 
might also be preferred in broader settings, as they can lead to speed-ups of the learning process. This calls for extensions of the traditional single-agent bandit setting to networked systems. In addition to speeding up the learning process, another desirable goal of each distributed learning algorithm is \emph{communication efficiency}. In particular, keeping the communication as rare as possible in collaborative learning is of importance. The notion of communication efficiency in distributed learning paradigms is directly related to the issue of efficient environment queries made in single-agent settings. In many practical single-agent scenarios, where the agent sequentially
makes active queries about the environment, it is desirable to limit these queries
to a small number of rounds of interaction, which helps to increase the parallelism of the learning process and reduce the management cost. In recent years, to address such scenarios, a surge of research activity in the area of batch online learning has shown that in many popular online learning tasks, a very small number of batches
may achieve minimax optimal learning performance, and therefore it is possible to enjoy the benefits
of both adaptivity and parallelism \citep{ruan2021linear,han2020sequential,gao2019batched}. In light of the connection between communication cost in distributed settings and the number of environment queries in single-agent settings, a careful use of batch learning methods in multi-agent learning scenarios may positively affect the communication efficiency by limiting the number of necessary communication rounds. In this paper, we first prove an information-theoretic lower bound on the communication cost of distributed contextual linear bandits, and then leverage such batch learning methods to design an algorithm with a small communication cost that matches this lower bound while guaranteeing optimal regret.

\textbf{Notation.} 
Throughout this paper, we use lower-case letters for scalars, lower-case bold letters for vectors, and upper-case bold letters for matrices. The Euclidean norm of $\x$ is denoted by $\norm{\x}_2$. We denote the transpose of any column vector $\x$ by $\x^\top$. For any vectors $\x$ and $\y$, we use $\langle \x,\y\rangle$ to denote their inner product. Let $\A$ be a positive semi-definite $d\times d$ matrix and $\boldsymbol \nu \in\mathbb R^d$. The weighted 2-norm of $\boldsymbol \nu$ with respect to $\A$ is defined by $\norm{\boldsymbol \nu}_\A = \sqrt{\boldsymbol \nu^\top \A \boldsymbol \nu}$. For a positive integer $n$, $[n]$ denotes the set $\{1,2,\ldots,n\}$, while for positive integers $m \leq n$, $[m:n]$ denotes the set $\{m,m+1,\ldots,n\}$. For square matrices $\A$ and $\B$, we use $\A\preceq \B$ to denote $\B-\A$ is positive semi-definite. We denote the minimum and maximum eigenvalues of $\A$ by $\lamin(\A)$ and $\lamax(\A)$. We use $\mathbf{e}_i$ to denote the $i$-th standard basis vector. $I(X;Y)$ denotes the mutual information between two random variables $X$ and $Y$.
Finally, we use standard $\Otilde$ notation for big-O notation that ignores logarithmic factors.


\subsection{Problem formulation} \label{sec:formulate}

We consider a network of $N$ agents acting cooperatively to efficiently solve a $K$-armed stochastic linear bandit problem. Let $T$ be the total number of rounds. At each round $t\in[T]$, each agent $i$ is given a decision set $\Xc_t^i=\{\x^i_{t,a}:a\in[K]\}\subset\mathbb{R}^d$, drawn independently from a distribution $\Dc_t^i$. We assume that $\Dc_t^i = \Dc$ for all $(i,t)\in[N]\times[T]$. Here, $\x_{t,a}^i$ is a mapping from action $a$ and the contextual information agent $i$ receives at round $t$ to the $d$-dimensional space. We call $\x_{t,a}^i$ the feature vector associated with action $a$ and agent $i$ at round $t$. Agent $i$ selects action $a_{i,t}\in[K]$, and observes the reward $y^{i}_{t}=\langle \thetab_\ast,\x^i_{t,a_{i,t}}\rangle+\eta_t^i$, where $\thetab_\ast\in\mathbb{R}^d$ is an unknown vector and $\eta_t^i$ is an independent zero-mean additive noise. The agents are also allowed to communicate with each other. Both the action selection and the communicated information of each agent may only depend on previously played actions, observed rewards, decision sets, and communication received from other agents.

Throughout the paper, we rely on the following assumption.
\begin{myassum}\label{assum:boundedness} Without loss of generality, $\norm{\boldsymbol\theta_\ast}_2\leq 1$, $\norm{\x^i_{t,a}}_2\leq 1$, $\abs{y_t^i}\leq 1$  for all $(a,i,t)\in[K]\times[N]\times[T]$. Also, the distribution $\Dc$ is known to the agents.
\end{myassum}
The boundedness assumption is standard in the linear bandit literature \citep{chu2011contextual,dani2008stochastic,huang2021federated}. Moreover, our results can be readily extended to the settings where the assumption on the boundedness of $y_t^i$ is relaxed by assuming the noise variables $\eta_t^i$ are conditionally $\sigma$-subGaussiam for a constant $\sigma\geq 0$. As such, a high probability bound on $\eta_t^i$ and consequently $y_t^i$ can be established, which is desired in our analysis for establishing confidence intervals in Appendix \ref{sec:proofofconfidencesets}.

Our assumption on the knowledge of $\Dc$ is fairly well-motivated. A standard argument is based on having loads of unsupervised data in real-world scenarios. For example, Google, Amazon, Netflix, etc, have collected massive amounts of data about users, products, and queries, sufficiently describing the joint distributions. Given this, even if the features change (for a given user or product, etc.), their distributions can be computed/sampled from as the features are computed via a deterministic feature map. We further relax this assumption in Remark \ref{remark:relaxD} in Section \ref{sec:main result}.

\paragraph{Goal.}
The performance of the network is measured via the cumulative regret of all agents in $T$ rounds, defined as
\begin{align}\label{eq:cumulativeregret}
    R_T := \mathbb{E}\left[\sum_{t=1}^T\sum_{i=1}^N \langle\thetab_\ast,\x^i_{\ast,t}\rangle-\langle\thetab_\ast,\x^i_{t}\rangle\right],
\end{align}
where the expectation is taken over the random variables $\Xc_t^i, (i,t) \in [N]\times[T]$ 
with joint distribution $\bigotimes_{i,t=1}^{N,T}\Dc_t^i$, $\x_t^i$ and $\x^i_{\ast,t}\in\argmax_{\x\in \Xc_t^i}\langle\thetab_\ast,\x\rangle$ are the feature vectors associated with the action chosen by agent $i$ at round $t$ and the best possible action, respectively.

For simplicity, in our algorithms the communication cost is measured as the number of communicated real numbers. In Section \ref{sec:lowerbound}, we also discuss variants of our methods where the communication cost is measured as the number of communicated bits.

The goal is to design a distributed collaborative algorithm that minimizes the cumulative regret, while maintaining an efficient coordination protocol with a small communication cost. Specifically, we wish to achieve a regret close to $\tilde\Oc(\sqrt{dNT})$ that is incurred by an optimal \emph{single-agent algorithm for $NT$ rounds} (the total number of arm pulls) while the communication cost is $\tilde\Oc(dN)$ with only a mild (logarithmic) dependence on $T$.

\paragraph{A motivating example.}


In news article recommendation, the candidate actions correspond to $K$ news articles.
At round $t$, an individual user visits an online news platform that has $N$ servers employing the same recommender systems to recommend news articles from an article pool. The contextual information of the user, the articles and the servers at round $t$ is modeled by $\Xc^i_t=\{\x^i_{t,a}:a\in[K]\}$, characterizing user's
reaction to each recommended article $a$ (e.g., click/not click) by server $i$, and the probability of clicking on $a$ is modeled
by $\langle \thetab_\ast,\x^i_{t,a}\rangle$, which corresponds to the expected reward. On the distributed side, these $N$ servers collaborate with each other by sharing information about the feedback they receive from the users after recommending articles in an attempt to speed up learning the users' preferences. In this example, the individual users and articles can often
be viewed as independent samples from the population which is characterized by distribution $\Dc$.


\subsection{Contributions}\label{sec:contributions}

We establish a lower bound on the communication cost of distributed contextual linear bandits. We propose algorithms with optimal regret and communication cost matching our lower bound (up to logarithmic factors) and growing linearly with $d$ and $N$ while those of previous best-known algorithms scale super linearly either in $d$ or $N$. Below, we elaborate more on our contributions:

\begin{table*}[t!]
\scriptsize
\begin{tabular}{|p{2.74cm}|p{2cm}|p{5cm}|p{3.1cm}|p{1.4cm}| } 
  \hline
  Setting & Algorithm & Regret & Communication cost & Communication cost lower bound\\ 
  \hline
  Contexts are fixed over time horizon and agents &DELB with server \cite{wang2019distributed} &$\Oc\left(d\sqrt{NT\log T}\right)$ & $\Oc\left((dN+d\log\log d)\log T\right)$ &  \\ 
  \hline 
  \multirow{2}{3cm} {Contexts adversarially\\ vary over time horizon\\ and agents}
  & DisLinUCB with server \cite{wang2019distributed} &$\Oc\left(d\sqrt{NT}\log^2 T\right)$& $\Oc\left(d^3N^{1.5}\right)$ &  \\ 
  & FedUCB with server \cite{dubey2020differentially}&  $\Oc\left(d\sqrt{NT}\log^2 T\right)$&$\Oc\left(d^3N^{1.5}\right)$&\\
  \hline
  Contexts adversarially vary over agents & Fed-PE with server \cite{huang2021federated} &  $\Oc\left(\sqrt{dNT\log (KNT)}\right)$ & $\Oc\left((d^2+dK)N\log T\right)$ & \\
  \hline
  \multirow{2}{3cm} {Contexts stochastically\\ vary over time horizon\\ and agents ({\bf this work})
  } 
&   \DisBE with server & $\Oc\left(\sqrt{dNT\log d \log^2\left(KNT\right)}\right)$ & $\Oc\left(dN\log\log(NT)\right)$ & $\Omega(dN)$\\
&   \DecBE without server& $\Oc\left(NS+\sqrt{dN(T+S)\log d \log^2\left(KNT\right)}\right)$ & $\Oc\left(S\delta_{\rm max}dN\log\log(NT)\right)$ & \\
  \hline
\end{tabular}
\caption{$N$: number of agents; $K$: number of arms; $T$: time horizon; $d$: dimension of the feature vectors; $S= \frac{\log(dN)}{\sqrt{1/\abs{\la_2}}}$;  $\abs{\la_2}$: the second largest eigenvalue of communication matrix in absolute value; $\delta_{{\rm max}}$ is the maximum degree of the graph representing agents' network. 
The lower bound for the communication cost is interpreted as follows: For any algorithm with expected communication cost less than $\frac{dN}{64}$, there exists a contextual linear bandit instance with stochastic contexts, for which the algorithm's regret is $\Omega(N\sqrt{dT})$. See Theorem~\ref{thm:lowerbound} and its proof in Section \ref{sec:proofoflowerbound} for more details.}
\label{table:comp}
\end{table*}

\paragraph{Minimax lower bound for the communication cost.} As our main technical contribution, in Section~\ref{sec:lowerbound}, we prove the first information-theoretic lower bound on the communication cost (measured in bits) of any algorithm achieving an optimal regret rate for the distributed contextual linear bandit problem with stochastic contexts. In particular, we prove that for any distributed algorithm with expected communication cost less than $\frac{dN}{64}$, there exists a contextual linear bandit problem instance with stochastic contexts for which the algorithm's regret is $\Omega(N\sqrt{dT})$. 

\paragraph{\DisBE.} 
We propose a distributed batch elimination contextual linear bandit algorithm (\DisBE): the time steps are grouped into $M$ pre-defined batches and at each time step, each agent first constructs confidence intervals for each action's reward, and the actions whose confidence intervals completely fall below those of other actions are eliminated. Throughout each batch, each agent uses the same policy to select actions from the surviving action sets. At the end of each batch, the agents share information through a central server and update the policy they use in the next batch. 
We prove that while the communication cost of \DisBE is only $\Otilde(dN)$, it achieves a regret $\Otilde(\sqrt{dNT})$, which is of the same order as that incurred by a near optimal \emph{single-agent algorithm for $NT$ rounds} . This shows that \DisBE is nearly minimax optimal in terms of \emph{both regret and communication cost}. We highlight that while \DisBE is inspired by the single-agent batch elimination style algorithms \citep{ruan2021linear} in an attempt to save on communication as much as possible, a direct use of confidence intervals used in such algorithms would fail to guarantee optimal communication cost $\Otilde(dN)$ and require more communication by a factor of $\Oc(d)$. We address this issue by introducing new confidence intervals in Lemma \ref{lemm:confidencesets}.
Details are given in Section~\ref{sec:FPE}.

\paragraph{\DecBE.} Finally, we propose a fully decentralized variant of \DisBE without a central server, where the agents can only communicate with their \emph{immediate neighbors} given by a communication graph. Our algorithm, called decentralized batch elimination linear UCB (\DecBE), runs a carefully designed consensus procedure to spread information throughout the network. For this algorithm, we prove a regret bound that captures both the degree of selected actions' optimality and the inevitable delay in information-sharing due to the network structure while the communication cost still grows linearly with $d$ and $N$. See Section \ref{sec:decentralized}.

 We complement our theoretical results with numerical simulations under various settings in Section~\ref{sec:experiments}.


\section{RELATED WORK}\label{sec:relatedwork}

\paragraph{Distributed MAB.} Multi-armed bandit (MAB) in multi-agent distributed settings has received attention from several academic communities. In the context of the classical $K$-armed MAB, \citet{martinez2019decentralized,landgren2016distributed,landgr,landgren2018social} proposed decentralized algorithms for a network of $N$ agents that can share information only with their immediate neighbors, while \citet{szorenyi2013gossip} studied the MAB problem on peer-to-peer networks.

\paragraph{Distributed contextual linear bandits.} The most closely related  works on distributed linear bandits are those of \citet{wang2019distributed,dubey2020differentially,huang2021federated,korda2016distributed,hanna2022learning}. In particular, \citet{wang2019distributed} investigate communication-efficient distributed linear bandits, where the agents can communicate with a server by sending and receiving packets. They propose two algorithms, namely, DELB and DisLinUCB, for fixed and time-varying action sets, respectively. The works of \citet{dubey2020differentially, huang2021federated} consider the federated  linear contextual bandit model and the former focuses on federated differential privacy. In the latter, the contexts denote the specifics of the agents and are different but fixed during the entire time horizon for each agent. In the former, however, the contexts contain the information about both the environment and the agents, in the sense that contexts associated with different agents are different and vary during the time horizon. To put these in the context of an example, consider a recommender system. Both \citet{dubey2020differentially} and \citet{huang2021federated} consider a multi-agent
model, where each agent is associated with a different user profile. \citet{huang2021federated} fix a user profile for an agent, while \citet{dubey2020differentially} consider a time-varying user profile. Therefore, \citet{huang2021federated} capture the variation of contexts over agents, whereas it is captured over both agents and time horizon in \citet{dubey2020differentially}. A regret and communication cost comparison between \DisBE, \DecBE and other baseline algorithms is given in Table~\ref{table:comp}.

\paragraph{Batch elimination in distributed bandits.} An important line of work related to communication efficiency in distributed bandits studies practical single-agent scenarios using batch elimination methods, in which a very small number of batches achieve minimax optimal learning performance \citep{ruan2021linear,han2020sequential,gao2019batched}. Our proposed algorithms are inspired by the single-agent BatchLinUCB-DG proposed in \citet{ruan2021linear} in an attempt to save on communication as much as possible. That said, a direct use of confidence intervals in \citet{ruan2021linear} would fail to guarantee optimal communication cost $\Otilde(dN)$ and require more communication by a factor of $\Oc(d)$. We address this issue by introducing new confidence intervals, used in our algorithms, in Lemma \ref{lemm:confidencesets}.

\paragraph{Minimax lower bound on communication cost.} We are unaware of any lower bound on the communication cost for contextual linear bandits in the distributed/federated learning setting. To the best of our knowledge, our work is the first to establish such a minimax lower bound and to propose algorithms with optimal regret and communication cost matching this lower bound up to logarithmic factors.

\section{LOWER BOUND ON COMMUNICATION COST}\label{sec:lowerbound}
In this section, we derive an information-theoretic lower bound on the communication cost of the distributed contextual linear bandits with stochastic contexts. In particular, we prove that for any distributed contextual linear bandit algorithm with stochastic contexts that achieves the optimal regret rate $\Otilde(\sqrt{dNT})$, the expected amount of communication must be at least $\Omega(dN)$. This is formally stated in the following theorem.

\begin{theorem}\label{thm:lowerbound}
Let $T\geq 4d\log(8)$. For any algorithm with expected communication cost (measured in bits) less than $\frac{dN}{64}$, there exists a contextual linear bandit instance with stochastic contexts, for which the algorithm's regret is $\Omega(N\sqrt{dT})$.
\end{theorem}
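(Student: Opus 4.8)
The plan is to exhibit a family of hard instances, indexed by a hidden sign vector, on which achieving low regret forces every agent to learn every coordinate of $\thetab_\ast$, and then to argue information-theoretically that with a sub-$dN$ communication budget most (agent, coordinate) pairs remain uninformed. First I would fix a \emph{known} context distribution $\Dc$ that decouples the $d$ coordinates: at each round the decision set probes a single coordinate $j\in[d]$ chosen uniformly, e.g.\ the two available feature vectors are $+\e_j$ and $-\e_j$. I set $\thetab_\ast=\Delta\,\mathbf v$ with $\mathbf v$ drawn uniformly from $\{-1,+1\}^d$ and a gap $\Delta\asymp\sqrt{d/T}$, chosen small enough (this is where the hypothesis $T\ge 4d\log 8$ enters) that a single agent, which sees only about $T/d$ samples of any fixed coordinate, cannot reliably determine the corresponding sign $v_j$ from its own observations, while the boundedness assumptions remain satisfied. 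On this family the optimal action in a coordinate-$j$ round is $v_j\,\e_j$, so each mistaken pull costs a gap of $2\Delta$.

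Next I would decompose the regret as a double sum over agents $i\in[N]$ and coordinates $j\in[d]$, where the $(i,j)$ term is the regret agent $i$ accrues on the $\approx T/d$ rounds probing coordinate $j$. For a single pair I reduce to a two-point testing problem: comparing the two instances that differ only in the sign $v_j$, Le Cam's method gives that agent $i$'s regret on these rounds is $\gtrsim \Delta\cdot(T/d)\cdot(1-\mathrm{TV}_{i,j})$, where $\mathrm{TV}_{i,j}$ is the total-variation distance between the laws of agent $i$'s entire view (its own observations together with the messages it receives) under the two instances. By Pinsker and the chain rule, $\mathrm{TV}_{i,j}$ is controlled by the KL divergence of the view, which splits into an own-observation part --- bounded by $O((T/d)\Delta^2)=O(1)$ and made a small constant by the choice of $\Delta$ --- and a part coming from the received messages.

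The communication part is the crux. The messages are the only channel through which the collective $NT$ samples (which do identify $\mathbf v$) can reach an individual agent, so I bound the information they carry. Writing $M_i$ for the bits agent $i$ receives and $B$ for the total communication cost, I use that the coordinates $v_1,\dots,v_d$ are independent to get $\sum_{j} I(v_j;M_i)\le I(\mathbf v;M_i)\le H(M_i)\le |M_i|$, and hence
\[ \sum_{i=1}^N\sum_{j=1}^d I(v_j;M_i)\ \le\ \sum_{i=1}^N |M_i|\ \le\ B. \]
Under the hypothesis $B<dN/64$, the average of $I(v_j;M_i)$ over the $dN$ pairs is below $1/64$, so by Markov's inequality at least $dN/2$ pairs satisfy $I(v_j;M_i)<1/32$. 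For each such pair the communicated information is negligible, the own-observation information is a small constant, and therefore $\mathrm{TV}_{i,j}$ is bounded away from $1$; the corresponding regret term is $\gtrsim \Delta\,(T/d)$.

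Summing over the at-least-$dN/2$ uninformed pairs yields total regret $\gtrsim \tfrac{dN}{2}\cdot\Delta\,(T/d)=\tfrac12 N\Delta T\asymp N\sqrt{dT}$, which is the claimed bound. The main obstacle is the communication step: the messages $M_i$ are produced adaptively from all agents' observations, which themselves depend on $\mathbf v$, so one cannot treat them as side information independent of the signal. Converting the per-pair two-point KL bound into the aggregate mutual-information budget $\sum_{i,j}I(v_j;M_i)\le B$ --- i.e.\ cleanly separating the harmless own-observation leakage from the communicated leakage and exploiting coordinate independence so that a $dN/64$-bit budget must starve a constant fraction of the $dN$ pairs --- is the delicate part, and getting the constants to simultaneously respect $\norm{\thetab_\ast}_2\le 1$, $\abs{y}\le 1$, and $T\ge 4d\log 8$ requires the careful tuning of $\Delta$ sketched above.
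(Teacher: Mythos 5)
Your proposal is correct and follows essentially the same route as the paper's proof: the same coordinate-decoupled hard instance with a uniformly random sign pattern and gap $\Delta\asymp\sqrt{d/T}$, the same entropy bound $\sum_{i,j} I(v_j;M_i)\le \sum_i \E|M_i| < dN/64$ combined with an averaging argument to find a constant fraction of uninformed (agent, coordinate) pairs, and the same per-pair two-point (Pinsker plus data-processing) regret bound that the paper packages as Lemma~\ref{lemm:singleagentlowerbound} together with the Chernoff control of how many rounds probe each coordinate. The only differences are cosmetic: you use single coordinates $\{+\e_j,-\e_j\}$ where the paper uses two-coordinate blocks $(\e_{2j-1},\e_{2j})$, and you apply Markov's inequality at threshold $1/32$ to get $dN/2$ pairs where the paper selects the $dN/4$ pairs of smallest mutual information at threshold $1/16$.
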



\subsection{Proof of Theorem \ref{thm:lowerbound}}\label{sec:proofoflowerbound}
We start with a lower bound for a Bayesian two-armed bandit problem where the learner is given side information that
contains a small amount of information about the optimal action.

\begin{lemma}\label{lemm:singleagentlowerbound}
Let $\mub_1 = (\Delta, 0)$ and $\mub_2 = (-\Delta, 0)$ and consider the Bayesian two-armed Gaussian bandit with mean $\mub$ uniformly
sampled from $\{\mub_1, \mub_2\}$ and $a_\ast = \argmax_{a \in \{1,2\}} \mub_a$, which is a random variable.
Suppose additionally that the learner has access to a random element $M$ with $I(M; a_\star) \leq 1/16$. Then, for any policy $\pi$,
\begin{align*}
    BR_T(\pi)\geq \Delta T\left(\frac{1}{2}-\sqrt{\frac{1}{2}\left(\frac{1}{16}+4T\Delta^2\right)}\right)\,,
\end{align*}
where $BR_T(\pi) = \mathbb{E}_{\mub\sim\Unif\{\mub_1,\mub_2\}}[R_T(\pi,\mub)]$ and $R_T(\pi, \mub)$ is the regret suffered by policy
$\pi$ in the Gaussian two-armed bandit with means $\mub$.
\end{lemma}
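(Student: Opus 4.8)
The plan is to reduce the Bayesian regret to the total amount of information the learner can acquire about the identity of the optimal arm $a_\ast$, and then to bound that information by the side-information budget $1/16$ plus whatever the $T$ rounds of bandit feedback can leak. First I would set up the regret decomposition. Under $\mub_1 = (\Delta,0)$ arm $1$ is optimal and each pull of arm $2$ costs $\Delta$; under $\mub_2 = (-\Delta,0)$ the situation is symmetric. Writing $N_a(T)$ for the number of pulls of arm $a$ and $\E_i$ for expectation under $\mub_i$, this gives
\begin{align*}
BR_T(\pi) = \tfrac{\Delta}{2}\big(\E_1[N_2(T)] + \E_2[N_1(T)]\big) = \tfrac{\Delta}{2}\big(T - (\E_1[N_1(T)] - \E_2[N_1(T)])\big),
\end{align*}
using $N_1 + N_2 = T$. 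Since $N_1(T)\in[0,T]$ is a deterministic function of the full observation $O = (M, A_1, Y_1, \dots, A_T, Y_T)$, I would bound $\E_1[N_1(T)] - \E_2[N_1(T)] \le T\, d_{\mathrm{TV}}(P_1,P_2)$, where $P_i$ is the law of $O$ under $\mub_i$, yielding $BR_T(\pi) \ge \tfrac{\Delta T}{2}\big(1 - d_{\mathrm{TV}}(P_1,P_2)\big)$.

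The key device is to pass from total variation to \emph{mutual} information (which is what the hypothesis controls) rather than a pairwise KL (which it does not). Routing through the mixture $\bar P = \tfrac12(P_1+P_2)$, the triangle inequality followed by Pinsker's inequality gives $d_{\mathrm{TV}}(P_1,P_2) \le \sqrt{\tfrac12\DKL(P_1\|\bar{P})} + \sqrt{\tfrac12\DKL(P_2\|\bar{P})}$, and Cauchy--Schwarz together with the identity $I(a_\ast;O) = \tfrac12\DKL(P_1\|\bar{P}) + \tfrac12\DKL(P_2\|\bar{P})$ (valid for a uniform binary $a_\ast$) collapses this to $d_{\mathrm{TV}}(P_1,P_2)\le \sqrt{2\,I(a_\ast;O)}$. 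Substituting, $BR_T(\pi) \ge \tfrac{\Delta T}{2}\big(1 - \sqrt{2 I(a_\ast;O)}\big) = \Delta T\big(\tfrac12 - \sqrt{\tfrac12 I(a_\ast;O)}\big)$.

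It then remains to bound $I(a_\ast;O)$. By the chain rule $I(a_\ast;O) = I(a_\ast;M) + I(a_\ast; H_T\mid M)$, where $H_T$ is the bandit transcript; the first term is $\le 1/16$ by hypothesis. For the second, I would decompose round by round: the policy and $M$ are shared across the two environments, and conditioned on the pulled arm the reward law depends on $a_\ast$ only through arm $1$, whose mean is $\pm\Delta$ with unit-variance Gaussian noise, contributing a per-pull KL of $2\Delta^2$. Accounting for both KL directions, the conditional information telescopes into at most $4\Delta^2\,\E[\#\{t: A_t = 1\}] \le 4T\Delta^2$. Plugging $I(a_\ast;O)\le \tfrac{1}{16} + 4T\Delta^2$ into the display above produces exactly $BR_T(\pi)\ge \Delta T\big(\tfrac12 - \sqrt{\tfrac12(\tfrac{1}{16}+4T\Delta^2)}\big)$.

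The main obstacle I anticipate is the information-decomposition step: correctly handling adaptivity (the policy depends on $M$ and the entire past), verifying that conditioning on $M$ does not inflate the per-round contribution, and ensuring only pulls of the informative arm are counted — this is where the $4\Delta^2$-per-pull accounting must be made rigorous. The second delicate point is conceptual rather than computational: choosing to control $d_{\mathrm{TV}}(P_1,P_2)$ via the mixture $\bar P$ instead of a direct pairwise-KL bound is precisely what allows the mutual-information hypothesis $I(M;a_\ast)\le 1/16$ to be fed in cleanly, since the hypothesis gives no handle on $\DKL(P_1^M\|P_2^M)$ itself.
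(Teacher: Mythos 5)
Your proof is correct and reaches the exact claimed bound, but it takes a genuinely different route from the paper's in both halves. For the regret-to-information reduction, the paper draws $\hat a$ uniformly from the $T$ played actions, writes $BR_T(\pi)=\Delta T\,\mathbb{P}(\hat a\neq a_\ast)$, and applies Pinsker and Jensen directly to the laws of $\hat a$ given $a_\ast$, arriving at $BR_T(\pi)\geq \Delta T\bigl(\tfrac12-\sqrt{\tfrac12 I(\hat a;a_\ast)}\bigr)$, then data-processes $I(\hat a;a_\ast)\leq I(M,H;a_\ast)$; your pull-count/total-variation reduction followed by the mixture/Pinsker/Cauchy--Schwarz step $d_{\mathrm{TV}}(P_1,P_2)\leq\sqrt{2I(a_\ast;O)}$ is an interchangeable alternative that lands on the same intermediate inequality. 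The substantive difference is how the feedback information is bounded. The paper uses a genie argument: the history is a function of $M$, the full length-$T$ sequence of arm-1 rewards, and noise independent of $a_\ast$, so by data processing $I(H;a_\ast)$ is at most the information in that reward sequence, which is policy-independent and i.i.d.\ Gaussian given $a_\ast$; a convexity-of-KL calculation then yields the $4T\Delta^2$ bound with no adaptivity bookkeeping at all. You instead run the round-by-round chain-rule decomposition of $I(a_\ast;H_T\mid M)$ --- the step you rightly flag as delicate. It does go through (the bound $I(X;Y)\leq\max_{x,x'}\DKL(P_{Y\mid x},P_{Y\mid x'})$ holds under any posterior on $a_\ast$, so conditioning on $M$ and the past is harmless, and only arm-1 pulls contribute), but it requires exactly the bookkeeping that the paper's genie argument avoids. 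In exchange, your exact chain rule $I(a_\ast;M,H_T)=I(a_\ast;M)+I(a_\ast;H_T\mid M)$ is cleaner than the paper's intermediate step $I(M,H;a_\ast)\leq I(M;a_\ast)+I(H;a_\ast)$, which is not a valid inequality for arbitrary joint distributions (conditioning can increase mutual information) and is rigorously justified here only by applying the genie argument conditionally on $M$; your decomposition sidesteps that subtlety, and would even give the slightly sharper bound in terms of the expected number of arm-1 pulls rather than $T$.
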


\begin{remark}\label{rem:sa}
We assume in Lemma~\ref{lemm:singleagentlowerbound} that the learner has access to the message $M$ from the beginning.
The same bound continues to hold in the strictly harder problem where the learner has sequential access 
to a sequence of messages $M_1,\ldots,M_T$ with $I(\{M_t\}_{t=1}^T ; a_\ast) \leq 1/16$.
\end{remark}

The proof is presented in Appendix \ref{sec:proofofmutual}. This lemma emphasizes the role of extra information a single agent might receive throughout the learning process on its performance, and therefore, it is key in proving Theorem \ref{thm:lowerbound}.  Specifically, since Lemma \ref{lemm:singleagentlowerbound} makes no assumption on how the agent receives the extra information about the learning environment, we can prove Theorem \ref{thm:lowerbound} by employing this lemma and a reduction from single-agent bandit to multi-agent
bandit as explained in what follows.

\paragraph{The construction.} We consider a bandit instance where $K=2$ and the decision sets are drawn uniformly from  $\left\{(\e_1,\e_2),(\e_3,\e_4),\ldots,(\e_{d-1},\e_d)\right\}$. Let $\Theta = \{\thetab\in\mathbb{R}^d:(\thetab_{2j-1},\thetab_{2j})\in\{(\Delta,0), (-\Delta,0)\},~\forall j\in[\frac{d}{2}]\}$. We call $(\thetab_{2j-1},\thetab_{2j})$ by $j$-th block of reward vector.

\paragraph{Bayesian regret.}
As in Lemma~\ref{lemm:singleagentlowerbound} we prove the minimax-style lower bound using the Bayesian regret. 
Let $\thetab$ be sampled uniformly from $\Theta$ and $\pi$ be a fixed policy. The Bayesian regret is
\begin{align*}
BR_T = \E\left[\sum_{t=1}^T \sum_{i=1}^N \ip{\thetab, \x^i_{\ast,t}} - \ip{\thetab, \x^i_t}\right]\,,
\end{align*}
where the expectation integrates over the randomness in both $\thetab$ and the corresponding history induced by the interaction
between $\pi$ and the environment determined by $\thetab$.
By Yao's minimax principle, there exists a $\thetab \in \Theta$ such that the expected regret is at least $BR_T$, so it suffices to
lower bound the Bayesian regret.
For the remainder of the proof $\mathbb E[\cdot]$ and $\mathbb P(\cdot)$ correspond to the expectation and probability 
measure on $\thetab$ and the history.
For technical reasons, we assume that these probability spaces are defined to include an infinite interaction between the learner
and environment. Of course, this is only used in the analysis.

\paragraph{Reduction from single-agent bandit to multi-agent.}
Let $M_{ij}$ be the mutual information between messages agent $i$ receives and $j$-th block of the reward vector $(\thetab_{2j-1},\thetab_{2j})$. 
By assumption,
\begin{align}
    \sum_{i=1}^N\sum_{j=1}^{\frac{d}{2}}M_{ij}&\leq \sum_{i=1}^N\mathbb{E}[\text{Total number of bits agent $i$ receives}]\leq \frac{dN}{64}.\label{eq:ccdN128mainbody}
\end{align}

Let $\Sc$ be the set of $\frac{dN}{4}$ pairs $(i,j)\in[N]\times[\frac{d}{2}]$ with smallest $M_{ij}$. From \eqref{eq:ccdN128mainbody} and the definition of $\Sc$, we observe that for every pair $(i,j)\in\Sc$, we have
\begin{align}
    M_{ij}\leq \frac{dN}{64\frac{dN}{4}}=\frac{1}{16}.\nn
\end{align}

Let $B_{ijt}$ be the indicator that the context is such that agent $i$ interacts with $j$-th block in round $t$, which is
\begin{align*}
B_{ijt} = \sind(\x^i_{t,1} = \e_{2j-1}) \,.
\end{align*}
Note that $\{B_{ijt}\}_{t=1}^\infty$ are independent and $\E[B_{ijt}] = 2/d$.
Let $\cT_{ij} = \{t : B_{ijt} = 1\}$ and $\cT^\circ_{ij}$ be the first $T_{\circ}$ elements
of $\cT_{ij}$ with $T_{\circ} = T/(4d)$.
Let
\begin{align}
    R_{ij}=\sum_{t\in \cT^{\circ}_{ij}} \langle\thetab,\x^{i}_{\ast,t}\rangle-\langle\thetab,\x^{i}_{t}\rangle\nn
\end{align}
be
the regret of agent $i$ during the rounds in $\cT_{ij}^\circ$ in bandint instance $\thetab$. 
Note that $\cT_{ij}^\circ$ may contain rounds larger than $T$. Nevertheless,
\begin{align*}
BR_T
&\geq \sum_{i=1}^N \sum_{j=1}^{d/2} \E[R_{ij} \sind(\cT^\circ_{ij} \subset \{1,\ldots,T\})] \\
&\geq \sum_{(i,j) \in \mathcal S}\E[R_{ij} \sind(\cT^\circ_{ij}\subset \{1,\ldots,T\})] \\
&= \sum_{(i,j) \in \mathcal S} \E[R_{ij}] - \E[R_{ij} \sind(\cT^\circ_{ij} \not\subset \{1,\ldots,T\})] \,.
\end{align*}
Suppose that $(i,j) \in \mathcal S$.
Now, $\E[R_{ij}]$ is exactly the Bayesian regret of some policy interacting with the Bayesian two-armed bandit defined
in Lemma~\ref{lemm:singleagentlowerbound} for $T_{\circ}$ rounds. Furthermore, the mutual information between the
optimal action in this bandit and the messages passed to the agent is at most $M_{ij} \leq 1/16$. 
Hence, by Lemma~\ref{lemm:singleagentlowerbound} and Remark~\ref{rem:sa},
\begin{align*}
\E[R_{ij}] \geq \Delta T_{\circ} \left(\frac{1}{2} - \sqrt{\frac{1}{2}\left(\frac{1}{16} + 4 T_{\circ} \Delta^2\right)}\right)\,.
\end{align*}
On the other hand,
\begin{align*}
\E[R_{ij} \sind(\cT_{ij}^\circ \not\subset \{1,\ldots,T\})]
&\leq 2\Delta T_{\circ} \mathbb P(\cT^\circ_{ij} \not\subset \{1,\ldots,T\})= 2\Delta T_{\circ} \mathbb P\left(\sum_{t=1}^T B_{ijt} < T_{\circ}\right).
\end{align*}
By Chernoff's bound, $T\geq 4d\log(8)$ and using $\E[B_{ijt}] = 2/d$, we have
\begin{align*}
2\mathbb P\left(\sum_{t=1}^T B_{ijt} < T_{\circ}\right)
&= 2\mathbb P\left(\sum_{t=1}^T B_{ijt} < T/(4d)\right) \leq 2\exp\left(-T / (4d)\right) \leq \frac{1}{4}\,.
\end{align*}
Therefore, with $\Delta = \frac{1}{8}\sqrt{\frac{d}{T}}$, we have
\begin{align*}
BR_T &\geq \frac{dNT_{\circ}\Delta}{4} \left(\frac{1}{4} - \sqrt{\frac{1}{2}\left(\frac{1}{16} + 4 T_{\circ} \Delta^2\right)} \right)\geq \frac{N\sqrt{dT}}{2500} = \Omega\left(N \sqrt{d T}\right)\,,
\end{align*}
which concludes the proof of Theorem \ref{thm:lowerbound}.


\section{AN OPTIMAL DISTRIBUTED CONTEXTUAL LINEAR BANDIT ALGORITHM}\label{sec:FPE}

Following our lower bound on the communication cost in the previous section, we now present an algorithm called, \emph{Distributed Batch Elimination Linear Upper Confidence Bound} (\DisBE), whose communication cost matches the lower bound up to logarithmic factors while achieving an optimal regret rate. \DisBE employs a central server with which the
agents communicate with zero latency. Specifically, the agents can
send \emph{local} updates to the central server, which then aggregates and broadcasts the updated
\emph{global} values of interest (will be specified later). We also discuss a high-level description of \emph{Decentralized Batch Elimination Linear Upper Confidence Bound} (\DecBE), which is a modified version of \DisBE in the absence of a central server, where each agent can only communicate with its \emph{immediate neighbors}. We give details of this modification in Appendix \ref{sec:DPEapp}. 


\subsection{Overview}\label{sec:overview}
Before describing how \DisBE operates for a every agent $i\in[N]$, we note that all agents run \DisBE concurrently. In \DisBE, the time
steps are grouped into $M$ pre-defined batches by a
grid $\Tc = \{\Tc_0,\Tc_1,\ldots,\Tc_M\}$, where $0=\Tc_0\leq \Tc_1\leq \ldots\leq \Tc_M$, $T\leq \Tc_M$ and $T_m = \Tc_{m}-\Tc_{m-1}$ is the length of batch $m$. Our choice of grid implies that for any $m\geq 3$, we have $T_m = \left(a^{2^{m-1}-1}d^{\frac{1}{2}}/N^{\frac{1}{2}}\right)^{\frac{1}{2^{m-2}}}$. Parameter $a$ is chosen such that $T_M=T$ and $\Tc_M = \sum_{m\in[M]}T_m\geq T_M=T$, and therefore our choice of grid $\Tc$ is valid. At each round $t\in[\Tc_{m-1}+1:\Tc_m]$ during batch $m\in[M]$, agent $i$ first constructs confidence intervals for each action's reward, and the actions whose confidence
intervals completely fall below those of other actions are eliminated. We denote the set of feature vectors associated with the surviving actions by
$\Xc_t^{i(m)}=\cap_{k=0}^{m-1}\Ec\left(\Xc_t^i;(\Lambda_k^i,\thetab_k^i,\beta)\right)$, where
\begin{align}
    \Ec\left(\Xc_t^i;(\Lambda_k^i,\thetab_k^i,\beta)\right)&:=\left\{\x\in \Xc_t^i:\left\langle\thetab_k^i,\x\right\rangle+\beta\norm{\x}_{\left(\Lambda_k^i\right)^{-1}}
   \geq \left\langle\thetab_k^i,\y\right\rangle-\beta\norm{\y}_{\left(\Lambda_k^i\right)^{-1}},~\forall \y\in\Xc_t^i\right\}.\nn
\end{align}

Here, $\{\Lambda_k^i\}_{k=0}^{m-1}$ and $\{\thetab_k^i\}_{k=0}^{m-1}$ are agent $i$'s statistics used in computation of $\Xc_t^{(i)m}$ for $t\in[\Tc_{m-1}+1:\Tc_m]$. They are initialized to $\la I$ and $\mathbf{0}$ and will be updated at the end of each batch (will be specified how shortly). Let $\pi^i_0$ be an arbitrary initial policy used in the first batch. Throughout batch $m\in[M]$, agent $i$ uses the same policy $\pi_{m-1}^i$ to select actions from the surviving actions set. At the end of batch $m\in[M]$, agent $i\in[N]$ sends $\ub^{i}_{m}= \sum_{t=\Tc_{m-1}+1}^{\Tc_{m-1}+T_m/2} \x^{i}_{t}y^{i}_{t}$ to the server who broadcasts $\sum_{i=1}^N\ub_m^i$ to all the agents. At the end of batch $m$, agent $i$ updates policy $\pi_m^i$ (used in the next batch) and components that are key in the construction of the surviving actions set in the next batch. In particular, it updates
\begin{align}
       \Lambda_{m}^i &= \lambda I+\frac{NT_m}{2}\mathbb{E}_{\Xc\sim\Dc^i_{m}}\mathbb{E}_{\x\sim\pi^i_{m-1}(\Xc)}[\x\x^\top],\label{eq:Grammatrix}\\
       \thetab_{m}^i &= \left(\Lambda_{m}^i\right)^{-1} \sum_{j=1}^N \ub^{j}_{m}\label{eq:lse},
   \end{align}
where $\la>0$ is a regularization constant and when conditioned on the first $(m-1)$ batches, $\Dc_m^i$ is the distribution based on which the sets of surviving feature vectors $\Xc_t^{i(m)}$ for all $t\in[\Tc_{m-1}+1:\Tc_m]$ are generated.

\begin{algorithm}[ht]
\caption{\DisBE for agent $i$}
   \label{alg:FPE}
\DontPrintSemicolon
  \KwInput{$N$, $d$, $\delta$, $T$, $M, \lambda$}
   {\bf Initialization:} $a = \sqrt{T}\left(\frac{NT}{d}\right)^{\frac{1}{2(2^{M-1}-1)}}$,  $T_1=T_2=a\sqrt{\frac{d}{N}}$, $T_m=\lfloor a\sqrt{T_{m-1}}\rfloor$, $\thetab_{0}^i=\mathbf{0}$, $\Lambda_0^i=\lambda I$, $\Tc_0=0$, $\Tc_m = \Tc_{m-1}+T_m$, $\la=5\log\left(\frac{4dT}{\delta}\right)$, 
   $\beta = 6\sqrt{\log\left(\frac{2KNT}{\delta}\right)}+\sqrt{\la}$,
arbitrary policy $\pi_0^i$\;
   \For{$m=1,\ldots,M$}{
   \For{$t=\Tc_{m-1}+1,\ldots,\min\{\Tc_m,T\}$}
   {
   $\Xc_t^{i(m)}=\cap_{k=0}^{m-1}\Ec\left(\Xc_t^i;(\Lambda_k^i,\thetab_k^i,\beta)\right)$\;
Play arm $a_{i,t}$ associated with feature vector $\x^{i}_{t}\sim \pi^i_{m-1}\left(\Xc_t^{i(m)}\right)$ and observe $y^{i}_{t}$.\;\label{line:decisionrule}
   }
   Send $\ub^{i}_{m}= \sum_{t=\Tc_{m-1}+1}^{\Tc_{m-1}+T_m/2} \x^{i}_{t}y^{i}_{t}$ to the server.\label{line:sendtoserver}\;
   Receive $ \sum_{j=1}^N \ub^{j}_{m}$ from the server. \label{line:receive}\;
   Compute/construct $\Lambda_{m}^i$ and $\thetab_{m}^i$ as in \eqref{eq:Grammatrix} and \eqref{eq:lse}, respectively, $\Sc_m^i$ as in \eqref{eq:Smi}, and $\pi_m^i=\Exp\left(\frac{2\la}{NT_m},\Sc_m^i\right)$, where $\Exp$ is presented in Appendix \ref{sec:omittedalgorithms}.
}
\end{algorithm}

Statistics $\Lambda_m^i$ and $\thetab_m^i$ are used in defining \emph{new} confidence intervals in Lemma \ref{lemm:confidencesets}. We highlight that a direct use of existing standard confidence intervals in the literature such as the one in \cite{ruan2021linear} would fail to guarantee optimal communication cost $\Otilde(dN)$ and require more communication by a factor of $d$ \footnote{$d^2+d$ values per agent, i.e., $\ub^{i}_{m}$ and $\sum_{t=\Tc_{m-1}+1}^{\Tc_{m-1}+T_m/2}\x^{i}_{t}{\x^{i}_{t}}^\top$.}. Using matrix concentration inequalities, we address this issue by replacing matrix $\la I+\sum_{t=\Tc_{m-1}+1}^{\Tc_{m-1}+T_m/2}\sum_{i=1}^N \x^{i}_{t}{\x^{i}_{t}}^\top$, which would have been used if Algorithm 5 in \cite{ruan2021linear} had been directly extended to a multi-agent one, with $\lambda I + (NT_m/2)\mathbb{E}_{\Xc\sim\Dc^i_{m}}\mathbb{E}_{\x\sim\pi^i_{m-1}(\Xc)}[\x\x^\top]$. This allows agent $i$ to communicate only $d$ values ($\ub^{i}_{m}$) while achieving $\Otilde(\sqrt{dNT})$ regret as will be shown in Theorem \ref{thm:regretandCCfpe}.

As the final step of batch $m$, agent $i$ implements Algorithm \ref{alg:exppolicy} with inputs $\frac{2\la}{NT_m},\Sc_m^i$, where 
\begin{align}
    \Sc_m^i=\left\{\Xc_t^{i(m+1)}\right\}_{t=\Tc_{m-1}+T_m/2+1}^{\Tc_m}.\label{eq:Smi}
\end{align}
\Exp, which is presented in Algorithm \ref{alg:exppolicy} in Appendix \ref{sec:omittedalgorithms} and is inspired by Algorithm 3 in \cite{ruan2021linear}, computes policy $\pi_m^i$ that will be used to select actions from the sets of surviving actions in the next batch. This choice of policy coupled with the definition of $\Lambda_m^i$ in \eqref{eq:Grammatrix} guarantees that at all rounds $t\in[\Tc_1+1:T]$, the length of the longest confidence interval in the surviving sets, which is an upper bound on the instantaneous regret of agent $i$ at round $t$, can be bounded by $\Oc(\sqrt{d/NT})$. This allows us to achieve
the optimal $\Oc(\sqrt{dNT})$-type regret, while other exploration policies, such as the G-optimal design, may result in a $\Oc(d\sqrt{NT})$ regret.


\subsection{Theoretical Results for \DisBE}\label{sec:main result}

We present our theoretical results for \DisBE, which show that it is nearly minimax optimal in terms of \emph{both regret and communication cost}. The proof is given in Appendix \ref{sec:proofofregretandCC}.

\begin{theorem}\label{thm:regretandCCfpe}
Fix $M = 1+\log\left(\log\left(NT/d\right)/2+1\right)$ in Algorithm \ref{alg:FPE}. Suppose Assumption \ref{assum:boundedness} holds. If $T\geq \Omega\left(d^{22}\log^2(NT/\delta)\log^2 d\log^2(d\la^{-1})\right)$, then with probability at least $1-2\delta$, it holds that
$R_T\leq\Oc\left(\sqrt{dNT\log d \log^2\left(KNT/\delta\la\right)}\log\log\left(NT/d\right)\right)$, and $\text{Communication Cost}\leq \Oc\left( dN\log\log\left(NT/d\right)\right)$, where the communication cost is measured by the number of real numbers communicated by the agents.
\end{theorem}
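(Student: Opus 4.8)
The plan is to treat the communication cost and the regret separately, since the former is immediate from the protocol while the latter is where essentially all of the work lies.

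For the communication cost I would simply count messages. In each batch $m\in[M]$ every agent transmits the single vector $\ub_m^i\in\mathbb{R}^d$ to the server (line~\ref{line:sendtoserver}) and receives back the aggregate $\sum_{j}\ub_m^j\in\mathbb{R}^d$ (line~\ref{line:receive}); this is exactly $2d$ real numbers per agent per batch, and nothing else is ever communicated. Summing over the $N$ agents and $M$ batches gives $O(dNM)$, and substituting $M=1+\log(\log(NT/d)/2+1)=O(\log\log(NT/d))$ yields the claimed $O(dN\log\log(NT/d))$. The whole point of the statistics in \eqref{eq:Grammatrix}--\eqref{eq:lse} is that the Gram matrix $\Lambda_m^i$ is the \emph{population} quantity $\la I+(NT_m/2)\,\mathbb{E}[\x\x^\top]$, computable from the known $\Dc$ and from $\pi_{m-1}^i$, so that no agent ever has to transmit a $d\times d$ empirical second-moment matrix; this is what keeps the per-message size at $d$ rather than $d^2$.

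For the regret I would first condition on the good event of Lemma~\ref{lemm:confidencesets}, which (after a union bound over the $M$ batches, the $N$ agents and the $K$ arms, absorbed into $\beta$) guarantees with probability at least $1-2\delta$ that $|\langle\thetab_k^i-\thetab_\ast,\x\rangle|\le\beta\|\x\|_{(\Lambda_k^i)^{-1}}$ for every relevant $\x$, batch $k$ and agent $i$. On this event no optimal arm is ever eliminated, since its UCB dominates every competitor's LCB, so $\x_{\ast,t}^i\in\Xc_t^{i(m)}$ always; the elimination rule defining $\Ec(\cdot)$ then forces the instantaneous regret in a round $t$ of batch $m+1$ to satisfy $\langle\thetab_\ast,\x_{\ast,t}^i-\x_t^i\rangle\le 2\beta(\|\x_t^i\|_{(\Lambda_m^i)^{-1}}+\|\x_{\ast,t}^i\|_{(\Lambda_m^i)^{-1}})$, i.e.\ the per-round regret is controlled by the confidence width carried over from the previous batch. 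The first batch and the arbitrary initial policy $\pi_0^i$ I would bound crudely by $2NT_1=O(\sqrt{dNT})$ using $|\langle\thetab_\ast,\x\rangle|\le 1$.

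The crux is the step where I bound the widths $\|\x\|_{(\Lambda_m^i)^{-1}}$ for the actions actually encountered in batch $m+1$. Here I would use that $\pi_{m-1}^i=\Exp(\cdot,\Sc_{m-1}^i)$ is an (approximately) optimal design for the surviving sets of batch $m$, that $\Xc_t^{i(m+1)}\subseteq\Xc_t^{i(m)}$ so the same design also covers the shrunken sets of batch $m+1$, and that $\Lambda_m^i$ accumulates $NT_m/2$ effective samples under this design; together these give $\|\x\|_{(\Lambda_m^i)^{-1}}=\Otilde(\sqrt{d/(NT_m)})$ for every surviving $\x$. Multiplying by the $NT_{m+1}$ rounds of batch $m+1$ and using the schedule $T_{m+1}=\lfloor a\sqrt{T_m}\rfloor$ with $a=\sqrt{T}(NT/d)^{1/(2(2^{M-1}-1))}$ collapses the per-batch regret to $\Otilde(\beta\sqrt{d}\cdot a\sqrt{N})=\Otilde(\beta\sqrt{dNT})$ uniformly over $m$, since $T_{m+1}/\sqrt{T_m}=a$; summing over the $M=O(\log\log(NT/d))$ batches and plugging in $\beta=O(\sqrt{\log(KNT/\delta)}+\sqrt{\la})$ produces the stated $\Otilde(\sqrt{dNT\log d\,\log^2(KNT)}\,\log\log(NT/d))$.

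The main obstacle, and where the condition $T\ge\Omega(d^{22}\cdots)$ is spent, is making the guarantee of Lemma~\ref{lemm:confidencesets} hold simultaneously across all batches while using the \emph{population} matrix $\Lambda_m^i$ in place of the realized empirical Gram matrix of the data feeding $\thetab_m^i$. I would control the resulting bias by a matrix concentration (Chernoff/Bernstein) argument showing the realized second-moment matrix lies within a constant multiplicative factor of $\Lambda_m^i$; this requires every batch --- in particular the smallest ones of length $T_1=T_2=a\sqrt{d/N}$ --- to carry enough samples for the concentration to bite, which is exactly what the large-$T$ assumption enforces. A secondary technical point is the adaptivity of the surviving sets $\Xc_t^{i(m)}$, which depend on past rewards through $\thetab_k^i$ and $\Lambda_k^i$; I would handle this by the standard device of freezing estimation on the first half of each batch and constructing the next design from the fresh second half, as already reflected in the split defining $\ub_m^i$ in line~\ref{line:sendtoserver} and $\Sc_m^i$ in \eqref{eq:Smi}.
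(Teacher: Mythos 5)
Your skeleton is the same as the paper's: count $O(dNM)$ communicated reals and substitute $M=O(\log\log(NT/d))$; condition on Lemma~\ref{lemm:confidencesets} so that $\x^i_{\ast,t}$ survives elimination and the per-round regret is bounded by confidence widths carried over from the previous batch; bound the widths through the exploration policy and the population matrix $\Lambda_m^i$; and collapse the batch sum using $T_m/\sqrt{T_{m-1}}=a$, with a crude bound for the initial batches. The genuine gap is in the one step you treat as given: ``$\pi_{m-1}^i$ is an (approximately) optimal design \ldots together these give $\|\x\|_{(\Lambda_m^i)^{-1}}=\Otilde(\sqrt{d/(NT_m)})$.'' This is not a generic optimal-design fact. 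For a \emph{distribution} over decision sets, a G-optimal-design argument yields only a per-set $\sqrt{d}$ width and would end in $\Oc(d\sqrt{NT})$ regret (the paper makes this point in Section~\ref{sec:overview}). What is actually needed is the $\la$-deviation guarantee of the mixed-softmax/core-identification policy, Lemma~\ref{lemm:mainlemmafromGdistpaper} (Theorem 5 of \cite{ruan2021linear}): $\Vlambda_{\Dc}^{\la}(\pi)\leq\Oc\left(\sqrt{d\log d\log(\la^{-1})}\right)$ holds only with probability $1-\exp\left(\Oc(d^3\log d\log(d\la^{-1}))-Ld^{-2c}2^{-16}\right)$, where $L$ is the number of i.i.d.\ context sets fed to $\Exp$. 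Forcing this failure probability to be small for the \emph{smallest} batches (where $L=T_1/2$ and $T_1=a\sqrt{d/N}$) is exactly where the hypothesis $T\geq\Omega(d^{22}\cdots)$ is spent. Note also that the resulting bound controls $\mathbb{E}_{\Xc\sim\Dc_m^i}\left[\max_{\x\in\Xc}\|\x\|_{(\Lambda_m^i)^{-1}}\right]$, i.e.\ an expectation over contexts, not ``every surviving $\x$'' uniformly; this suffices only because $r_t^i$ is itself defined as an expectation.

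By contrast, you spend the large-$T$ hypothesis on the matrix concentration inside Lemma~\ref{lemm:confidencesets}, and that step is much cheaper than you think. The paper applies Lemma~\ref{lemm:upperboundonrandommatrixwithcuttoff} with cutoff $\gamma = 3\log(4dT/\delta)/(NT_m)$ and $\la = 5\log(4dT/\delta)$, so with $n=NT_m/2$ samples the failure probability $2d\exp(-\gamma n/3)$ is about $\sqrt{d\delta/T}$ \emph{independently} of how $T_m$ compares to $d$; no polynomial-in-$d$ lower bound on $T$ is needed there. So if your plan is carried out literally --- concentration justifying the confidence sets, and ``approximate optimality'' of $\Exp$ taken for granted --- then the width bound that drives the entire regret calculation is unjustified, and the $d^{22}$ condition is attached to the wrong step. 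A very minor further point: the crude $2N\Tc_2=4a\sqrt{dN}$ bound must cover the first \emph{two} batches, not just the first, since batch $2$'s widths involve $\Lambda_1^i$ built from the arbitrary initial policy $\pi_0^i$, for which no deviation guarantee is available.
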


We remark that simple tricks may significantly reduce the exponent constant in constraint $T\geq d^{\Oc(1)}$.  For example, first running a simpler version of \DisBE, in which the exploration policy is the G-optimal design $\piG(\Xc_t^{i(m)})$, for $\sqrt{T/dN}$ rounds and then switching to \DisBE would reduce the exponent to 10.

\begin{remark}\label{rem:bits}
 For the sake of Algorithm \ref{alg:FPE}'s presentation, we find it instructive to consider the communication cost as the number of real numbers communicated in the network. However, it is more realistic if we translate it into the total number of communicated bits. It would also allow us to make a fair comparison with the lower bound in Theorem \ref{thm:lowerbound} as it is stated in terms of number of communicated bits. Therefore, if we slightly modify Algorithm \ref{alg:FPE} such that instead of communicating vectors $\ub_m^i$ in Line \ref{line:sendtoserver}, agent $i$ first rounds each entry of $\ub_m^i$ with precision $\epsilon_0$ and then sends the rounded vector to the server, then $\Oc\left(\log(1/\epsilon_0)\right)$ number of bits is
sufficient to communicate each entry of the rounded vectors $\ub_m^i$. Our analysis in Appendix \ref{sec:finiteprecision} shows that compared to bounds in Theorem \ref{thm:regretandCCfpe}, by selecting $\epsilon_0 = \Oc((1/N\sqrt{dT}))$, the communication cost of this slightly modified version of \DisBE, which is measured in bits, is $\Oc\left( dN\log\log\left(NT/d\right)\log(dNT)\right)$ and its regret is same as \DisBE's.
\end{remark}


\begin{remark}\label{remark:relaxD}
As mentioned in Section \ref{sec:overview}, a direct use of confidence intervals in \citet{ruan2021linear}  would fail to guarantee optimal communication cost $\tilde{\mathcal{O}}(dN)$ and require more communication by a factor of $d$. Therefore, we use new confidence intervals (see Lemma \ref{lemm:confidencesets}) so that \DisBE would enjoy an optimal communication rate. The assumption on the knowledge of $\Dc$ is required in the computation of $\Lambda_m^i$ in \eqref{eq:Grammatrix} used in these new confidence intervals. However, in practice, distribution $\Dc$ is not fully known and can only be estimated; therefore, $\Lambda_m^i$ cannot be computed without any error. We relax this assumption and consider more realistic settings where each agent $i$ can estimate matrix $\Lambda_m^i$ in batch $m$ up to an $\epsilon_m$ error, i.e., $(1-\epsilon_m)\Lambda_m^i\preceq \tilde \Lambda_m^i\preceq (1+\epsilon_m)\Lambda_m^i$, where $\tilde\Lambda_m^i$ is an estimation of $\Lambda_m^i$ and $\epsilon_m\in(0,1)$\footnote{This is weaker than the component-wise assumption $(1-\epsilon_m)\Lambda_m^i\leq \tilde \Lambda_m^i\leq (1+\epsilon_m)\Lambda_m^i$.}. In Appendix \ref{sec:relaxingD}, we show that for sufficiently small values of $\epsilon_m\leq 1/\sqrt{NT_m}$, a multiplicative factor $(1-\max_{m\in[M]}\epsilon_m)^{-1}$ appears in the regret bound while the communication cost remains unchanged.
\end{remark}


\subsection{Proof Sketch of Theorem \ref{thm:regretandCCfpe}} 
We first introduce the following lemma that constructs confidence intervals for the expected rewards.
\begin{lemma}[Confidence intervals for \DisBE]\label{lemm:confidencesets}
Suppose Assumption \ref{assum:boundedness} holds. For $\delta\in(0,1)$, let 
    $\beta=6\sqrt{\log\left(\frac{2KNT}{\delta}\right)}+\sqrt{\la}$.
Then for all $\x\in\Xc_t^i,i\in[N],t\in[T], m\in[M]$, with probability at least $1-\delta$, it holds that $\abs{\left\langle\x,\thetab_m^i-\thetab_\ast\right\rangle}\leq \beta\norm{\x}_{\left(\Lambda_m^i\right)^{-1}}$.
\end{lemma}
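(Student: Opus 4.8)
The plan is to run the usual least-squares confidence-interval argument, but with the \emph{population} design matrix $\Lambda_m^i$ in place of the empirical Gram matrix, and to absorb the resulting mismatch through a matrix concentration bound. Write the aggregated empirical second-moment matrix over the first half of batch $m$ as $\widehat\Lambda_m := \sum_{j=1}^N\sum_{t=\Tc_{m-1}+1}^{\Tc_{m-1}+T_m/2}\x_t^j(\x_t^j)^\top$ and the per-round population covariance as $\Sigmab_m^i := \E_{\Xc\sim\Dc_m^i}\E_{\x\sim\pi_{m-1}^i(\Xc)}[\x\x^\top]$, so that $\Lambda_m^i = \la I + \tfrac{NT_m}{2}\Sigmab_m^i$. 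Substituting $y_t^j = \langle\thetab_\ast,\x_t^j\rangle + \eta_t^j$ into $\ub_m^j$ and using $\thetab_m^i = (\Lambda_m^i)^{-1}\sum_j\ub_m^j$, I get the exact decomposition
\begin{align*}
\langle\x,\thetab_m^i-\thetab_\ast\rangle
= \underbrace{\x^\top(\Lambda_m^i)^{-1}(\widehat\Lambda_m-\Lambda_m^i)\thetab_\ast}_{\text{(I): design mismatch}}
+ \underbrace{\x^\top(\Lambda_m^i)^{-1}\textstyle\sum_{j,t}\x_t^j\eta_t^j}_{\text{(II): noise}}\,.
\end{align*}
It then suffices to bound $|\text{(I)}|$ and $|\text{(II)}|$ by constant multiples of $\sqrt\la\,\norm{\x}_{(\Lambda_m^i)^{-1}}$ and $\sqrt{\log(KNT/\delta)}\,\norm{\x}_{(\Lambda_m^i)^{-1}}$, respectively, and then take a union bound.

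For the noise term (II), I would condition on the filtration $\Fc$ generated by all contexts and policy draws in the first half of batch $m$ (so that $\{\x_t^j\}$, $\widehat\Lambda_m$, and the history-measurable matrix $\Lambda_m^i$ are all fixed), leaving only $\{\eta_t^j\}$ random. Since the $\eta_t^j$ are conditionally zero-mean and, by Assumption~\ref{assum:boundedness}, bounded (hence sub-Gaussian), (II) is a sub-Gaussian scalar with variance proxy $\x^\top(\Lambda_m^i)^{-1}\widehat\Lambda_m(\Lambda_m^i)^{-1}\x$. On the high-probability event $\widehat\Lambda_m \preceq \tfrac32\Lambda_m^i$ this proxy is at most $\tfrac32\norm{\x}_{(\Lambda_m^i)^{-1}}^2$, so a sub-Gaussian tail bound gives $|\text{(II)}|\lesssim\sqrt{\log(1/\delta')}\,\norm{\x}_{(\Lambda_m^i)^{-1}}$ for a fixed $\x$. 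For the mismatch term (I), Cauchy--Schwarz in the $(\Lambda_m^i)^{-1}$ inner product yields $|\text{(I)}|\leq\norm{\x}_{(\Lambda_m^i)^{-1}}\,\norm{(\widehat\Lambda_m-\Lambda_m^i)\thetab_\ast}_{(\Lambda_m^i)^{-1}}$; splitting $\widehat\Lambda_m-\Lambda_m^i = (\widehat\Lambda_m - \tfrac{NT_m}{2}\Sigmab_m^i) - \la I$, the regularization part contributes at most $\sqrt\la$ (using $\Lambda_m^i\succeq\la I$ and $\norm{\thetab_\ast}_2\leq1$), which is exactly the $\sqrt\la$ appearing in $\beta$, and the remaining centered part is controlled by the same spectral bound.

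The crux is therefore the matrix concentration inequality that simultaneously delivers the spectral domination $\tfrac12\Lambda_m^i\preceq\la I+\widehat\Lambda_m\preceq\tfrac32\Lambda_m^i$ and the bound on $\norm{(\widehat\Lambda_m-\tfrac{NT_m}{2}\Sigmab_m^i)\thetab_\ast}_{(\Lambda_m^i)^{-1}}$. I would apply a matrix Chernoff/Bernstein bound to the sum of independent rank-one terms $\x_t^j(\x_t^j)^\top\preceq I$ (independent across agents $j$ and rounds $t$ once we condition on the history $\Hc_{m-1}$ through batch $m-1$, which fixes the elimination rule and the policies). The subtle point --- and the main obstacle --- is the alignment between the \emph{aggregated} empirical matrix $\widehat\Lambda_m$, whose conditional mean is $\tfrac{T_m}{2}\sum_j\Sigmab_m^j$, and the \emph{single-agent} population matrix $\Lambda_m^i=\la I+\tfrac{NT_m}{2}\Sigmab_m^i$ used in the confidence set; this is precisely the step that lets each agent transmit only the $d$-vector $\ub_m^i$ rather than its full $d\times d$ Gram matrix. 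Making it work requires exploiting that all agents run the identical algorithm on i.i.d.\ contexts, so their induced per-round covariances $\Sigmab_m^j$ are (nearly) common, and it forces the horizon condition $T\geq d^{\Omega(1)}$ of Theorem~\ref{thm:regretandCCfpe}, because the early batches are as short as $T_m\sim a\sqrt{d/N}$, where multiplicative spectral concentration of a $d$-dimensional covariance is most demanding. Finally, a union bound over the at most $K$ surviving arms in each set and over all $(i,t,m)\in[N]\times[T]\times[M]$ converts the per-$\x$ guarantees into the claimed uniform bound, producing the $6\sqrt{\log(2KNT/\delta)}$ term in $\beta$.
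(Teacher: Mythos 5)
Your exact decomposition into a design-mismatch term (I) and a noise term (II) is valid, and your treatment of (II) — condition on the contexts, use boundedness of the noise, and control the variance proxy through the spectral event — matches what the paper does for the noise contribution. The genuine gap is in (I). After Cauchy--Schwarz you must bound $\norm{\bigl(\Vb_m - \tfrac{NT_m}{2}\Sigmab_m^i\bigr)\thetab_\ast}_{(\Lambda_m^i)^{-1}}$ (writing $\Vb_m$ for your $\widehat\Lambda_m$), and no spectral concentration bound can control this at the scale you need. Concretely: (a) even granting the two-sided sandwich $\tfrac12\Lambda_m^i \preceq \la I + \Vb_m \preceq \tfrac32\Lambda_m^i$, the generalized Cauchy--Schwarz inequality it yields for the bilinear form is $\tfrac12\norm{\x}_{(\Lambda_m^i)^{-1}}\norm{\thetab_\ast}_{\Lambda_m^i}$, and $\norm{\thetab_\ast}_{\Lambda_m^i}$ can be as large as $\sqrt{\la + NT_m/2}$, so $\beta$ would scale like $\sqrt{NT_m}$, which is vacuous; (b) even with the sharpest possible vector concentration, the centered vector $v=\bigl(\Vb_m-\tfrac{NT_m}{2}\Sigmab_m^i\bigr)\thetab_\ast$ has
\begin{align}
\E\norm{v}^2_{(\Lambda_m^i)^{-1}} = \tr{\left(\Lambda_m^i\right)^{-1}\E[vv^\top]} \leq \tr{\left(\Lambda_m^i\right)^{-1}\tfrac{NT_m}{2}\Sigmab_m^i} \leq d,\nn
\end{align}
so $\norm{v}_{(\Lambda_m^i)^{-1}}$ genuinely lives at scale $\sqrt{d}$, and the Cauchy--Schwarz route can never do better than $\beta = \Oc(\sqrt{d})$. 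That recovers only the standard confidence width and hence $\Oc(d\sqrt{NT})$ regret — precisely the suboptimal rate this lemma is engineered to avoid. The loss is intrinsic to splitting off (I) and norm-bounding it; it is not fixable by choosing a better matrix inequality.

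The paper's proof avoids this by never separating (I) from (II): for each \emph{fixed direction} $\x$ it treats the entire deviation as one scalar sum $\sum_{t,j} z_{t,m}^{j,i}$ with $z_{t,m}^{j,i}=\x^\top(\Lambda_m^i)^{-1}\bigl(\x_t^j y_t^j - \Sigmab_m^i\thetab_\ast\bigr)$, whose increments are zero-mean because the \emph{context randomness and the noise are both inside the martingale}, and applies Azuma's inequality. The sum of squared increments is bounded by $\Oc(1)\cdot\norm{\x}^2_{(\Lambda_m^i)^{-1}}$ using only the \emph{one-sided} domination $\Vb_m \preceq 2\Lambda_m^i$, which follows from the cutoff matrix-Chernoff bound (Lemma \ref{lemm:upperboundonrandommatrixwithcuttoff}) with $\gamma = 3\log(4dT/\delta)/(NT_m)$ and $\la = 5\log(4dT/\delta)$; the lower half of your sandwich is neither needed nor established. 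Working in a fixed direction is exactly what makes the deviation dimension-free, $\Oc\bigl(\sqrt{\log(KNT/\delta)}\bigr)\norm{\x}_{(\Lambda_m^i)^{-1}}$, after which the union bound over arms, agents, rounds and batches gives $\beta$. Two further corrections: the horizon condition $T\geq d^{\Omega(1)}$ in Theorem \ref{thm:regretandCCfpe} is not forced by this lemma (with the stated $\gamma,\la$ the concentration holds for every batch length); it comes from the exploration-policy guarantee (Lemma \ref{lemm:mainlemmafromGdistpaper}) used in the regret analysis. And the cross-agent alignment of $\Sigmab_m^j$ with $\Sigmab_m^i$ is indeed a subtlety, but it enters through the zero-mean claim for $z_{t,m}^{j,i}$, not through any spectral-concentration requirement.
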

We prove this lemma by first employing appropriate matrix concentration inequalities to lower bound $\Lambda_m^i$ by matrix $\frac{1}{2}\sum_{t=\Tc_{m-1}+1}^{\Tc_{m-1}+T_m/2}\sum_{i=1}^N \x^{i}_{t}{\x^{i}_{t}}^\top$. Carefully replacing $\Lambda_m^i$ with its lower bound and using Azuma's inequality, we establish confidence intervals stated in the lemma. We highlight that this lemma is key in ensuring an optimal  communication rate $\Otilde(dN)$, as a direct use of confidence intervals in \cite{ruan2021linear} would fail to guarantee optimal communication cost and require $\Otilde(d^2N)$ communication. See Appendix \ref{sec:proofofconfidencesets} for the complete proof.

Thanks to our choice of $T_1$ and $T_2$, and the fact that expected value of the rewards are bounded in $[-1,1]$, the regret of first two batches is bounded by $\Oc(\sqrt{dNT})$. For each batch $m\geq3$, the confidence intervals imply that for all $t\in[\Tc_{m-1}+1:\Tc_m]$,  $\x_{t,\ast}^i\in\Xc_t^{i(m)}$ with high probability, and allow us to bound the instantaneous regret $r^{i}_{t}=\mathbb{E}[ \langle\thetab_\ast,\x^{i}_{\ast,t}\rangle-\langle\thetab_\ast,\x^{i}_{t}\rangle]$ by $4\beta \mathbb{E}_{\Xc\sim\Dc^i_{m-1}}[\max_{\x\in\Xc}\sqrt{\x^\top\left( \Lambda_{m-1}^{i}\right)^{-1}\x}]$. Fortunately, policy $\pi^i_{m-2}$ and Theorem 5 in \cite{ruan2021linear} guarantee that $\mathbb{E}_{\Xc\sim\Dc^i_{m-1}}[\max_{\x\in\Xc}\sqrt{\x^\top\left( \Lambda_{m-1}^{i}\right)^{-1}\x}]$ is bounded by $\Otilde\left(\sqrt{d/(NT_{m-1})}\right)$. Finally, these combined with our choice of grid $\Tc = \{\Tc_0,\Tc_1,\ldots,\Tc_M\}$ and $M = 1+\log\left(\log\left(NT/d\right)/2+1\right)$ lead us to a regret bound $\Otilde(\sqrt{dNT})$. Moreover, communications happen only at the end of each batch, whose number is $M$, and agents only share $d$-dimensional vectors $\ub^{i}_{m}$. Therefore, communication cost is $dNM = \Oc\left( dN\log\log\left(NT/d\right)\right)$.


\subsection{Decentralized Batch Elimination LUCB without Server}\label{sec:decentralized}
In a scenario where there is no server and the agents are allowed to communicate \emph{only} with their immediate neighbors, they can be represented by nodes of a graph. Applying a carefully designed consensus procedure that guarantees sufficient information mixing among the entire network, in Appendix \ref{sec:DPEapp}, we propose a fully decentralized version of \DisBE, called \DecBE. Communication cost of \DecBE is greater than \DisBE's by an extra multiplication factor $\log(dN)\delta_{\rm max}/\sqrt{1/\abs{\la_2}}$, where $\delta_{\rm max}$ is the maximum degree of the network's graph and $\abs{\la_2}$ is the second largest eigenvalue of the communication matrix in absolute value characterizing the graph's connectivity level. This is because at the last $\log(dN)/\sqrt{1/\abs{\la_2}}$ rounds of each batch $m$, agents communicate each entry of their estimations of vector  $\sum_{j=1}^N\ub_{m}^j$ with their neighbors, whose number is at most $\delta_{\rm max}$, to ensure enough information mixing. Moreover, this results in \DecBE having no control over the regret of the mixing rounds, and therefore an additional term $\log(dN)NM/\sqrt{1/\abs{\la_2}}$, which we call \emph{delay effect}, in the regret bound. The theoretical guarantees of \DecBE are summarized in table \ref{table:comp} and a detailed discussion is given in Appendix \ref{sec:DPEapp}.

\begin{figure*}[t!]
\centering
\begin{subfigure}{2in}
\begin{tikzpicture}
\node at (0,0) {\includegraphics[scale=0.32]{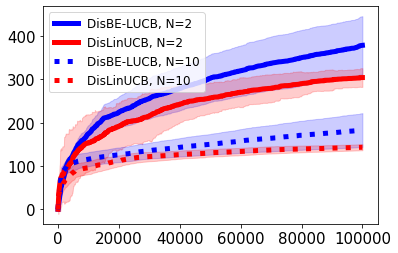}};
\node at (-2.7,0) [rotate=90,scale=1.]{Per-agent regret, $\frac{R_t}{N}$};
\node at (0,-1.9) [scale=0.9]{Iteration, $t$};
\end{tikzpicture}
\caption{$d=4$}
\label{fig:regretcomparison}
\end{subfigure}
\centering
\begin{subfigure}{2in}
\begin{tikzpicture}
\node at (0,0) {\includegraphics[scale=0.32]{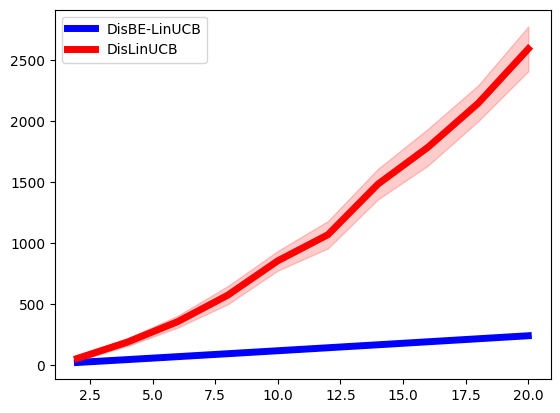}};
\node at (-2.7,0) [rotate=90,scale=0.9]{Communication Cost};
\node at (0,-1.9) [scale=0.9]{Number of agents, $N$};
\end{tikzpicture}
\caption{$d=4$, $T=100000$}
\label{fig:commcostcompareNvarying}
\end{subfigure}
\centering
\begin{subfigure}{2in}
\begin{tikzpicture}
\node at (0,0) {\includegraphics[scale=0.32]{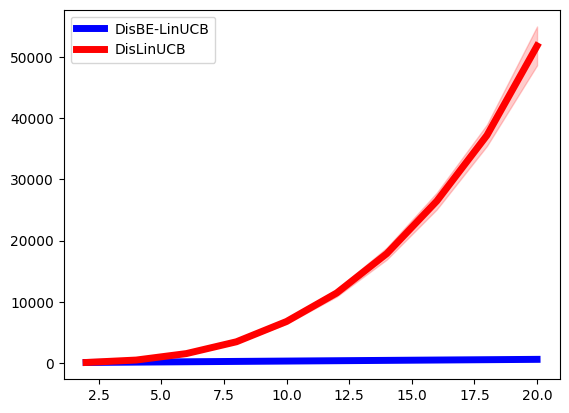}};
\node at (-2.7,0) [rotate=90,scale=0.9]{Communication Cost};
\node at (0,-1.9) [scale=0.9]{Dimension, $d$};
\end{tikzpicture}
\caption{$N=10$, $T=100000$}
\label{fig:commcostcomparedvarying}
\end{subfigure}
\caption{Regret and Communication Cost Comparison. The shaded regions show one standard deviation around the mean. The results are averages over 20 problem realizations.}
\label{fig:comparison}
\end{figure*}

\section{EXPERIMENTS}\label{sec:experiments}
In this section, we present numerical simulations to confirm our theoretical findings. We evaluate the performance of \DisBE on synthetic data and compare it to that of DisLinUCB proposed by \cite{wang2019distributed} that study the most similar setting to ours. The results shown in Figure \ref{fig:comparison} depict averages over 20 realizations, for which we have chosen $K=20$ and $\delta=0.01$. For each realization, the parameter $\thetab_\ast$ is drawn from $\mathcal{N}(0,I_{d})$ and then normalized to unit norm and noise variables are zero-mean Gaussian random variables with variance $0.01$. The decision set distribution $\Dc$ is chosen to be uniform over $\left\{\tilde\Xc_1,\tilde\Xc_2,\ldots,\tilde\Xc_{100}\right\}$, where each $\tilde\Xc_i$ is a set of $K$ vectors drawn from $\mathcal{N}(0,I_{d})$ and then normalized to unit norm. While implementing \DisBE, in order to compute $\mathbb{E}_{\Xc\sim\Dc^i_{m}}\mathbb{E}_{\x\sim\pi^i_{m-1}(\Xc)}[\x\x^\top]$ for agent $i$ at batch $m$, we followed these steps: 1) for each $j\in[100]$, we built $\tilde\Xc_j^{i(m)}=\cap_{k=0}^{m-1}\Ec\left(\tilde\Xc_j;(\Lambda_k^i,\thetab_k^i,\beta)\right)$; 2) we took average over all 100 matrices $\frac{1}{100}\sum_{j\in[100]}\mathbb{E}_{\x\sim\pi^i_{m-1}(\tilde\Xc_j^{i(m)})}[\x\x^\top]$ as $\Dc$ is a uniform distribution over $\left\{\tilde\Xc_1,\tilde\Xc_2,\ldots,\tilde\Xc_{100}\right\}$. In Figure \ref{fig:regretcomparison}, fixing $d=4$, we compare the per-agent regret $R_t/N$ of \DisBE and DisLinUCB for $t\in[100000]$ and for different values of $N=2$ and $N=10$, where $R_t= \sum_{s=1}^t\sum_{i=1}^N\langle\thetab_\ast,\x^i_{\ast,s}\rangle-\langle\thetab_\ast,\x^i_{s}\rangle$\footnote{In order to have a fair comparison with DisLinUCB, the cumulative regret depicted in Figure \ref{fig:regretcomparison} does not include the expectation as defined in \eqref{eq:cumulativeregret}.}. Figure \ref{fig:commcostcompareNvarying} compares the communication cost of \DisBE and DisLinUCB when both algorithms are implemented for fixed $d =4$ and $T=100000$, and $N$ varying from $2$ to $20$. Finally, Figure \ref{fig:commcostcomparedvarying} compares the communication cost of \DisBE and DisLinUCB when both algorithms are implemented for fixed $N=10$ and $T=100000$, and $d$ varying from $2$ to $20$. From these three comparisons, we conclude that \DisBE achieves a regret comparable with DisLinUCB, at a significantly smaller communication rate. The curves in Figures \ref{fig:commcostcompareNvarying} and \ref{fig:commcostcomparedvarying} verify the linear dependency of \DisBE's communication cost on $N$ and $d$ while communication cost of DisLinUCB grows super-linearly with $N$ and $d$ (see Table \ref{table:comp} for theoretical comparisons). Moreover, Figure \ref{fig:regretcomparison} emphasizes the value of collaboration in speeding up the learning process. As the number of agents increases, each agent learns the environment faster as an individual.


\section{CONCLUSION}\label{sec:conclusion}
We proved an information-theoretic lower bound on the communication cost of any algorithm achieving an optimal regret rate for the distributed contextual linear bandit problem with stochastic contexts. 
We then proposed \DisBE with regret $\Otilde(\sqrt{dNT})$ and communication cost $\Otilde(dN)$ which (nearly) matches our lower bound and improves upon the previous best-known algorithms whose communication cost scale super linearly either in $d$ or $N$. This showed that \DisBE is nearly minimax optimal in terms of \emph{both regret and communication cost}. Finally, we proposed \DecBE, a fully decentralized variant of \DisBE,  without a central server where the agents can only communicate with their immediate neighbors given by a communication graph. We showed that the structure of the network affects the regret performance via a small additive term that depends on the spectral gap of the underlying graph, while the communication cost still grows linearly with $d$ and $N$. As shown in Table \ref{table:comp}, the best communication cost achieved for settings with \emph{adversarially} varying contexts over time horizon and agents is of order $\Oc(d^3N^{1.5})$. There is no formal theory proving such bounds are optimal for the adversarial context case. While our work provides optimal theoretical guarantees for stochastically varying contexts, it is not clear how to generalize these \emph{optimal} results to settings with adversarially varying contexts. Therefore, an important future direction is to design optimal algorithms and prove communication cost lower bounds for scenarios with adversarial contexts.

\newpage
\bibliographystyle{apalike}
\bibliography{references}

\newpage
\appendix
\onecolumn


\section{PROOF OF LEMMA \ref{lemm:singleagentlowerbound}}\label{sec:proofofmutual}
Let $\mub\sim\Unif(\mub_1,\mub_2)$, where $\mub_1=[\Delta,0]^\top, \mub_2 = [-\Delta,0]^\top$, $\z=\{z_t\}_{t=1}^T$ be the set of arm 1's reward, $H=\{a_t,y_t\}_{t=1}^T$ be the history over the course of $T$ rounds, where $a_t$ is the arm pulled and $y_t$ is the observed reward at round $t$, $a_\ast=\argmax_{a\in\{1,2\}}\mub_a$, and $\hat a\sim\Unif(\{a_1,a_2,\ldots,a_T\})$. We have
\begin{align}
    \mathbb{E}[R_T(\pi,\mub)]&=\mathbb{E}[\sum_{t=1}^T\mathbbm{1}(\hat a\neq a_\ast)\Delta]\nn\\
    &=\Delta T \mathbb{P}(\hat a\neq a_\ast)\tag{$\bigstar$}.
\end{align}
Now, we lower bound $\mathbb{P}(\hat a\neq a_\ast)$ as follows
\begin{align}
    \mathbb{P}(\hat a\neq a_\ast)&=\sum_{a\in\{1,2\}}\mathbb{P}(a_\ast=a)\mathbb{P}(\hat a\neq a\vert a_\ast=a)\nn\\
    &= \sum_{a\in\{1,2\}}\mathbb{P}(a_\ast=a)\left[\mathbb{P}(\hat a\neq a)+\mathbb{P}(\hat a= a)-\mathbb{P}(\hat a= a\vert a_\ast=a)\right]\nn\\
    &\geq \sum_{a\in\{1,2\}}\mathbb{P}(a_\ast=a)\left[\mathbb{P}(\hat a\neq a)-\sqrt{\frac{1}{2}\DKL(\mathbb{P}_{\hat a\vert a_\ast=a},\mathbb{P}_{\hat a})}\right]\tag{Pinsker's inequality}\\
    &=\frac{1}{2}-\sum_{a\in\{1,2\}}\mathbb{P}(a_\ast=a)\sqrt{\frac{1}{2}\DKL(\mathbb{P}_{\hat a\vert a_\ast=a},\mathbb{P}_{\hat a})}\nn\\
    &\geq\frac{1}{2}-\sqrt{\frac{1}{2}\sum_{a\in\{1,2\}}\mathbb{P}(a_\ast=a)\DKL(\mathbb{P}_{\hat a\vert a_\ast=a},\mathbb{P}_{\hat a})}\tag{Jensen's inequality}\nn\\
    &=\frac{1}{2}-\sqrt{\frac{1}{2}I(\hat a;a_\ast)}\nn\\
    &\geq \frac{1}{2}-\sqrt{\frac{1}{2}I(M,H;a_\ast)}\tag{Data processing}\\
    &\geq \frac{1}{2}-\sqrt{\frac{1}{2}\left(I(M;a_\ast)+I(H;a_\ast)\right)}\nn\\
    &\geq \frac{1}{2}-\sqrt{\frac{1}{2}\left(\frac{1}{16}+I(H;a_\ast)\right)}\tag{$\bigstar\bigstar$}.
\end{align}
In our next step towards lower bounding $\mathbb{P}(\hat a\neq a_\ast)$, we upper bound $I(H;a_\ast)$, as follows
\begin{align}
    I(H;a_\ast)&\leq  I(\z;a_\ast)\tag{Data processing}\\
    &=\sum_{a\in\{1,2\}}\frac{1}{2}\DKL(\mathbb{P}(\z\vert a_\ast=a),\mathbb{P}(\z))\nn\\
    &\leq \sum_{b\in\{1,2\}}\sum_{a\in\{1,2\}}\frac{1}{2}\DKL\left(\mathbb{P}(\z\vert a_\ast=a),\mathbb{P}(\z\vert a_\ast=b)\right)\nn\\
    &=\frac{1}{2}\DKL\left(\mathbb{P}(\z\vert a_\ast=1),\mathbb{P}(\z\vert a_\ast=2)\right)+\frac{1}{2}\DKL\left(\mathbb{P}(\z\vert a_\ast=2),\mathbb{P}(\z\vert a_\ast=1)\right)\nn\\
    &=\frac{1}{2}\left[T(2\Delta)^2+T(2\Delta)^2\right]\nn\\
    &=4T\Delta^2\tag{$\bigstar\bigstar\bigstar$}.
\end{align}
Combining $\bigstar$, $\bigstar\bigstar$, and $\bigstar\bigstar\bigstar$, we have
\begin{align}
    \mathbb{E}[R_T(\pi,\mub)]\geq \Delta T\left(\frac{1}{2}-\sqrt{\frac{1}{2}\left(\frac{1}{16}+4T\Delta^2\right)}\right),\nn
\end{align}
which concludes the lemma.

\section{PROOF OF THEOREM \ref{thm:regretandCCfpe}}
\label{sec:proofofregretandCC}
\subsection{Proof of Lemma \ref{lemm:confidencesets}}
\label{sec:proofofconfidencesets}

For each batch $m\in[M]$, let $\bb_m=\sum_{t=\Tc_{m-1}+1}^{\Tc_{m-1}+T_m/2}\sum_{i=1}^N \x^{i}_{t}y^{i}_{t}$ and $\Vb_m = \sum_{t=\Tc_{m-1}+1}^{\Tc_{m-1}+T_m/2}\sum_{i=1}^N \x^{i}_{t}{\x^{i}_{t}}^\top$. We have
\begin{align}
    \Lambda_m^i &= \lambda I+\frac{NT_m}{2}\mathbb{E}_{\Xc\sim\Dc^i_{m}}\mathbb{E}_{\x\sim\pi_{m-1}^i(\Xc)}[\x\x^\top]\nn\\
    &=\la I +\frac{NT_m}{4}\left(2\mathbb{E}_{\Xc\sim\Dc^i_{m}}\mathbb{E}_{\x\sim\pi_{m-1}^i(\Xc)}[\x\x^\top]+6\gamma I\right)-1.5NT_m\gamma I.\label{eq:firstlowerboundonLambda_m^i}
\end{align}

By choosing $\gamma = \frac{3\log(\frac{4dT}{\delta})}{NT_m}$  and $\la = 5\log\left(\frac{4dT}{\delta}\right)$, combining \eqref{eq:firstlowerboundonLambda_m^i} and Lemma \ref{lemm:upperboundonrandommatrixwithcuttoff}, for all $m\in[M]$, with probability at least $1-\delta/2$, we have
\begin{align}
     \Lambda_m^i &\succeq \left(\la-5\log\left(\frac{4dT}{\delta}\right)\right) I +\frac{1}{2}\sum_{t=\Tc_{m-1}+1}^{\Tc_{m-1}+T_m/2}\sum_{i=1}^N \x^{i}_{t}{\x^{i}_{t}}^\top\nn\\
     &=\frac{1}{2}\Vb_m.\label{eq:lowerboundonLambdam}
\end{align}
 Moreover, for a fixed $\x\in\Xc_t^i$ and $(i,t)\in[N]\times[T]$, let $z_{t,m}^{j,i}=\x^\top\left(\Lambda_m^i\right)^{-1}\left(\x_t^jy_t^j-\mathbb{E}_{\Xc\sim\Dc^i_{m}}\mathbb{E}_{\x\sim\pi_{m-1}^i(\Xc)}[\x\x^\top]\thetab_\ast\right)$. Thus, we have
\begin{align}
  \abs{\left\langle\x,\thetab_m^i-\thetab_\ast\right\rangle}&=\abs{\left\langle\x,\left(\Lambda_m^i\right)^{-1}\bb_m-\thetab_\ast\right\rangle}\nn\\
    &=\abs{\left\langle\x,\left(\Lambda_m^i\right)^{-1}\bb_m\right\rangle-\left\langle\x,\left(\Lambda_m^i\right)^{-1}\Lambda_m^i\thetab_\ast\right\rangle}\nn\\
   &\leq\abs{\left\langle\x,\left(\Lambda_m^i\right)^{-1}\bb_m\right\rangle-\left\langle\x,\left(\Lambda_m^i\right)^{-1}\left(\Lambda_m^i-\la I\right)\thetab_\ast\right\rangle}+\abs{\lambda\langle \x,\left(\Lambda_m^i\right)^{-1}\thetab_\ast\rangle}\nn\\
   &\leq \abs{\x^\top\left(\Lambda_m^i\right)^{-1}\left(\bb_m-\frac{NT_m}{2}\mathbb{E}_{\Xc\sim\Dc^i_{m}}\mathbb{E}_{\x\sim\pi_{m-1}^i(\Xc)}[\x\x^\top]\thetab_\ast\right)}+\sqrt{\la}\norm{\x}_{\left(\Lambda_m^i\right)^{-1}}\tag{Cauchy Schwarz inequality and Assumption \ref{assum:boundedness}}\\
   &=\abs{\sum_{t=\Tc_{m-1}+1}^{\Tc_{m-1}+T_m/2}\sum_{j=1}^Nz_{t,m}^{j,i}}+\sqrt{\la}\norm{\x}_{\left(\Lambda_m^i\right)^{-1}}\nn.
\end{align}

Note that
\begin{align}
    \mathbb{E}\left[z_{t,m}^{j,i}\right] = \mathbb{E}\left[\x^\top\left(\Lambda_m^i\right)^{-1}\left(\x_t^j({\x_t^j}^\top\thetab_\ast+\eta_t^j)-\mathbb{E}_{\Xc\sim\Dc^i_{m}}\mathbb{E}_{\x\sim\pi_{m-1}^i(\Xc)}[\x\x^\top]\thetab_\ast\right)\right] = 0 \tag{Noise $\eta_t^j$ is zero-mean and independent of $\x_t^j$},
\end{align}
By
Azuma’s inequality, for a fixed $\x\in\Xc_t^i$ and $(i,t)\in[N]\times[T]$, we have 
\begin{align}
    \mathbb{P}\left(\abs{\sum_{t=\Tc_{m-1}+1}^{\Tc_{m-1}+T_m/2}\sum_{j=1}^Nz_{t,m}^{j,i}}\geq \alpha\norm{\x}_{\left(\Lambda_m^i\right)^{-1}}\right)&\leq 2{\rm exp}\left(\frac{-\alpha^2\norm{\x}_{\left(\Lambda_m^i\right)^{-1}}^2}{2c_m^i}\right),\label{eq:azuma}
\end{align}
where 
\begin{align}
    c_m^i &= \sum_{t=\Tc_{m-1}+1}^{\Tc_{m-1}+T_m/2}\sum_{j=1}^N\abs{\x^\top\left(\Lambda_m^i\right)^{-1}\left(\x_t^jy_t^j-\mathbb{E}_{\Xc\sim\Dc^i_{m}}\mathbb{E}_{\x\sim\pi_{m-1}^i(\Xc)}[\x\x^\top]\thetab_\ast\right)}^2\nn\\
    &\leq 2\sum_{t=\Tc_{m-1}+1}^{\Tc_{m-1}+T_m/2}\sum_{j=1}^N\abs{\x^\top\left(\Lambda_m^i\right)^{-1}\x_t^jy_t^j}^2+NT_m\abs{\x^\top\left(\Lambda_m^i\right)^{-1}\mathbb{E}_{\Xc\sim\Dc^i_{m}}\mathbb{E}_{\x\sim\pi_{m-1}^i(\Xc)}[\x\x^\top]\thetab_\ast}^2\nn\\
    &\leq 2\sum_{t=\Tc_{m-1}+1}^{\Tc_{m-1}+T_m/2}\sum_{j=1}^N\abs{\x^\top\left(\Lambda_m^i\right)^{-1}\x_t^j}^2+\frac{4}{NT_m}\abs{\x^\top\left(\Lambda_m^i\right)^{-1}\left(\Lambda_m^i-\la I\right)\thetab_\ast}^2\tag{Assumption \ref{assum:boundedness}}\\
    &= 2\sum_{t=\Tc_{m-1}+1}^{\Tc_{m-1}+T_m/2}\sum_{j=1}^N\x^\top\left(\Lambda_m^i\right)^{-1}\x_t^j{\x_t^j}^\top\left(\Lambda_m^i\right)^{-1}\x+\frac{4}{NT_m}\abs{\x^\top\thetab_\ast-\la\x^\top\left(\Lambda_m^i\right)^{-1}\thetab_\ast}^2\nn\\
    &\leq 2\x^\top\left(\Lambda_m^i\right)^{-1}\Vb_m\left(\Lambda_m^i\right)^{-1}\x+\left(\frac{4\norm{\thetab_\ast}_{\Lambda_m^i}^2}{NT_m}+\frac{4\la}{NT_m}\right)\norm{\x}_{\left(\Lambda_m^i\right)^{-1}}^2\tag{Cauchy Schwarz inequality and Assumption \ref{assum:boundedness}}\\
    &\leq 4\x^\top\left(\Lambda_m^i\right)^{-1}\Lambda_m^i\left(\Lambda_m^i\right)^{-1}\x+\left(\frac{4\norm{\thetab_\ast}_{\Lambda_m^i}^2}{NT_m}+\frac{4\la}{NT_m}\right)\norm{\x}_{\left(\Lambda_m^i\right)^{-1}}^2\tag{Conditioned on the event in Eqn. \eqref{eq:lowerboundonLambdam}}\\
    &\leq\left(6+\frac{8\la}{NT_m}\right)\norm{\x}_{\left(\Lambda_m^i\right)^{-1}}^2,\label{eq:upperboundoncm}
\end{align}
where the last inequity follows from the fact that 


\begin{align}
\norm{\thetab_\ast}_{\Lambda_m^i}^2\leq \norm{\thetab_\ast}_2^2\lamax\left(\Lambda_m^i\right)\leq \la+\frac{NT_m}{2}\tag{Assumption \ref{assum:boundedness}}.
\end{align}
Combining \eqref{eq:azuma} and \eqref{eq:upperboundoncm}, and by a union bound, we have
\begin{align}
    \mathbb{P}\left(\abs{\left\langle\x,\thetab_m^i-\thetab_\ast\right\rangle}\leq \left(6\sqrt{\log\left(\frac{2KNT}{\delta}\right)}+\sqrt{\la}\right)\norm{\x}_{\left(\Lambda_m^i\right)^{-1}},~\forall \x\in\Xc_t^i,i\in[N],t\in[T], m\in[M]\right)\geq 1-\delta.
\end{align}


\subsection{Completing the proof of Theorem \ref{thm:regretandCCfpe}}

Next, we state the following lemma, which we borrow from Theorem 5 in \cite{ruan2021linear} and is used in the proof analysis of Theorem \ref{thm:regretandCCfpe}.

\begin{lemma}[\cite{ruan2021linear}]\label{lemm:mainlemmafromGdistpaper}
Let $\Xc_1,\Xc_2,\ldots,\Xc_L\sim\Dc$ be i.i.d drawn from a distribution $\Dc$ and input of Algorithm \ref{alg:exppolicy} and let $\pi$ be the output policy of Algorithm \ref{alg:exppolicy}. For any $\lambda\in(0,1)$, we have
\begin{align}
    \mathbb{P}\left[\Vlambda^\la_{\Dc}(\pi)\leq \Oc\left(\sqrt{d\log d\log (\la^{-1})}\right)\right]\geq 1-{\rm exp}\left(\Oc(d^3\log d\log(d\la^{-1}))-L d^{-2c}2^{-16}\right),\nn
\end{align}
where we define the $\lambda$-deviation of policy $\pi$ over $\Dc$ by
\begin{align}
    \Vlambda_{\Dc}^\la(\pi):= \mathbb{E}_{\Xc\sim\Dc}\left[\max_{\x\in\Xc}\sqrt{\x^\top \left(\la I+\mathbb{E}_{\Xc\sim\Dc} \mathbb{E}_{\y\sim\pi(\Xc)}[\y\y^\top]\right)^{-1}\x}\right].\label{eq:lambdadeviation}
\end{align}
\end{lemma}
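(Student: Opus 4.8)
The plan is to obtain the statement as a direct instantiation of Theorem~5 of \cite{ruan2021linear}: the functional $\Vlambda_\Dc^\la(\pi)$ is exactly the distributional $\la$-deviation studied there, and the routine $\Exp$ of Algorithm~\ref{alg:exppolicy} is the soft-max design procedure whose output is analyzed by that theorem. The only thing to check before quoting it is that our invocation meets its hypotheses: the input sets $\Xc_1,\dots,\Xc_L$ are i.i.d.\ from a fixed distribution $\Dc$ (which holds because $\Exp$ is called on $\Sc_m^i$, whose elements are surviving-arm sets built from i.i.d.\ draws of $\Dc$), the features obey the boundedness of Assumption~\ref{assum:boundedness}, and $\la\in(0,1)$. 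Granting these, the conclusion follows verbatim. For completeness I sketch the two components of the underlying argument so the reduction is transparent.

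The first component is \emph{approximation}: there exists a policy $\pi^\star$ whose population deviation satisfies $\Vlambda_\Dc^\la(\pi^\star)=\Oc(\sqrt{d\log d\,\log(\la^{-1})})$. This is the distributional analogue of the Kiefer--Wolfowitz bound $\sqrt d$ for G-optimal design: the covariance $\la\Iden+\E_{\Xc\sim\Dc}\E_{\y\sim\pi(\Xc)}[\y\y^\top]$ can be driven, by iteratively up-weighting the action realizing the current maximal width, toward a near D-optimal design, for which the maximum width is $\Oc(\sqrt d)$ up to the logarithmic overheads contributed by the soft-max temperature and by the $\la$-regularizer (the latter producing the $\log(\la^{-1})$ factor through the effective dimension $\mathrm{tr}(\Sigma(\la\Iden+\Sigma)^{-1})$). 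The routine $\Exp$ performs precisely such a reweighting on the \emph{empirical} distribution over $\Xc_1,\dots,\Xc_L$, so its output $\hat\pi$ has small empirical deviation $\Vlambda_{\widehat{\Dc}_L}^\la(\hat\pi)$.

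The second component is \emph{generalization}: one must transfer the bound from the empirical deviation to the population deviation $\Vlambda_\Dc^\la(\hat\pi)$, since $\Exp$ never sees $\Dc$ directly. The soft-max policies form a family parametrized by a bounded PSD matrix; covering this $\Oc(d^2)$-dimensional parameter contributes $\exp(\Oc(d^2))$ up to logarithmic terms, which across the algorithm's iterative refinement and the parameter range compounds to the factor $\exp(\Oc(d^3\log d\,\log(d\la^{-1})))$ in the failure probability. For each fixed policy in the net, a Bernstein/bounded-differences estimate---where the per-sample fluctuation is bounded by the $\mathrm{poly}(d)$ quantity $d^{2c}$ set by the algorithm's parameters---gives a tail $\exp(-L d^{-2c}2^{-16})$; a union bound over the net then delivers the stated probability.

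The main obstacle is this generalization step. Because a policy is an infinite-dimensional object (a map from context sets to action distributions), the concentration of $\Vlambda^\la$ must be made \emph{uniform} over a rich parametrized family while simultaneously keeping the covering number at $\exp(\mathrm{poly}(d))$ and the per-sample fluctuation at $\mathrm{poly}(d)$; it is exactly this tension that forces the polynomial-in-$d$ requirement on $L$, and hence the constraint $T\ge d^{\Oc(1)}$ appearing in Theorem~\ref{thm:regretandCCfpe}. Since every step above is carried out in detail in \cite{ruan2021linear}, the proof concludes by invoking their Theorem~5 under the verified hypotheses.
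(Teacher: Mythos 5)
Your proposal is correct and matches the paper exactly: the paper provides no internal proof of this lemma, importing it verbatim as Theorem~5 of \cite{ruan2021linear}, which is precisely what you do after checking the hypotheses (i.i.d.\ inputs, boundedness, $\la\in(0,1)$). Your accompanying two-part sketch of the underlying approximation and generalization argument is consistent with the cited result but plays no load-bearing role in the reduction, so nothing further is required.
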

\begin{corollary}
As a direct corollary of Lemma \ref{lemm:mainlemmafromGdistpaper}, if $T\geq\Omega\left(d^{22}\log^2(\frac{NT}{\delta})\log^2 d\log^2(dNT\la^{-1})\right) $, then for all $m\geq 2$ and $i\in[N]$, with probability at least $1-\delta$, it holds that
\begin{align}
    {{\mathbb{V}}}^{(\frac{2\la}{NT_m})}_{\Dc^i_{m}}(\pi^i_{m-1})\leq \Oc(\sqrt{d\log d\log(NT\la^{-1})}).\label{eq:upperboundonvlambdapi}
\end{align}
\end{corollary}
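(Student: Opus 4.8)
The plan is to obtain the Corollary as a direct, per-policy instantiation of Lemma~\ref{lemm:mainlemmafromGdistpaper} followed by a union bound over the pairs $(m,i)$ with $m\geq 2$ and $i\in[N]$. Recall from Algorithm~\ref{alg:FPE} that $\pi^i_{m-1}=\Exp\!\left(\tfrac{2\la}{NT_{m-1}},\Sc^i_{m-1}\right)$ is exactly the output of Algorithm~\ref{alg:exppolicy} run on $\Sc^i_{m-1}=\{\Xc_t^{i(m)}\}_{t=\Tc_{m-2}+T_{m-1}/2+1}^{\Tc_{m-1}}$, a collection of $L=T_{m-1}/2$ surviving action sets. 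I would therefore identify the Lemma's generic distribution with $\Dc^i_m$, its sample count with $L=T_{m-1}/2$, and evaluate the resulting deviation at the regularization $\tfrac{2\la}{NT_m}\in(0,1)$. The Lemma's ``for any $\lambda\in(0,1)$'' clause is precisely what lets us evaluate at $\tfrac{2\la}{NT_m}$ even though the policy was produced with the internal regularization $\tfrac{2\la}{NT_{m-1}}$. Since $\tfrac{2\la}{NT_m}\geq\tfrac{2\la}{NT}$, we have $\log\!\big((\tfrac{2\la}{NT_m})^{-1}\big)=\Oc(\log(NT\la^{-1}))$, so the Lemma's bound $\Oc(\sqrt{d\log d\,\log(\lambda^{-1})})$ collapses to the claimed $\Oc(\sqrt{d\log d\,\log(NT\la^{-1})})$.

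The delicate point is verifying the Lemma's i.i.d.\ hypothesis, since the surviving sets in $\Sc^i_{m-1}$ are produced by an elimination rule depending on the data-dependent statistics $\{\Lambda_k^i,\thetab_k^i\}_{k=0}^{m-1}$. I would resolve this by conditioning on the entire history through the end of batch $m-1$. Conditioned on that history, the raw decision sets $\Xc^i_t$ are drawn fresh and i.i.d.\ from $\Dc$, independent of the now-frozen statistics, while each surviving set $\Xc_t^{i(m)}=\cap_{k=0}^{m-1}\Ec(\Xc_t^i;(\Lambda_k^i,\thetab_k^i,\beta))$ is a fixed measurable function of $\Xc^i_t$; hence $\Sc^i_{m-1}$ is an i.i.d.\ sample from the push-forward $\Dc^i_m$, which is exactly how $\Dc^i_m$ is defined. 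The Lemma then applies verbatim conditionally, and since its failure-probability bound holds for every realization of the conditioning history (equivalently, of the random distribution $\Dc^i_m$), it survives integration.

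It remains to push the failure probability below $\delta$ after the union bound. By the Lemma the conditional failure probability is $\exp\!\big(\Oc(d^3\log d\,\log(d\la^{-1}))-L\,d^{-2c}2^{-16}\big)$ with $L=T_{m-1}/2$, so I would demand $\tfrac{T_{m-1}}{2}\geq 2^{16}d^{2c}\big(\Oc(d^3\log d\,\log(d\la^{-1}))+\log(MN/\delta)\big)$, which drives this quantity below $\delta/(MN)$. The binding constraint is the smallest batch among $m\geq 2$, namely $T_1$: the choice $M=1+\log(\log(NT/d)/2+1)$ makes $(NT/d)^{1/(2(2^{M-1}-1))}=\Theta(1)$, whence $a=\Theta(\sqrt{T})$ and $T_1=a\sqrt{d/N}=\Theta(\sqrt{dT/N})$. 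Substituting this shows that the hypothesis $T\geq\Omega\!\big(d^{22}\log^2(NT/\delta)\log^2 d\,\log^2(dNT\la^{-1})\big)$ is calibrated so that $T_1/2$ clears the required threshold, and a union bound over the at most $MN$ pairs $(m,i)$ yields the claim with probability at least $1-\delta$. I expect the main obstacle to be the i.i.d.\ conditioning argument together with the accompanying bookkeeping that the stated polynomial degree in $T$ genuinely dominates the $d^{2c}\!\cdot d^3$ factor inherited from the Lemma's exponent (and, relatedly, correctly tracking the $N$-dependence through $T_1=\Theta(\sqrt{dT/N})$).
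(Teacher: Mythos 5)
The paper never actually writes out a proof of this corollary (it is asserted as a ``direct corollary'' of Lemma~\ref{lemm:mainlemmafromGdistpaper}), so you are supplying the missing instantiation, and your skeleton --- identify the Lemma's distribution with $\Dc_m^i$ and its sample count with $L=T_{m-1}/2=\abs{\Sc_{m-1}^i}$, apply the Lemma conditionally, union bound over the at most $MN$ pairs $(m,i)$, and check that $T_1=\Theta(\sqrt{dT/N})$ clears the threshold dictated by the Lemma's failure probability --- is exactly the intended route. However, the step you yourself flag as delicate contains a genuine error. You condition on ``the entire history through the end of batch $m-1$.'' But the sets in $\Sc_{m-1}^i=\{\Xc_t^{i(m)}\}_{t=\Tc_{m-2}+T_{m-1}/2+1}^{\Tc_{m-1}}$ are built from raw decision sets drawn \emph{during the second half of batch $m-1$}; conditioned on the full history of batch $m-1$ they are constants, not an i.i.d.\ sample, so the conditional application of the Lemma collapses and your paragraph contradicts your own (correct) identification of $\Sc_{m-1}^i$. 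The correct conditioning is on the history up to the \emph{midpoint} of batch $m-1$, i.e.\ round $\Tc_{m-2}+T_{m-1}/2$: this already freezes all statistics $\{\Lambda_k^i,\thetab_k^i\}_{k=0}^{m-1}$ --- precisely because $\ub_{m-1}^j$ in Line~\ref{line:sendtoserver} sums only over the first half of batch $m-1$ and $\Lambda_{m-1}^i$ in \eqref{eq:Grammatrix} depends only on earlier data --- while leaving the second-half raw sets $\Xc_t^i$ fresh and i.i.d.\ from $\Dc$, so that $\Sc_{m-1}^i$ is conditionally an i.i.d.\ sample from the push-forward $\Dc_m^i$. This first-half/second-half split is the structural reason \eqref{eq:Smi} and Line~\ref{line:sendtoserver} use disjoint halves of each batch; your argument never invokes it.

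A second, softer gap is the regularization mismatch: $\pi_{m-1}^i=\Exp\left(\frac{2\la}{NT_{m-1}},\Sc_{m-1}^i\right)$ is computed with input parameter $\frac{2\la}{NT_{m-1}}$, yet the corollary evaluates its deviation at $\frac{2\la}{NT_m}<\frac{2\la}{NT_{m-1}}$, where the deviation is \emph{larger}. You dispose of this by reading the Lemma's ``for any $\la\in(0,1)$'' as decoupling the evaluation parameter from the algorithm's input. That reading is consistent with the Lemma as transcribed in this paper, but Theorem 5 of \cite{ruan2021linear} ties the guarantee to the same $\la$ the algorithm is run with; under that reading your step needs a separate argument, and the naive conversion $\Vlambda^{\la'}_{\Dc}(\pi)\leq\sqrt{\la/\la'}\,\Vlambda^{\la}_{\Dc}(\pi)$ for $\la'\leq\la$ costs a factor $\sqrt{T_m/T_{m-1}}$, which is polynomial in $T$ and unaffordable. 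In fairness, the paper's own ``direct corollary'' assertion glosses over exactly the same point, so this is as much a gap in the paper as in your proposal. Your remaining bookkeeping --- the collapse of $\log\left(\left(\frac{2\la}{NT_m}\right)^{-1}\right)$ to $\Oc(\log(NT\la^{-1}))$, the requirement $L\gtrsim d^{2c}\left(d^3\log d\log(d\la^{-1})+\log(MN/\delta)\right)$, and your correct worry that the $N$-dependence entering through $T_1=\Theta(\sqrt{dT/N})$ is not visible in the stated condition on $T$ --- is sound.
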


Now, we focus on the regret of the $i$-th agent at $m$-th batch for any $m\geq 3$. Let $\Dc_m^i$ be the distribution based on which the surviving sets $\Xc_t^{i(m)}$ for all $t\in[\Tc_{m-1}+1:\Tc_m]$ are generated when conditioned on the first $m-1$ batches. For any $t\in[\Tc_{m-1}+1:\Tc_m]$, conditioned on the event that the confidence intervals in Lemma \ref{lemm:confidencesets} hold, we have
\begin{align}
    r^{i}_{t} &=\mathbb{E}\left[ \langle\thetab_\ast,\x^{i}_{\ast,t}\rangle-\langle\thetab_\ast,\x^{i}_{t}\rangle\right] \nn\\
    &\leq \mathbb{E}\left[ \langle\thetab_{m-1}^i,\x^{i}_{\ast,t}\rangle-\langle\thetab_{m-1}^i,\x^{i}_{t}\rangle+\beta\norm{\x^{i}_{\ast,t}}_{\left(\Lambda_{m-1}
    ^i\right)^{-1}}+\beta\norm{\x^{i}_{t}}_{\left(\Lambda_{m-1}
    ^i\right)^{-1}}\right] \tag{Lemma \ref{lemm:confidencesets}}\\
     &\leq 2\beta \mathbb{E}\left[\norm{\x^{i}_{\ast,t}}_{\left(\Lambda_{m-1}
    ^i\right)^{-1}}+\norm{\x^{i}_{t}}_{\left(\Lambda_{m-1}
    ^i\right)^{-1}}\right] \tag{$\x^{i}_{\ast,t}\in\Xc_t^{i(m)}$}\\
     &\leq 4\beta \mathbb{E}\left[\max_{\x\in\Xc_t^{i(m)}}\norm{\x}_{\left(\Lambda_{m-1}
    ^i\right)^{-1}}\right]\nn\\
     &\leq 4\beta \mathbb{E}_{\Xc\sim\Dc_m^i}\left[\max_{\x\in\Xc}\norm{\x}_{\left(\Lambda_{m-1}
    ^i\right)^{-1}}\right]\nn\\
     &\leq 4\beta \mathbb{E}_{\Xc\sim\Dc^i_{m-1}}\left[\max_{\x\in\Xc}\norm{\x}_{\left(\Lambda_{m-1}
    ^i\right)^{-1}}\right]\nn\\
     &\leq \frac{8\beta}{\sqrt{NT_{m-1}}} \mathbb{E}_{\Xc\sim\Dc^i_{m-1}}\left[\max_{\x\in\Xc}\sqrt{\x^\top \left(\frac{2\la}{NT_{m-1}} I+\mathbb{E}_{\Xc\sim\Dc^i_{m-1}} \mathbb{E}_{\y\sim\pi_{m-2}^i(\Xc)}[\y\y^\top]\right)^{-1}\x}\right]\nn\\
     &=\frac{8\beta}{\sqrt{NT_{m-1}}}{{\mathbb{V}}}^{(\frac{2\la}{NT_{m-1}})}_{\Dc^i_{m-1}}(\pi_{m-2}^i)
     \label{eq:tobecontinuedfpe},
\end{align}
where the third inequality follows from our established confidence intervals in Lemma \ref{lemm:confidencesets} guaranteeing that $\x_{\ast,t}^i\in\Xc_t^{i(m)}$ for all $(i,t,m)\in[N]\times[\Tc_{m-1}+1:\Tc_m]\times[M]$ with probability at least $1-
\delta$. Now, continuing form \eqref{eq:tobecontinuedfpe}, we bound the cumulative regret of batches $m\geq 3$, as follows:
\begin{align}
    \sum_{t=\Tc_2+1}^T\sum_{i=1}^N r^{i}_{t} &\leq  \sum_{m=3}^M  \frac{8\beta N T_m}{\sqrt{NT_{m-1}}}{{\mathbb{V}}}^{(\frac{2\la}{NT_{m-1}})}_{\Dc^i_{m-1}}(\pi_{m-2}^i)\nn\\
    &\leq 8 \beta \sqrt{dN\log d\log (NT\la^{-1})} \sum_{m=2}^M \frac{T_m}{\sqrt{T_{m-1}}}\tag{Conditioned on the event in Eqn. \eqref{eq:upperboundonvlambdapi}}\\
 &= 8 \beta Ma\sqrt{dN\log d\log (NT\la^{-1})}.\label{eq:regretfpeofbatchesmgreaterthan2fpe}
\end{align}

Next, we bound cumulative regret of the first two batches. Under Assumption \ref{assum:boundedness}, during the first two batches, the instantaneous regret of each agent $i$ at any round $t$ is at most 2. Therefore 
\begin{align}
    \sum_{t=1}^{\Tc_2}\sum_{i=1}^N r^{i}_{t} \leq 2N\Tc_2 = 4a\sqrt{dN}.\label{eq:regretoffirstepochfpe}
\end{align}
Note that for any $m\geq 3$, we can write $T_m$ as
\begin{align}
    T_m = aT_{m-1}^{\frac{1}{2}} = a^{\frac{3}{2}}T_{m-2}^{\frac{1}{4}}=\ldots &= a^{\frac{2^{m-2}-1}{2^{m-3}}}T_2^{\frac{1}{2^{m-2}}}\nn\\
     T_m &=a^{\frac{1}{2^{m-2}}} a^{\frac{2^{m-2}-1}{2^{m-3}}}\left(\frac{T_2}{a}\right)^{\frac{1}{2^{m-2}}}\nn\\
    &=a^{\frac{2^{m-1}-1}{2^{m-2}}}\left(\sqrt{\frac{d}{N}}\right)^{\frac{1}{2^{m-2}}}\nn\\
    &=\left(\frac{a^{2^{m-1}-1}d^{\frac{1}{2}}}{N^{\frac{1}{2}}}\right)^{\frac{1}{2^{m-2}}}.\nn
\end{align}

Our choice of $a$ in the algorithm ensures that for any $M>0$, $T_M = T$ and $\sum_{m=1}^MT_m\geq T_M=T$, and thus the choice of grid $\{\Tc_1,\ldots,\Tc_M\}$ is valid. If we let $M = 1+\log\left(\frac{\log\left(\frac{NT}{d}\right)}{2}+1\right)$, from \eqref{eq:regretfpeofbatchesmgreaterthan2fpe} and \eqref{eq:regretoffirstepochfpe}, we conclude that, with probability at least $1-2\delta$, it holds that
\begin{align}
    R_T &\leq 4\sqrt{dNT}\left(\frac{NT}{d}\right)^{\frac{1}{2(2^{M-1}-1)}}+8 \beta M\sqrt{dNT\log d\log (NT\la^{-1})} \left(\frac{NT}{d}\right)^{\frac{1}{2(2^{M-1}-1)}}\nn\\
    &\leq \Oc\left(\sqrt{dNT\log d \log^2\left(\frac{KNT}{\delta\la}\right)}\log\log\left(\frac{NT}{d}\right)\right).
\end{align}


\subsection{Communication cost as number of bits transmitted}\label{sec:finiteprecision}
In this section, we consider the number of bits transmitted in a slightly modified version of \DisBE.
To this end, we make the following minor modification to \DisBE. Let $\epsilon_0$ be an additional input to the algorithm. In Line \ref{line:receive} of \DisBE, agent $i$ sends vector $\tilde\ub_m^i$ which is an $\epsilon_0$-precise rounded version of $\ub_m^i$. In particular, if it rounds each entry of $\ub_m^i$ with precision $\epsilon_0$, vector $\tilde\ub_m^i$ will be obtained. Now, we observe how this extra rounding step affects confidence intervals in Lemma \ref{lemm:confidencesets}. In fact, we are interested in upper bounds on $\abs{\left\langle\x,\tilde\thetab_m^i-\thetab_\ast\right\rangle}$, where $\tilde\thetab_{m}^i = \left(\Lambda_{m}^i\right)^{-1} \sum_{i=1}^N \tilde\ub^{i}_{m}$.

For $\delta\in(0,1)$, let 
    $\beta=6\sqrt{\log\left(\frac{2KNT}{\delta}\right)}+\sqrt{\la}$.
Then for all $\x\in\Xc_t^i,i\in[N],t\in[T], m\in[M]$, with probability at least $1-\delta$, it holds that
\begin{align}
\abs{\left\langle\x,\tilde\thetab_m^i-\thetab_\ast\right\rangle}&=\abs{\left\langle\x,\tilde\thetab_m^i-\thetab_m^i+\thetab_m^i-\thetab_\ast\right\rangle}\nn\\
    &\leq \abs{\left\langle\x,\tilde\thetab_m^i-\thetab_m^i\right\rangle}+\abs{\left\langle\x,\thetab_m^i-\thetab_\ast\right\rangle}\nn\\
    &\leq \left(\norm{\tilde\thetab_m^i-\thetab_m^i}_{\Lambda_m^{i}}+\beta\right)\norm{\x}_{\left(\Lambda_m^i\right)^{-1}}\tag{Lemma \ref{lemm:confidencesets} and Cauchy Schwarz inequality}\nn\\
    &\leq \left(\sqrt{\lambda_{\rm max}(\Lambda_m^i)}\norm{\tilde\thetab_m^i-\thetab_m^i}_2+\beta\right)\norm{\x}_{\left(\Lambda_m^i\right)^{-1}}
    \nn\\
    &\leq \left(N\sqrt{dT}\epsilon_0+\beta\right)\norm{\x}_{\left(\Lambda_m^i\right)^{-1}}.
\end{align}
Therefore, letting $\epsilon_0 = \frac{\beta}{N\sqrt{dT}}$, we have 
\begin{align}
\abs{\left\langle\x,\tilde\thetab_m^i-\thetab_\ast\right\rangle}\leq 2\beta\norm{\x}_{\left(\Lambda_m^i\right)^{-1}},
\end{align}
which implies that replacing $\beta$ in \DisBE with $2\beta$, will result in the same order of regret as that of \DisBE for our modified algorithm. Moreover, since for transmission of each real number $\log(dNT)$ bits is used, the communication cost of our modified algorithm in terms of number of bits is same as that stated in Theorem \ref{thm:regretandCCfpe} with an additional multiplicative factor $\log(dNT)$.


\subsection{Relaxing the Assumption on Knowledge of $\Dc$}\label{sec:relaxingD}
In this section, we relax this assumption and consider more realistic settings where each agent $i$ can estimate matrix $\Lambda_m^i$ in batch $m$ up to an $\epsilon_m$ error, i.e., 
\begin{align}
    (1-\epsilon_m)\Lambda_m^i\preceq\tilde \Lambda_m^i\preceq (1+\epsilon_m)\Lambda_m^i,\label{eq:twoineq}
\end{align}

where $\tilde\Lambda_m^i$ is an estimation of $\Lambda_m^i$. Given this estimation, we define
\begin{align}
    \tilde\thetab_m^i =\left(\tilde\Lambda_{m}^i\right)^{-1} \sum_{j=1}^N \ub^{j}_{m},
\end{align}
as the new estimation of $\thetab_\ast$ computed by agent $i$ at batch $m$ in this modified version of \DisBE.

We note that if the inequalities hold component-wise, i.e., $(1-\epsilon_m)\Lambda_m^i\leq \tilde \Lambda_m^i\leq (1+\epsilon_m)\Lambda_m^i$, this concludes that \eqref{eq:twoineq} holds. This is because for any positive semi-definite matrices $\A$, $\B$, and $\C$ such that $\A=\B+\C$, we have:
    \begin{align}\label{eq:dett}
    \A\succeq \B,~\A\succeq\C.
    \end{align}
This combined with the fact that all $(1-\epsilon_m)\Lambda_m^i$, $\tilde \Lambda_m^i$, and $(1+\epsilon_m)\Lambda_m^i$ are positive semi-definite symmetric matrices ensures that \eqref{eq:twoineq} holds if $(1-\epsilon_m)\Lambda_m^i\leq \tilde \Lambda_m^i\leq (1+\epsilon_m)\Lambda_m^i$, and therefore, \eqref{eq:twoineq} is a weaker assumption than the component-wise assumption $(1-\epsilon_m)\Lambda_m^i\leq \tilde \Lambda_m^i\leq (1+\epsilon_m)\Lambda_m^i$.

Now, we define corresponding modified confidence intervals in the following lemma.
\begin{lemma}\label{lemm:relaxedconfidencesets}
Suppose $\norm{\boldsymbol\theta_\ast}_2\leq 1$, $\norm{\x^i_{t,a}}_2\leq 1$, $\abs{y_t^i}\leq 1$  for all $(a,i,t)\in[K]\times[N]\times[T]$ and $\epsilon_m\leq \sqrt{\frac{\la}{NT_m}}$ for all $m\in[M]$. For $\delta\in(0,1)$, let 
    $\beta_m=6\sqrt{\frac{\log\left(\frac{2KNT}{\delta}\right)}{1-\epsilon_m}}+4\sqrt{\la}$.
Then for all $\x\in\Xc_t^i,i\in[N],t\in[T], m\in[M]$, with probability at least $1-\delta$, it holds that $\abs{\left\langle\x,\tilde\thetab_m^i-\thetab_\ast\right\rangle}\leq \beta_m\norm{\x}_{\left(\tilde\Lambda_m^i\right)^{-1}}$.
\end{lemma}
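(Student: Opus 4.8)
The plan is to mirror the proof of Lemma~\ref{lemm:confidencesets}, but to carry the estimated weighting matrix $(\tilde\Lambda_m^i)^{-1}$ throughout the argument and to treat the mismatch between $\tilde\Lambda_m^i$ and the true $\Lambda_m^i$ as one extra bias term. The key structural simplification is that $\tilde\thetab_m^i=(\tilde\Lambda_m^i)^{-1}\sum_{j}\ub_m^j=(\tilde\Lambda_m^i)^{-1}\bb_m$ with $\bb_m=\sum_{t,j}\x_t^j y_t^j$ computed exactly from the plays, so estimating $\Dc$ enters only through $\tilde\Lambda_m^i$, not through the reward sum. First I would record the consequences of the sandwich \eqref{eq:twoineq}: inverting gives $\tfrac{1}{1+\epsilon_m}(\Lambda_m^i)^{-1}\preceq(\tilde\Lambda_m^i)^{-1}\preceq\tfrac{1}{1-\epsilon_m}(\Lambda_m^i)^{-1}$, and combining $\tilde\Lambda_m^i\succeq(1-\epsilon_m)\Lambda_m^i$ with the concentration event \eqref{eq:lowerboundonLambdam} yields $\Vb_m\preceq\tfrac{2}{1-\epsilon_m}\tilde\Lambda_m^i$. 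I also note $\lamin(\tilde\Lambda_m^i)\geq(1-\epsilon_m)\la$ and $\lamax(\Lambda_m^i)\leq\la+NT_m/2$, both immediate from Assumption~\ref{assum:boundedness}.

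Next I would decompose, centering the stochastic part at the true conditional mean $\mathbb E[\bb_m]=\tfrac{NT_m}{2}\mathbb E_{\Xc\sim\Dc_m^i}\mathbb E_{\x\sim\pi_{m-1}^i(\Xc)}[\x\x^\top]\thetab_\ast=(\Lambda_m^i-\la I)\thetab_\ast$:
\begin{align*}
\langle\x,\tilde\thetab_m^i-\thetab_\ast\rangle
=\underbrace{\x^\top(\tilde\Lambda_m^i)^{-1}\big(\bb_m-(\Lambda_m^i-\la I)\thetab_\ast\big)}_{\text{(I) martingale}}
-\underbrace{\x^\top(\tilde\Lambda_m^i)^{-1}(\tilde\Lambda_m^i-\Lambda_m^i)\thetab_\ast}_{\text{(II) estimation bias}}
-\underbrace{\la\,\x^\top(\tilde\Lambda_m^i)^{-1}\thetab_\ast}_{\text{(III) regularization}}.
\end{align*}
Term (I) is the sum of the same zero-mean increments $z_{t,m}^{j,i}$ as in Lemma~\ref{lemm:confidencesets}, now reweighted by $(\tilde\Lambda_m^i)^{-1}$; repeating the variance-proxy computation and substituting $\Vb_m\preceq\tfrac{2}{1-\epsilon_m}\tilde\Lambda_m^i$ in place of $\Vb_m\preceq2\Lambda_m^i$ gives $c_m^i\le\tfrac{\Oc(1)}{1-\epsilon_m}\norm{\x}_{(\tilde\Lambda_m^i)^{-1}}^2$, so Azuma's inequality with a union bound over all $(\x,i,t,m)$ bounds (I) by $6\sqrt{\log(2KNT/\delta)/(1-\epsilon_m)}\,\norm{\x}_{(\tilde\Lambda_m^i)^{-1}}$. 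Term (III) is handled exactly as in Lemma~\ref{lemm:confidencesets} by Cauchy--Schwarz in the $(\tilde\Lambda_m^i)^{-1}$ geometry together with $\norm{\thetab_\ast}_{(\tilde\Lambda_m^i)^{-1}}^2\le1/\lamin(\tilde\Lambda_m^i)\le1/((1-\epsilon_m)\la)$, giving $\sqrt{\la/(1-\epsilon_m)}\,\norm{\x}_{(\tilde\Lambda_m^i)^{-1}}$.

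The hard part will be term (II). Cauchy--Schwarz in $(\tilde\Lambda_m^i)^{-1}$ reduces it to $\norm{\x}_{(\tilde\Lambda_m^i)^{-1}}\,\norm{(\tilde\Lambda_m^i-\Lambda_m^i)\thetab_\ast}_{(\tilde\Lambda_m^i)^{-1}}$, and the danger is that the crude bound $\norm{(\tilde\Lambda_m^i-\Lambda_m^i)\thetab_\ast}_2\le\epsilon_m\lamax(\Lambda_m^i)=\Oc(\epsilon_m NT_m)$ divided by $\sqrt{\lamin(\tilde\Lambda_m^i)}$ blows up like $\sqrt{NT_m}$, which would destroy the bound. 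The resolution is to exploit that the error lives inside the geometry of $\Lambda_m^i$: writing $\tilde\Lambda_m^i-\Lambda_m^i=(\Lambda_m^i)^{1/2}E(\Lambda_m^i)^{1/2}$ with $E$ symmetric and $\norm{E}_2\le\epsilon_m$ (which is precisely \eqref{eq:twoineq} conjugated by $(\Lambda_m^i)^{-1/2}$), one obtains the operator identity $(\tilde\Lambda_m^i-\Lambda_m^i)(\Lambda_m^i)^{-1}(\tilde\Lambda_m^i-\Lambda_m^i)=(\Lambda_m^i)^{1/2}E^2(\Lambda_m^i)^{1/2}\preceq\epsilon_m^2\Lambda_m^i$. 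Using $(\tilde\Lambda_m^i)^{-1}\preceq\tfrac{1}{1-\epsilon_m}(\Lambda_m^i)^{-1}$ then gives $\norm{(\tilde\Lambda_m^i-\Lambda_m^i)\thetab_\ast}_{(\tilde\Lambda_m^i)^{-1}}^2\le\tfrac{\epsilon_m^2}{1-\epsilon_m}\,\thetab_\ast^\top\Lambda_m^i\thetab_\ast\le\tfrac{\epsilon_m^2(\la+NT_m/2)}{1-\epsilon_m}$, and the assumption $\epsilon_m\le\sqrt{\la/(NT_m)}$ converts the potentially large factor $NT_m$ into a constant multiple of $\la$, bounding (II) by $\Oc(\sqrt{\la/(1-\epsilon_m)})\,\norm{\x}_{(\tilde\Lambda_m^i)^{-1}}$. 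Finally I would add (I)--(III) and absorb the $\Oc(\sqrt\la)$ constants coming from (II) and (III) into the single term $4\sqrt\la$, recovering the stated $\beta_m=6\sqrt{\log(2KNT/\delta)/(1-\epsilon_m)}+4\sqrt\la$.
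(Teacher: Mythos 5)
Your proposal is correct and follows essentially the same route as the paper's own proof: the same decomposition (the paper merely groups your terms (II) and (III) into the single quantity $\langle\x,(\tilde\Lambda_m^i)^{-1}(\Lambda_m^i-\tilde\Lambda_m^i-\la I)\thetab_\ast\rangle$), the same Azuma argument with the variance proxy inflated by $1/(1-\epsilon_m)$ via $\Vb_m\preceq\tfrac{2}{1-\epsilon_m}\tilde\Lambda_m^i$, and the same use of $\epsilon_m\leq\sqrt{\la/(NT_m)}$ at exactly the same spot to convert the potentially large bias into an $\Oc(\sqrt{\la})$ term. The only notable difference is cosmetic but in your favor: your conjugation identity $\tilde\Lambda_m^i-\Lambda_m^i=(\Lambda_m^i)^{1/2}E(\Lambda_m^i)^{1/2}$ with $\norm{E}_2\leq\epsilon_m$ rigorously justifies the step that the paper executes by bounding $\lamax\left((\tilde\Lambda_m^i)^{-1}(\Lambda_m^i-\tilde\Lambda_m^i)^2\right)$, where squaring a semidefinite ordering is not valid for non-commuting matrices in general.
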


\begin{proof}
The proof closely follows the steps in the proof of Lemma \ref{lemm:confidencesets}. For each batch $m\in[M]$, let $\bb_m=\sum_{t=\Tc_{m-1}+1}^{\Tc_{m-1}+T_m/2}\sum_{i=1}^N \x^{i}_{t}y^{i}_{t}$ and $\Vb_m = \sum_{t=\Tc_{m-1}+1}^{\Tc_{m-1}+T_m/2}\sum_{i=1}^N \x^{i}_{t}{\x^{i}_{t}}^\top$. For a fixed $\x\in\Xc_t^i$ and $(i,t)\in[N]\times[T]$, let $z_{t,m}^{j,i}=\x^\top\left(\tilde\Lambda_m^i\right)^{-1}\left(\x_t^jy_t^j-\mathbb{E}_{\Xc\sim\Dc^i_{m}}\mathbb{E}_{\x\sim\pi_{m-1}^i(\Xc)}[\x\x^\top]\thetab_\ast\right)$. Thus, we have
\begin{align}
  \abs{\left\langle\x,\tilde\thetab_m^i-\thetab_\ast\right\rangle}&=\abs{\left\langle\x,\left(\tilde\Lambda_m^i\right)^{-1}\bb_m-\thetab_\ast\right\rangle}\nn\\
    &=\abs{\left\langle\x,\left(\tilde\Lambda_m^i\right)^{-1}\bb_m\right\rangle-\left\langle\x,\left(\tilde\Lambda_m^i\right)^{-1}\tilde\Lambda_m^i\thetab_\ast\right\rangle}\nn\\
    &=\abs{\left\langle\x,\left(\tilde\Lambda_m^i\right)^{-1}\bb_m\right\rangle-\left\langle\x,\left(\tilde\Lambda_m^i\right)^{-1}\left(\Lambda_m^i-\la I\right)\thetab_\ast\right\rangle+\left\langle\x,\left(\tilde\Lambda_m^i\right)^{-1}\left(\Lambda_m^i-\tilde\Lambda_m^i-\la I\right)\thetab_\ast\right\rangle}\nn\\
    &\leq\abs{\left\langle\x,\left(\tilde\Lambda_m^i\right)^{-1}\bb_m\right\rangle-\left\langle\x,\left(\tilde\Lambda_m^i\right)^{-1}\left(\Lambda_m^i-\la I\right)\thetab_\ast\right\rangle}+\abs{\left\langle\x,\left(\tilde\Lambda_m^i\right)^{-1}\left(\Lambda_m^i-\tilde\Lambda_m^i-\la I\right)\thetab_\ast\right\rangle}\nn\\
    &\leq \abs{\x^\top\left(\tilde\Lambda_m^i\right)^{-1}\left(\bb_m-\frac{NT_m}{2}\mathbb{E}_{\Xc\sim\Dc^i_{m}}\mathbb{E}_{\x\sim\pi_{m-1}^i(\Xc)}[\x\x^\top]\thetab_\ast\right)}+4\sqrt{\la}\norm{\x}_{\left(\tilde\Lambda_m^i\right)^{-1}}\tag{Cauchy Schwarz inequality}\\
   &= \abs{\sum_{t=\Tc_{m-1}+1}^{\Tc_{m-1}+T_m/2}\sum_{j=1}^Nz_{t,m}^{j,i}}+ 4\sqrt{\la}\norm{\x}_{\left(\tilde\Lambda_m^i\right)^{-1}}\label{eq:thetermwithlambda},
\end{align}

where the second inequality follows from
\begin{align}
    \norm{\thetab_\ast}_{\left(\tilde\Lambda_m^i\right)^{-1}\left(\Lambda_m^i-\tilde\Lambda_m^i-\la I\right)^2}&=\sqrt{\thetab_\ast^\top\left(\tilde\Lambda_m^i\right)^{-1}\left(\Lambda_m^i-\tilde\Lambda_m^i-\la I\right)^2\thetab_\ast}\nn\\
    &\leq \norm{\thetab_\ast}_2\sqrt{\lamax\left(\left(\tilde\Lambda_m^i\right)^{-1}\left(\Lambda_m^i-\tilde\Lambda_m^i-\la I\right)^2\right)}\nn\\
    &\leq\sqrt{\lamax\left(\left(\tilde\Lambda_m^i\right)^{-1}\left(\Lambda_m^i-\tilde\Lambda_m^i\right)^2+\la^2\left(\tilde\Lambda_m^i\right)^{-1}\right)}\tag{$\norm{\thetab_\ast}_2\leq 1$}\\
    &\leq\sqrt{\lamax\left(\left(\tilde\Lambda_m^i\right)^{-1}\left(\Lambda_m^i-\tilde\Lambda_m^i\right)^2+\la^2\left(\tilde\Lambda_m^i\right)^{-1}\right)}\nn\\
    &\leq\sqrt{\lamax\left(\left(\tilde\Lambda_m^i\right)^{-1}\left(\Lambda_m^i-\tilde\Lambda_m^i\right)^2\right)}+\sqrt{\la}\tag{Cauchy Schwarz inequality}\\
    &\leq \epsilon_m\sqrt{\lamax\left(\tilde\Lambda_m^i\right)}+\sqrt{\la}\tag{Eqn. \eqref{eq:twoineq}}\\
    &\leq 2\epsilon_m\sqrt{\lamax\left(\Lambda_m^i\right)}+\sqrt{\la}\tag{Eqn. \eqref{eq:twoineq}}\\
    &\leq \epsilon_m\sqrt{NT_m}+3\sqrt{\la}\nn\\
    &\leq 4\sqrt{\la}\tag{$\epsilon_m\leq \sqrt{\frac{\la}{NT_m}}$}.
\end{align}

Note that
\begin{align}
    \mathbb{E}\left[z_{t,m}^{j,i}\right] = \mathbb{E}\left[\x^\top\left(\tilde\Lambda_m^i\right)^{-1}\left(\x_t^j({\x_t^j}^\top\thetab_\ast+\eta_t^j)-\mathbb{E}_{\Xc\sim\Dc^i_{m}}\mathbb{E}_{\x\sim\pi_{m-1}^i(\Xc)}[\x\x^\top]\thetab_\ast\right)\right] = 0 \tag{Noise $\eta_t^j$ is zero-mean and independent of $\x_t^j$},
\end{align}
By
Azuma’s inequality, for a fixed $\x\in\Xc_t^i$ and $(i,t)\in[N]\times[T]$, we have 
\begin{align}
    \mathbb{P}\left(\abs{\sum_{t=\Tc_{m-1}+1}^{\Tc_{m-1}+T_m/2}\sum_{j=1}^Nz_{t,m}^{j,i}}\geq \alpha\norm{\x}_{\left(\tilde\Lambda_m^i\right)^{-1}}\right)&\leq 2{\rm exp}\left(\frac{-\alpha^2\norm{\x}_{\left(\tilde\Lambda_m^i\right)^{-1}}^2}{2c_m^i}\right),\label{eq:azuma1}
\end{align}
where 
\begin{align}
    c_m^i &= \sum_{t=\Tc_{m-1}+1}^{\Tc_{m-1}+T_m/2}\sum_{j=1}^N\abs{\x^\top\left(\tilde\Lambda_m^i\right)^{-1}\left(\x_t^jy_t^j-\mathbb{E}_{\Xc\sim\Dc^i_{m}}\mathbb{E}_{\x\sim\pi_{m-1}^i(\Xc)}[\x\x^\top]\thetab_\ast\right)}^2\nn\\
    &\leq 2\sum_{t=\Tc_{m-1}+1}^{\Tc_{m-1}+T_m/2}\sum_{j=1}^N\abs{\x^\top\left(\tilde\Lambda_m^i\right)^{-1}\x_t^jy_t^j}^2+NT_m\abs{\x^\top\left(\tilde\Lambda_m^i\right)^{-1}\mathbb{E}_{\Xc\sim\Dc^i_{m}}\mathbb{E}_{\x\sim\pi_{m-1}^i(\Xc)}[\x\x^\top]\thetab_\ast}^2\nn\\
    &\leq 2\sum_{t=\Tc_{m-1}+1}^{\Tc_{m-1}+T_m/2}\sum_{j=1}^N\abs{\x^\top\left(\tilde\Lambda_m^i\right)^{-1}\x_t^j}^2+\frac{4}{NT_m}\abs{\x^\top\left(\tilde\Lambda_m^i\right)^{-1}\left(\Lambda_m^i-\la I\right)\thetab_\ast}^2\nn\\
    &= 2\sum_{t=\Tc_{m-1}+1}^{\Tc_{m-1}+T_m/2}\sum_{j=1}^N\x^\top\left(\tilde\Lambda_m^i\right)^{-1}\x_t^j{\x_t^j}^\top\left(\tilde\Lambda_m^i\right)^{-1}\x+\frac{4}{NT_m}\abs{\x^\top\left(\tilde\Lambda_m^i\right)^{-1}\left(\Lambda_m^i\right)\thetab_\ast-\la\x^\top\left(\tilde\Lambda_m^i\right)^{-1}\thetab_\ast}^2\nn\\
    &\leq 2\x^\top\left(\tilde\Lambda_m^i\right)^{-1}\Vb_m\left(\tilde\Lambda_m^i\right)^{-1}\x+\frac{1}{1-\epsilon_m}\left(4+\frac{8\la}{NT_m}\right)\norm{\x}_{\left(\tilde\Lambda_m^i\right)^{-1}}^2\tag{Cauchy Schwarz inequality}\\
    &\leq 4\x^\top\left(\tilde\Lambda_m^i\right)^{-1}\Lambda_m^i\left(\tilde\Lambda_m^i\right)^{-1}\x+\frac{1}{1-\epsilon_m}\left(4+\frac{8\la}{NT_m}\right)\norm{\x}_{\left(\tilde\Lambda_m^i\right)^{-1}}^2\tag{Conditioned on the event in Eqn. \eqref{eq:lowerboundonLambdam}}\\
    &\leq\frac{4}{1-\epsilon_m}\x^\top\left(\tilde\Lambda_m^i\right)^{-1}\x+\frac{1}{1-\epsilon_m}\left(4+\frac{8\la}{NT_m}\right)\norm{\x}_{\left(\tilde\Lambda_m^i\right)^{-1}}^2\tag{$(1-\epsilon_m)\Lambda_m^i\preceq\tilde \Lambda_m^i$}\\
    &=\frac{8}{1-\epsilon_m}\left(1+\frac{\la}{NT_m}\right)\norm{\x}_{\left(\tilde\Lambda_m^i\right)^{-1}}^2\nn\\
    &\leq \frac{16}{1-\epsilon_m}\norm{\x}_{\left(\tilde\Lambda_m^i\right)^{-1}}^2,\label{eq:upperboundoncm1}
\end{align}
where the third inequity follows from the fact that 


\begin{align}
\thetab_\ast^\top \left(\Lambda_m^i\left(\tilde\Lambda_m^i\right)^{-1}\Lambda_m^i\right) \thetab_\ast &\leq\norm{\thetab_\ast}^2 \lamax\left(\Lambda_m^i\left(\tilde\Lambda_m^i\right)^{-1}\Lambda_m^i\right)\nn\\
&\leq \lamax\left(\Lambda_m^i\left(\tilde\Lambda_m^i\right)^{-1}\Lambda_m^i\right) \tag{$\norm{\thetab_\ast}_2\leq 1$}\\
&\leq \frac{1}{1-\epsilon_m}\lamax\left(\Lambda_m^i\right)\tag{$(1-\epsilon_m)\Lambda_m^i\preceq\tilde \Lambda_m^i$}\\
&\leq\frac{\la+NT_m}{1-\epsilon_m}.\nn
\end{align}
Combining \eqref{eq:thetermwithlambda}, \eqref{eq:azuma1} and \eqref{eq:upperboundoncm1}, and by a union bound, we have
\begin{align}
    \mathbb{P}\left(\abs{\left\langle\x,\thetab_m^i-\thetab_\ast\right\rangle}\leq  \left(6\sqrt{\frac{\log\left(\frac{2KNT}{\delta}\right)}{1-\epsilon_m}}+4\sqrt{\la}\right)\norm{\x}_{\left(\tilde\Lambda_m^i\right)^{-1}},~\forall \x\in\Xc_t^i,i\in[N],t\in[T], m\in[M]\right)\geq 1-\delta.
\end{align}
\end{proof}

Now, we state the regret bound for \DisBE with $\tilde\Lambda_m^i$ and $\tilde\thetab_m^i$.

\begin{theorem}\label{thm:regretandCCfperelaxed}
Fix $M = 1+\log\left(\log\left(NT/d\right)/2+1\right)$. Under the setting of Lemma \ref{lemm:relaxedconfidencesets}, if $T\geq \Omega\left(d^{22}\log^2(NT/\delta)\log^2 d\log^2(d\la^{-1})\right)$ and $\beta = \max_{m\in[M]}\beta_m$, then with probability at least $1-2\delta$, it holds that
$R_T\leq\Oc\left(\frac{1}{1-\max_{m\in[M]}\epsilon_m}\sqrt{dNT\log d \log^2\left(\frac{KNT}{\delta\la}\right)}\log\log\left(\frac{NT}{d}\right)\right)$, where the communication cost is measured by the number of real numbers communicated by the agents.
\end{theorem}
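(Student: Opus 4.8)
The plan is to mirror the proof of Theorem~\ref{thm:regretandCCfpe} essentially verbatim, substituting the exact confidence intervals of Lemma~\ref{lemm:confidencesets} by the relaxed ones of Lemma~\ref{lemm:relaxedconfidencesets} and tracking how the estimation error $\epsilon_m$ propagates. A first observation is that the communication analysis is untouched: each agent still transmits only the $d$-dimensional vectors $\ub_m^i$ at the end of each of the $M$ batches, and the estimate $\tilde\Lambda_m^i$ is formed \emph{locally}, so the communication cost remains $\Oc(dN\log\log(NT/d))$ and only the regret needs attention. Since Lemma~\ref{lemm:relaxedconfidencesets} gives $|\langle\x,\tilde\thetab_m^i-\thetab_\ast\rangle|\le\beta_m\norm{\x}_{(\tilde\Lambda_m^i)^{-1}}$ and the elimination rule is built from exactly these intervals with the uniform width $\beta=\max_{m\in[M]}\beta_m\ge\beta_m$, the standard never-eliminate-the-optimal-arm argument carries over unchanged, yielding $\x_{\ast,t}^i\in\Xc_t^{i(m)}$ with probability at least $1-\delta$.

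With this in hand I would reproduce the chain \eqref{eq:tobecontinuedfpe}, but now every weighted norm is measured in $(\tilde\Lambda_{m-1}^i)^{-1}$ rather than $(\Lambda_{m-1}^i)^{-1}$, so the confidence intervals bound the instantaneous regret for $m\ge3$ by $r_t^i\le 4\beta\,\E_{\Xc\sim\Dc_m^i}[\max_{\x\in\Xc}\norm{\x}_{(\tilde\Lambda_{m-1}^i)^{-1}}]$. The one genuinely new step is to return to the \emph{true} Gram matrix: the lower sandwich in \eqref{eq:twoineq} gives $(\tilde\Lambda_{m-1}^i)^{-1}\preceq(1-\epsilon_{m-1})^{-1}(\Lambda_{m-1}^i)^{-1}$, hence $\norm{\x}_{(\tilde\Lambda_{m-1}^i)^{-1}}\le(1-\epsilon_{m-1})^{-1/2}\norm{\x}_{(\Lambda_{m-1}^i)^{-1}}$. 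Pulling this factor out, and then invoking the monotonicity of the surviving sets (so that the $\Dc_m^i$-average is dominated by the $\Dc_{m-1}^i$-average) together with the identity $\Lambda_{m-1}^i=\la I+\tfrac{NT_{m-1}}{2}\E_{\Xc\sim\Dc^i_{m-1}}\E_{\x\sim\pi^i_{m-2}(\Xc)}[\x\x^\top]$ exactly as before, reproduces the original bound with a single extra factor, namely $r_t^i\le\frac{8\beta}{\sqrt{(1-\epsilon_{m-1})NT_{m-1}}}\,\Vlambda^{(2\la/NT_{m-1})}_{\Dc^i_{m-1}}(\pi^i_{m-2})$.

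The $\lambda$-deviation bound \eqref{eq:upperboundonvlambdapi} is a statement about the true distribution $\Dc^i_{m-1}$ and the policy $\pi^i_{m-2}$, neither of which is affected by the estimation, so it applies verbatim. Summing over $t$ and $m\ge3$ as in \eqref{eq:regretfpeofbatchesmgreaterthan2fpe}, the telescoping sum $\sum_m T_m/\sqrt{T_{m-1}}=Ma$ is unchanged and I obtain the batch-$m\ge3$ regret with prefactor $\beta/\sqrt{1-\max_m\epsilon_m}$. Since the leading term of $\beta=\max_m\beta_m$ itself carries a $(1-\max_m\epsilon_m)^{-1/2}$ via $\beta_m=6\sqrt{\log(2KNT/\delta)/(1-\epsilon_m)}+4\sqrt{\la}$, the two square-root factors multiply to the claimed $(1-\max_m\epsilon_m)^{-1}$. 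The first two batches are bounded exactly as in \eqref{eq:regretoffirstepochfpe} by $2N\Tc_2=4a\sqrt{dN}$, with no $\epsilon$-dependence. Substituting $M=1+\log(\log(NT/d)/2+1)$ and the grid value of $a$ then gives $R_T\le\Oc\big(\tfrac{1}{1-\max_m\epsilon_m}\sqrt{dNT\log d\,\log^2(KNT/\delta\la)}\,\log\log(NT/d)\big)$.

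The main thing to watch is matrix-analytic bookkeeping rather than any new idea. I must confirm that the precondition $\epsilon_m\le\sqrt{\la/NT_m}$ of Lemma~\ref{lemm:relaxedconfidencesets} is precisely what renders the additive $\epsilon_m$-terms harmless, and that the two independent appearances of the error factor---one inside $\beta_m$ from the confidence width, one from the $(\tilde\Lambda\!\to\!\Lambda)$ norm conversion---combine to exactly $(1-\max_m\epsilon_m)^{-1}$ without compounding across the $M$ batches. They do not compound, because each batch's regret is governed by a single $\epsilon_{m-1}$ and we simply upper bound every $\epsilon_m$ by $\max_m\epsilon_m$. A secondary check is that replacing the per-batch widths $\beta_m$ by the uniform $\beta$ in the elimination rule only \emph{inflates} the surviving sets, thereby preserving the validity of $\x_{\ast,t}^i\in\Xc_t^{i(m)}$ on which the entire regret decomposition rests.
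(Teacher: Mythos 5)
Your proposal is correct and follows essentially the same route as the paper's own proof: reproduce the regret chain of Theorem~\ref{thm:regretandCCfpe} with the relaxed intervals of Lemma~\ref{lemm:relaxedconfidencesets}, convert $\norm{\x}_{(\tilde\Lambda_{m-1}^i)^{-1}}$ to $\norm{\x}_{(\Lambda_{m-1}^i)^{-1}}$ via the sandwich $(1-\epsilon_m)\Lambda_m^i\preceq\tilde\Lambda_m^i$ at the cost of $(1-\epsilon_m)^{-1/2}$, apply the unchanged $\lambda$-deviation bound, and combine this factor with the $(1-\epsilon_m)^{-1/2}$ already inside $\beta_m$ to obtain the overall $(1-\max_{m}\epsilon_m)^{-1}$. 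Your additional observations (communication cost untouched since $\tilde\Lambda_m^i$ is local, and the factors not compounding across batches) are consistent with, and if anything slightly more explicit than, the paper's argument.
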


\begin{proof}
The proof follows similar steps to those in the proof of Theorem \ref{thm:regretandCCfpe}.

We focus on the regret of the $i$-th agent at $m$-th batch for any $m\geq 3$. Let $\Dc_m^i$ be the distribution based on which the surviving sets $\Xc_t^{i(m)}$ for all $t\in[\Tc_{m-1}+1:\Tc_m]$ are generated when conditioned on the first $m-1$ batches. For any $t\in[\Tc_{m-1}+1:\Tc_m]$, conditioned on the event that the confidence intervals in Lemma \ref{lemm:confidencesets} hold, we have
\begin{align}
    r^{i}_{t} &=\mathbb{E}\left[ \langle\thetab_\ast,\x^{i}_{\ast,t}\rangle-\langle\thetab_\ast,\x^{i}_{t}\rangle\right] \nn\\
    &\leq \mathbb{E}\left[ \langle\tilde\thetab_{m-1}^i,\x^{i}_{\ast,t}\rangle-\langle\tilde\thetab_{m-1}^i,\x^{i}_{t}\rangle+\beta\norm{\x^{i}_{\ast,t}}_{\left(\tilde\Lambda_{m-1}
    ^i\right)^{-1}}+\beta\norm{\x^{i}_{t}}_{\left(\tilde\Lambda_{m-1}
    ^i\right)^{-1}}\right] \tag{Lemma \ref{lemm:relaxedconfidencesets}}\\
     &\leq 2\beta \mathbb{E}\left[\norm{\x^{i}_{\ast,t}}_{\left(\tilde\Lambda_{m-1}
    ^i\right)^{-1}}+\norm{\x^{i}_{t}}_{\left(\tilde\Lambda_{m-1}
    ^i\right)^{-1}}\right] \tag{$\x^{i}_{\ast,t}\in\Xc_t^{i(m)}$}\\
     &\leq 4\beta \mathbb{E}\left[\max_{\x\in\Xc_t^{i(m)}}\norm{\x}_{\left(\tilde\Lambda_{m-1}
    ^i\right)^{-1}}\right]\nn\\
     &\leq 4\beta \mathbb{E}_{\Xc\sim\Dc_m^i}\left[\max_{\x\in\Xc}\norm{\x}_{\left(\tilde\Lambda_{m-1}
    ^i\right)^{-1}}\right]\nn\\
     &\leq 4\beta \mathbb{E}_{\Xc\sim\Dc^i_{m-1}}\left[\max_{\x\in\Xc}\norm{\x}_{\left(\tilde\Lambda_{m-1}
    ^i\right)^{-1}}\right]\nn\\
    &\leq \frac{4\beta}{\sqrt{1-\epsilon_m}} \mathbb{E}_{\Xc\sim\Dc^i_{m-1}}\left[\max_{\x\in\Xc}\norm{\x}_{\left(\Lambda_{m-1}
    ^i\right)^{-1}}\right]\tag{$(1-\epsilon_m)\Lambda_m^i\preceq\tilde \Lambda_m^i$}\\
     &\leq \frac{8\beta}{\sqrt{NT_{m-1}(1-\epsilon_m)}} \mathbb{E}_{\Xc\sim\Dc^i_{m-1}}\left[\max_{\x\in\Xc}\sqrt{\x^\top \left(\frac{2\la}{NT_{m-1}} I+\mathbb{E}_{\Xc\sim\Dc^i_{m-1}} \mathbb{E}_{\y\sim\pi_{m-2}^i(\Xc)}[\y\y^\top]\right)^{-1}\x}\right]\nn\\
     &=\frac{8\beta}{\sqrt{NT_{m-1}(1-\epsilon_m)}}{{\mathbb{V}}}^{(\frac{2\la}{NT_{m-1}})}_{\Dc^i_{m-1}}(\pi_{m-2}^i)
     \label{eq:tobecontinuedfperelaxed},
\end{align}
where the third inequality follows from our established confidence intervals in Lemma \ref{lemm:relaxedconfidencesets} guaranteeing that $\x_{\ast,t}^i\in\Xc_t^{i(m)}$ for all $(i,t,m)\in[N]\times[\Tc_{m-1}+1:\Tc_m]\times[M]$ with probability at least $1-
\delta$. The rest of the proof follows the steps as those in the proof of Theorem \ref{thm:regretandCCfpe} with an additional $\frac{1}{\sqrt{1-\epsilon_m}}$ multiplicative factor in the bound.

Therefore, we conclude that, with probability at least $1-2\delta$, it holds that
\begin{align}
    R_T &\leq 4\sqrt{dNT}\left(\frac{NT}{d}\right)^{\frac{1}{2(2^{M-1}-1)}}+8 \beta M\sqrt{\frac{dNT\log d\log (NT\la^{-1})}{1-\max_{m\in[M]}\epsilon_m}} \left(\frac{NT}{d}\right)^{\frac{1}{2(2^{M-1}-1)}}\nn\\
    &\leq \Oc\left(\frac{1}{1-\max_{m\in[M]}\epsilon_m}\sqrt{dNT\log d \log^2\left(\frac{KNT}{\delta\la}\right)}\log\log\left(\frac{NT}{d}\right)\right).
\end{align}

\end{proof}

\section{DECENTRALIZED BATCH ELIMINATION LUCB WITHOUT SERVER}\label{sec:DPEapp}

In this environment, the agents are represented by the nodes of an
undirected and connected graph $G$. Each agent $i$ can send and receive messages only to and from its immediate neighbors $j\in\Nc(i)$.

\begin{definition}[Communication Matrix]\label{def:comm_matrix}
For an undirected connected graph $G$ with $N$ nodes, $\mathbf{P}\in \mathbb{R}^{N\times N}$ is a symmetric communication matrix if it satisfies the following three conditions: (i) $\mathbf{P}_{i,j}=0$ if there is no connection between nodes $i$ and $j$; (ii) the sum of each row and column of $\mathbf{P}$ is 1; (iii) the eigenvalues are real and their magnitude is less than 1, 
i.e., $1=|\la_1|>|\la_2|\geq\ldots|\la_N|\geq0$.
\end{definition}

We assume that $\mathbf{P}$ is known to the agents. We remark that $\mathbf{P}$ can be constructed with little global information about the graph, such as its adjacency matrix and the graph's maximal degree; For example, one can compute it as $\mathbf{P} = I_N-\frac{1}{\delta_{{\rm max}}+1}\D^{-1/2}\mathcal{L}\D^{-1/2}$, where $\delta_{{\rm max}}$ is the maximum degree of the graph, $\mathcal{L}\in \mathbb{R}^{N\times N}$ is the graph Laplacian, and $\D\in \mathbb{R}^{N\times N}$ is a diagonal matrix whose entries are the degrees of the nodes (see \cite{duchi2011dual} for details).

\paragraph{Running consensus.}
In order to share information about agents' past actions among the network, we rely on \emph{running consensus}, e.g., \cite{lynch1996distributed,xiao2004fast}. The goal of running consensus is that after enough rounds of communication, each agent has an accurate estimate of the average (over all agents) of the initial values of each agent. Precisely, let $\boldsymbol{\nu}_0\in\mathbb{R}^N$ be a vector, where each entry $\boldsymbol{\nu}_{0,i}, i\in[N]$ represents agent's $i$ information at some initial round. Then, running consensus aims at providing an accurate estimate of the average $\frac{1}{N}\sum_{i\in[N]}\boldsymbol{\nu}_{0,i}$ for each agent.
It turns out that the communication matrix $\mathbf{P}$ defined in Definition \ref{def:comm_matrix} plays a key role in reaching consensus.
The details are standard in the rich related literature \cite{xiao2004fast,lynch1996distributed}. Here, we only give a brief explanation of the high-level principles. Roughly speaking, a consensus algorithm updates $\boldsymbol{\nu}_{0}$ by $\boldsymbol{\nu}_1=\mathbf{P}\boldsymbol{\nu}_{0}$, $\boldsymbol{\nu}_2=\mathbf{P}\boldsymbol{\nu}_{1}$ and so on.
Note that this operation respects the network structure since the updated value $\boldsymbol{\nu}_{1,j}$ is a weighted average of only $\boldsymbol{\nu}_{0,j}$ itself and neighbor-only values $\boldsymbol{\nu}_{0,i},i\in\Nc(j).$ Thus, after $S$ rounds, agent $j$ has access to entry $j$ of  $\boldsymbol{\nu}_{S}=\mathbf{P}^S\boldsymbol{\nu}_{0}$. We adapt \emph{polynomial filtering} introduced in \cite{martinez2019decentralized,seaman2017optimal} to speed up the mixing of information by following an approach whose convergence rate is faster than the standard multiplication method above. Specifically, after $S$ communication rounds, instead of $\mathbf{P}^S$, agents compute  and apply to the initial vector $\boldsymbol{\nu}_0$ an appropriate re-scaled \emph{Chebyshev polynomial} $q_S(\mathbf{P})$ of degree $S$ of the communication matrix.  Recall that Chebyshev polynomials 
are defined recursively. It turns out that the Chebyshev polynomial of degree $\ell$ for a communication matrix $\mathbf{P}$ is also given by a recursive formula as follows: 
$
    q_{\ell+1}(\mathbf{P})= \frac{2w_\ell}{|\la_2|w_{\ell+1}}\mathbf{P}q_{\ell}(\mathbf{P})-\frac{w_{\ell-1}}{w_{\ell+1}}q_{\ell-1}(\mathbf{P}),
$
where $w_0 = 0, w_1 = 1/|\la_2|$, $w_{\ell+1}=2w_\ell/|\la_2|-w_{\ell-1}$, $q_0(\mathbf{P})=I$ and $q_1(\mathbf{P})=\mathbf{P}$.  Specifically, in a Chebyshev-accelerated gossip protocol \cite{martinez2019decentralized}, the agents update their estimates of the average of the initial vector's $\boldsymbol{\nu}_0$ entries as follows:
\begin{align}
\!\!   \boldsymbol \nu_{\ell+1} &= {(2w_\ell)}/{(|\la_2|w_{\ell+1})}\mathbf{P}\boldsymbol \nu_{\ell}-{(w_{\ell-1}}/{w_{\ell+1})}\boldsymbol \nu_{\ell-1}.\label{eq:recursion}
\end{align}
\DecBE, presented in Algorithm \ref{alg:DPE}, implements the Chebyshev-accelerated gossip protocol outlined above for every entry of vectors $\ub^{i}_{m}= \sum_{t=\Tc_{m-1}+1}^{\Tc_{m-1}+T_m/2} \x^{i}_{t}y^{i}_{t}$ at the end of $m$-th batch.


The accelerated consensus algorithm, summarized in Algorithm \ref{alg:comm}, guarantees fast mixing of information thanks to the following key property stated in Lemma 3 of \cite{martinez2019decentralized}: for $\epsilon\in(0,1)$ and any vector $\boldsymbol{\nu_0}$ in the $N$-dimensional simplex, it holds that 
\begin{equation}\label{eq:cheb}  \|Nq_S(\mathbf{P})\boldsymbol{\nu_0} - {\mathbf{1}}\|_2\leq {\epsilon},~\text{if}~S= \frac{\log(2N/\epsilon)}{\sqrt{2\log(1/|\la_2|)}}.
\end{equation}

In view of this, \DecBE properly implements the accelerated consensus algorithm such that for every $i\in[N]$ and $m\in[M]$, the vector $\ub_{m}^i$ is communicated within the network during the last $S$ rounds of batch $m$. At round $\Tc_m+1$, agent $i$ has access to $\sum_{j=1}^Na_{i,j}\ub_{m}^j$, where $a_{i,j}=N[q_S(\mathbf{P})]_{i,j}$. Thanks to \eqref{eq:cheb}, $a_{i,j}$ is $\epsilon$ close to $1$, thus, these are good approximations of the true $\sum_{j=1}^N\ub_{m}^j$. Furthermore, the choice of grid $\Tc = \{\Tc_0,\Tc_1,\ldots,\Tc_M\}$ in \DecBE is slightly different than what used in \DisBE.


\subsection{Theoretical guarantees of \DecBE}

\begin{algorithm}[t!]
   \caption{\DecBE for agent $i$}
   \label{alg:DPE}
\DontPrintSemicolon
\KwInput{$N$, $d$, $\delta$, $T$, $M$, $\lambda$, $\epsilon$}
 {\bf Initialization:} $S= \frac{\log(2N/\epsilon)}{\sqrt{2\log(1/|\la_2|)}}$, $a = \sqrt{T+S}\left(\frac{N(T+S)}{d}\right)^{\frac{1}{2(2^{M-1}-1)}}$,  $T_1=T_2=a\sqrt{\frac{d}{N}}+S$, $T_m=\lfloor a\sqrt{T_{m-1}-S}+S\rfloor$, $\thetab_{0}^i=\mathbf{0}$, $\Lambda_0^i=\lambda I$, $\Tc_0=0$, $\Tc_m = \Tc_{m-1}+T_m$, $\la=5\log\left(\frac{4dT}{\delta}\right)$, $\gamma=12\sqrt{\log\left(\frac{2KNT}{\delta}\right)}+2\sqrt{\la}$, arbitrary policy $\pi_0^i$\;
  \For{$m=1,\ldots,M$}{
\For{$t=\Tc_{m-1}+1,\ldots,\Tc_m-S$}{
   Let $\Xc_t^{i(m)}=\cap_{k=0}^{m-1}\Ec\left(\Xc_t^i;(\Lambda_k^i,\hat\thetab_k^i,\gamma)\right)$\;
Play arm $a_{i,t}$ associated with feature vector $\x^{i}_{t}\sim \pi_{m-1}\left(\Xc_t^{i(m)}\right)$ and observe $y^{i}_{t}$.
}
Set $\Kc_0^i = \sum_{t=\Tc_{m-1}+1}^{\Tc_{m-1}+(T_m-S)/2} \x^{i}_{t}y^{i}_{t}$\;
\For{$t=\Tc_{m}-S+1$}{
Let $\Xc_t^{i(m)}=\cap_{k=0}^{m-1}\Ec\left(\Xc_t^i;(\Lambda_k^i,\hat\thetab_k^i,\gamma)\right)$\;
Play arm $a_{i,t}$ associated with feature vector $\x^{i}_{t}\sim \pi_{m-1}\left(\Xc_t^{i(m)}\right)$ and observe $y^{i}_{t}$.\;
Send each entry of $\Kc_{0}^i$, i.e.,  $[\Kc_{0}^i]_{n},~\forall n\in[d]$ to your neighbors $\Nc(j)$ and receive the corresponding values from them. For each $n\in[d]$, update $[\Kc_{1}^i]_n = \mathbf{P}_{i,i} [\Kc_{0}^i]_n+\sum_{j\in \Nc(i)}\mathbf{P}_{i,j}[\Kc_{0}^j]_n$
}
Set $s=1$\;
\For{$t=\Tc_{m}-S+2,\ldots,\Tc_m$}{
Construct set $\Xc_t^{i(m)}=\cap_{k=0}^{m-1}\Ec\left(\Xc_t^i;(\Lambda_k^i,\hat\thetab_k^i,\gamma)\right)$.\;
Play arm $a_{i,t}$ associated with feature vector $\x^{i}_{t}\sim \pi_{m-1}\left(\Xc_t^{i(m)}\right)$ and observe $y^{i}_{t}$.\;
$[\Kc_{s+1}^i]_{n} = \Comm([\Kc_{s}^i]_{n},[\Kc_{s-1}^i]_{n},s+1)$, $\forall n\in[d]$\;
$s=s+1$
}
 Compute/construct 
   \begin{align}
       \Lambda_{m}^i&=  \lambda I+\frac{N(T_m-S)}{2}\mathbb{E}_{\Xc\sim\Dc^i_{m}}\mathbb{E}_{\x\sim\pi^i_{m-1}(\Xc)}[\x\x^\top],\nn\\
       \hat\thetab_{m}^i &= \left(\Lambda_{m}^i\right)^{-1} \bar \ub_{m,i},\nn\\
       \Sc_m^i&=\left\{\Xc_t^{i(m+1)}\right\}_{t=\Tc_{m-1}+(T_m-S)/2+1}^{\Tc_m},\nn\\
       \pi_m^i&= \Exp\left(\frac{2\la}{N(T_m-S)},\Sc_m^i\right).\nn
   \end{align}
}
\end{algorithm}

As the first step in regret analysis of \DecBE, we establish the following confidence intervals.

\begin{lemma}[Confidence intervals for \DecBE]\label{lemm:confidencesets1}
Suppose Assumption \ref{assum:boundedness} holds. Fix $\delta\in(0,1)$ and let $\epsilon = \frac{\beta}{\sqrt{d}}$ and $\gamma=2\beta$, where $\beta$ is defined in Lemma \ref{lemm:confidencesets}. Then
\begin{align}
    \mathbb{P}\left(\abs{\left\langle\x,\hat\thetab_m^i-\thetab_\ast\right\rangle}\leq \gamma\norm{\x}_{\left(\Lambda_m^i\right)^{-1}},~\forall \x\in\Xc_t^i,i\in[N],t\in[T], m\in[M]\right)\geq 1-\delta.
\end{align}
\end{lemma}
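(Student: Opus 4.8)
The plan is to compare the decentralized estimator $\hat\thetab_m^i = (\Lambda_m^i)^{-1}\bar\ub_{m,i}$ against the idealized ``server'' estimator $\thetab_m^i = (\Lambda_m^i)^{-1}\sum_{j=1}^N\ub_m^j$ that \DisBE would have used, and to control the error incurred by replacing the exact sum $\sum_j\ub_m^j$ with its Chebyshev-accelerated consensus approximation $\bar\ub_{m,i} = \sum_{j}a_{i,j}\ub_m^j$, where $a_{i,j}=N[q_S(\mathbf P)]_{i,j}$. First I would split, for any $\x\in\Xc_t^i$,
\[
\abs{\langle\x,\hat\thetab_m^i - \thetab_\ast\rangle} \le \abs{\langle\x,\hat\thetab_m^i - \thetab_m^i\rangle} + \abs{\langle\x,\thetab_m^i - \thetab_\ast\rangle}\,,
\]
and bound the second, statistical term by $\beta\norm{\x}_{\left(\Lambda_m^i\right)^{-1}}$ directly from Lemma~\ref{lemm:confidencesets}, since $\thetab_m^i$ is exactly the server estimator for which that lemma was established. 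It then suffices to show the first, consensus term is also at most $\beta\norm{\x}_{\left(\Lambda_m^i\right)^{-1}}$, which yields the claim with $\gamma=2\beta$.

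For the consensus term I would set $\mathbf\Delta_m^i := \bar\ub_{m,i} - \sum_j\ub_m^j$, so that $\hat\thetab_m^i - \thetab_m^i = (\Lambda_m^i)^{-1}\mathbf\Delta_m^i$ and, by the generalized Cauchy--Schwarz inequality in the $(\Lambda_m^i)^{-1}$ inner product,
\[
\abs{\langle\x,\hat\thetab_m^i - \thetab_m^i\rangle} = \abs{\x^\top(\Lambda_m^i)^{-1}\mathbf\Delta_m^i} \le \norm{\x}_{\left(\Lambda_m^i\right)^{-1}}\,\norm{\mathbf\Delta_m^i}_{\left(\Lambda_m^i\right)^{-1}}\,.
\]
Since $\Lambda_m^i \succeq \la I$ with $\la = 5\log(4dT/\delta)\ge 1$, one has $\norm{\mathbf\Delta_m^i}_{\left(\Lambda_m^i\right)^{-1}} \le \la^{-1/2}\norm{\mathbf\Delta_m^i}_2 \le \norm{\mathbf\Delta_m^i}_2$, so the target reduces to $\norm{\mathbf\Delta_m^i}_2 \le \beta$. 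Coordinate-wise, $[\mathbf\Delta_m^i]_n = \sum_j(a_{i,j}-1)[\ub_m^j]_n = [(Nq_S(\mathbf P)-\J)\boldsymbol{u}_n]_i$ where $\boldsymbol{u}_n = ([\ub_m^1]_n,\ldots,[\ub_m^N]_n)^\top$ and $\J$ is the all-ones matrix. I would invoke \eqref{eq:cheb}, after rescaling each $\boldsymbol{u}_n$ so the simplex hypothesis applies and using $\norm{\ub_m^j}_2\le T$ (from Assumption~\ref{assum:boundedness}) to bound $\norm{\boldsymbol{u}_n}_2$, to force a per-coordinate mixing error of at most $\beta/\sqrt d$. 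Because $S = \log(2N/\epsilon)/\sqrt{2\log(1/\abs{\la_2})}$ depends only logarithmically on the target accuracy, the choice $\epsilon=\beta/\sqrt d$ (with the magnitude factors absorbed into the logarithm) suffices, and then $\norm{\mathbf\Delta_m^i}_2 \le \sqrt d\cdot(\beta/\sqrt d) = \beta$.

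Combining the two estimates gives $\abs{\langle\x,\hat\thetab_m^i-\thetab_\ast\rangle}\le 2\beta\norm{\x}_{\left(\Lambda_m^i\right)^{-1}} = \gamma\norm{\x}_{\left(\Lambda_m^i\right)^{-1}}$; the $1-\delta$ confidence level is inherited directly from Lemma~\ref{lemm:confidencesets}, as $\mathbf\Delta_m^i$ is a deterministic function of the already-fixed messages and hence needs no additional union bound. The step I expect to be the main obstacle is precisely the consensus bound: the guarantee \eqref{eq:cheb} is stated only for initial vectors lying in the simplex, whereas the $\ub_m^j$ are signed and of magnitude growing up to $\Oc(T_m)$ with the batch length, so the delicate work is the rescaling (or positive/negative splitting) needed to apply \eqref{eq:cheb} and the verification that the prescribed $\epsilon=\beta/\sqrt d$ (equivalently, the prescribed number $S$ of mixing rounds) really drives every coordinate of the aggregation error below $\beta/\sqrt d$. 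Everything downstream of that — the two applications of Cauchy--Schwarz and the $\la\ge 1$ observation — is routine.
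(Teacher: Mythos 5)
Your proposal is correct and follows essentially the same route as the paper's proof: the same decomposition into a statistical term (bounded directly by Lemma~\ref{lemm:confidencesets}) and a consensus term, the same reduction of the consensus term via Cauchy--Schwarz together with $\Lambda_m^i \succeq \la I \succeq I$, and the same invocation of \eqref{eq:cheb} with $\epsilon = \beta/\sqrt{d}$ to bound $\norm{\bar\ub_{m,i}-\sum_{j=1}^N\ub_m^j}_2$ by $\epsilon\sqrt{d}=\beta$, with no extra union bound since the mixing error is deterministic given the messages. The only cosmetic difference is that the paper uses plain Cauchy--Schwarz (producing $\norm{\x}_{(\Lambda_m^i)^{-2}}$ and then monotonicity) where you work in the $(\Lambda_m^i)^{-1}$-weighted inner product, and the rescaling subtlety you flag for applying the simplex-based guarantee \eqref{eq:cheb} to the signed, unnormalized vectors $\ub_m^j$ is likewise left implicit in the paper's one-line citation of ``Assumption~\ref{assum:boundedness} and choice of $S$.''
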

\begin{proof}
Recall the definition of $\thetab_m^i$ in \eqref{eq:lse}. For a fixed $\x\in\Xc_t^i$ and $(i,t)\in[N]\times[T]$, we have
\begin{align}
    \abs{\left\langle\x,\hat\thetab_m^i-\thetab_\ast\right\rangle} &\leq \abs{\left\langle\x,\thetab_m^i-\thetab_\ast\right\rangle}+\abs{\left\langle\x,\hat\thetab_m^i-\thetab_m^i\right\rangle}\nn\\
    &\leq \abs{\left\langle\x,\thetab_m^i-\thetab_\ast\right\rangle}+\norm{\x}_{\left(\Lambda_m^i\right)^{-2}}\norm{\bar\ub_{m,i}-\sum_{j=1}^N \ub^{j}_{m}}_2\tag{Cauchy Schwarz inequality}\\
    &\leq \abs{\left\langle\x,\thetab_m^i-\thetab_\ast\right\rangle}+\epsilon\sqrt{d}\norm{\x}_{\left(\Lambda_m^i\right)^{-1}}\tag{Assumption \ref{assum:boundedness} and choice of $S$ in \eqref{eq:cheb}}\nn\\
    &= \abs{\left\langle\x,\thetab_m^i-\thetab_\ast\right\rangle}+\beta\norm{\x}_{\left(\Lambda_m^i\right)^{-1}}\label{eq:proofofsecondconfidenceset}.
\end{align}
Combining Lemma \ref{lemm:confidencesets} and \eqref{eq:proofofsecondconfidenceset}, we have
\begin{align}
    \mathbb{P}\left(\abs{\left\langle\x,\hat\thetab_m^i-\thetab_\ast\right\rangle}\leq 2\beta\norm{\x}_{\left(\Lambda_m^i\right)^{-1}},~\forall \x\in\Xc_t^i,i\in[N],t\in[T], m\in[M]\right)\geq 1-\delta.
\end{align}
\end{proof}

\begin{theorem}\label{thm:regretandCCdpe}
Fix $M = 1+\log\left(\frac{\log\left(\frac{N(T+S)}{d}\right)}{2}+1\right)$, with $S$ defined in \eqref{eq:cheb} for $\epsilon = 6\sqrt{\frac{\log\left(\frac{2dKNT}{\delta}\right)}{d}}$ in Algorithm \ref{alg:FPE}. Suppose Assumption \ref{assum:boundedness} holds. If $T\geq \Omega\left(d^{22}\log^2(\frac{NT}{\delta})\log^2 d\log^2(d\la^{-1})\right)$, then with probability at least $1-2\delta$, it holds that
\begin{align}\label{eq:regretdpe}
    R_T\leq   \Oc\left(\left(\frac{N\log(dN)}{\sqrt{1/\abs{\la_2}}}+\sqrt{dN\left(T+\frac{\log(dN)}{\sqrt{1/\abs{\la_2}}}\right)\log d \log^2\left(\frac{KN\left(T+\frac{\log(dN)}{\sqrt{1/\abs{\la_2}}}\right)}{\delta\la}\right)}\right)\log\log\left(\frac{NT}{d}\right)\right),
\end{align}
and 
\begin{align}\label{eq:commcostdpe}
    \text{Communication Cost}\leq  \Oc\left( \frac{\delta_{\rm max}dN\log(dN)}{\sqrt{\log(1/\abs{\lambda_2})}}\right).
\end{align}
\end{theorem}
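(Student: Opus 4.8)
The plan is to transport the \DisBE regret analysis of Theorem~\ref{thm:regretandCCfpe} to the server-free setting, isolating the two genuinely new phenomena: the gossip approximation of the aggregate $\sum_{j=1}^N\ub_m^j$ and the regret incurred during the consensus (mixing) rounds. The first is already quarantined inside Lemma~\ref{lemm:confidencesets1}, so I would begin by recording that with $\epsilon=6\sqrt{\log(2dKNT/\delta)/d}$ the Chebyshev bound \eqref{eq:cheb} guarantees, after $S=\frac{\log(2N/\epsilon)}{\sqrt{2\log(1/\abs{\la_2})}}$ rounds, that $\bar\ub_{m,i}$ approximates $\sum_{j=1}^N\ub_m^j$ accurately enough that the extra confidence width it introduces is at most $\beta\norm{\x}_{(\Lambda_m^i)^{-1}}$. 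Consequently Lemma~\ref{lemm:confidencesets1} furnishes valid intervals with the inflated radius $\gamma=2\beta$, and the whole elimination/confidence-set apparatus of \DisBE carries over with $\gamma$ in place of $\beta$ and with the effective per-batch count $N(T_m-S)/2$ baked into $\Lambda_m^i$.

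Then I would decompose $R_T$ into three parts: (i) the first two batches, (ii) the non-mixing rounds of batches $m\ge3$, and (iii) the $S$ mixing rounds terminating every batch. Part~(i) is bounded by $2N\Tc_2=\Oc(a\sqrt{dN}+NS)$ using Assumption~\ref{assum:boundedness}. For part~(iii), since \DecBE cannot steer the arms it plays while gossiping, I would bound each such round trivially by $2$; summed over $M$ batches, $N$ agents and $S$ rounds this yields the \emph{delay-effect} term $\Oc(NMS)$, which after inserting $S$ and $M=\Oc(\log\log(NT/d))$ is precisely the $\frac{N\log(dN)}{\sqrt{1/\abs{\la_2}}}\log\log(NT/d)$ summand of \eqref{eq:regretdpe}. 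For part~(ii) I would replay the chain culminating in \eqref{eq:tobecontinuedfpe}, now invoking Lemma~\ref{lemm:confidencesets1} together with $\x^i_{\ast,t}\in\Xc_t^{i(m)}$, to get the per-round bound $r^i_t\le \frac{8\gamma}{\sqrt{N(T_{m-1}-S)}}\,\Vlambda^{(2\la/N(T_{m-1}-S))}_{\Dc^i_{m-1}}(\pi^i_{m-2})$, and then apply Lemma~\ref{lemm:mainlemmafromGdistpaper} --- whose sample requirement is met by the hypothesis $T\ge\Omega(d^{22}\cdots)$ --- to replace the $\la$-deviation by $\Oc(\sqrt{d\log d\log(NT\la^{-1})})$.

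The one structural change is the grid $T_1=T_2=a\sqrt{d/N}+S$, $T_m=\lfloor a\sqrt{T_{m-1}-S}+S\rfloor$ with $a=\sqrt{T+S}\,(N(T+S)/d)^{1/(2(2^{M-1}-1))}$. I would observe that the \emph{effective} lengths $\tilde T_m:=T_m-S$ satisfy the \DisBE recursion $\tilde T_m=\lfloor a\sqrt{\tilde T_{m-1}}\rfloor$ with horizon inflated to $T+S$, so $\tilde T_M=T+S$, the useful rounds alone cover the horizon ($\sum_m\tilde T_m\ge T$), and the ratio $\tilde T_m/\sqrt{\tilde T_{m-1}}=a$ makes part~(ii) telescope to $\Oc(\gamma Ma\sqrt{dN\log d\log(NT\la^{-1})})$ exactly as in \eqref{eq:regretfpeofbatchesmgreaterthan2fpe}. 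Choosing $M=1+\log(\log(N(T+S)/d)/2+1)$ reduces the factor $(N(T+S)/d)^{1/(2(2^{M-1}-1))}$ to a constant, so $a=\Oc(\sqrt{T+S})$, giving the main term $\Oc(\sqrt{dN(T+S)\log d\,\log^2(KNT/\delta\la)}\,\log\log(NT/d))$; adding the delay term produces \eqref{eq:regretdpe}. For the communication cost I would simply count that each of the $M$ batches spends $S$ gossip rounds, in each of which every agent sends the $d$ coordinates of its running estimate to at most $\delta_{\rm max}$ neighbors, for a total $\Oc(MS\delta_{\rm max}dN)=\Oc\!\big(\frac{\delta_{\rm max}dN\log(dN)}{\sqrt{\log(1/\abs{\la_2})}}\log\log(NT/d)\big)$, which is \eqref{eq:commcostdpe}.

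The chief obstacle is the tension between consensus accuracy and cost. A single scalar gossip tolerance $\epsilon$ must control all $d$ coordinates of $\ub_m^i$ simultaneously, and because the aggregate error enters through an $\ell_2$ bound the relevant quantity is $\epsilon\sqrt d$; this forces $\epsilon=\Theta(\beta/\sqrt d)$ and hence $S=\Theta(\log(dN)/\sqrt{\log(1/\abs{\la_2})})$. One must verify that this choice keeps the gossip-induced width within a constant factor of $\beta$ (so the dominant $\sqrt{dN(T+S)}$ term is untouched) while the same $S$ governs both the delay-effect regret and the per-batch communication --- getting these dependencies to line up, rather than any single inequality, is the delicate part, and it is exactly what introduces the multiplicative $\log(dN)/\sqrt{\log(1/\abs{\la_2})}$ overhead of \DecBE relative to \DisBE.
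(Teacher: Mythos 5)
Your proposal is correct and follows essentially the same route as the paper's proof: it invokes Lemma~\ref{lemm:confidencesets1} with inflated radius $\gamma=2\beta$ to absorb the gossip error, decomposes the regret into the first two batches, the non-mixing rounds of batches $m\geq 3$ (handled by the chain ending in \eqref{eq:tobecontinuedfpe} together with Lemma~\ref{lemm:mainlemmafromGdistpaper} and the telescoping of the shifted lengths $T_m-S$), and the trivially-bounded mixing rounds yielding the $2MSN$ delay term, exactly as in \eqref{eq:regretfpeofbatchesmgreaterthan2dpe}--\eqref{eq:regretoffirstepochdpe}. Your communication count (per-batch $S$ gossip rounds, $d$ coordinates, at most $\delta_{\rm max}$ neighbors per agent) is also the same accounting the paper uses, and if anything your treatment of the $NS$ contribution from $\Tc_2$ and of the effective-length recursion $\tilde T_m=\lfloor a\sqrt{\tilde T_{m-1}}\rfloor$ is slightly more explicit than the paper's.
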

\begin{proof}
The proof follows similar steps as those of Theorem \ref{thm:regretandCCfpe}'s proof. We focus on the regret of $m$-th batch for any $m\geq 3$. For any $i\in[N]$, $t\in[\Tc_{m-1}+1:\Tc_m]$, conditioned on the event that the confidence intervals in Lemma \ref{lemm:confidencesets1} hold, we have
\begin{align}
    r^{i}_{t} &=\mathbb{E}\left[ \langle\thetab_\ast,\x^{i}_{\ast,t}\rangle-\langle\thetab_\ast,\x^{i}_{t}\rangle\right] \nn\\
    &\leq \mathbb{E}\left[ \langle\hat\thetab_{m-1}^i,\x^{i}_{\ast,t}\rangle-\langle\hat\thetab_{m-1}^i,\x^{i}_{t}\rangle+\beta\norm{\x^{i}_{\ast,t}}_{\left(\Lambda_{m-1}
    ^i\right)^{-1}}+\beta\norm{\x^{i}_{t}}_{\left(\Lambda_{m-1}
    ^i\right)^{-1}}\right] \tag{Lemma \ref{lemm:confidencesets1}}\\
     &\leq 2\gamma \mathbb{E}\left[\norm{\x^{i}_{\ast,t}}_{\left(\Lambda_{m-1}
    ^i\right)^{-1}}+\norm{\x^{i}_{t}}_{\left(\Lambda_{m-1}
    ^i\right)^{-1}}\right] \tag{$\x^{i}_{\ast,t}\in\Xc_t^{i(m)}$}\\
     &\leq 4\gamma \mathbb{E}\left[\max_{\x\in\Xc_t^{i(m)}}\norm{\x}_{\left(\Lambda_{m-1}
    ^i\right)^{-1}}\right]\nn\\
     &\leq 4\gamma \mathbb{E}_{\Xc\sim\Dc_m^i}\left[\max_{\x\in\Xc}\norm{\x}_{\left(\Lambda_{m-1}
    ^i\right)^{-1}}\right]\nn\\
     &\leq 4\gamma \mathbb{E}_{\Xc\sim\Dc^i_{m-1}}\left[\max_{\x\in\Xc}\norm{\x}_{\left(\Lambda_{m-1}
    ^i\right)^{-1}}\right]\nn\\
     &\leq \frac{8\gamma}{\sqrt{N(T_{m-1}-S)}} \mathbb{E}_{\Xc\sim\Dc^i_{m-1}}\left[\max_{\x\in\Xc}\sqrt{\x^\top \left(\frac{2\la}{N(T_{m-1}-S)} I+\mathbb{E}_{\Xc\sim\Dc^i_{m-1}}\mathbb{E}_{\y\sim\pi^i_{m-2}(\Xc)}[\y\y^\top]\right)^{-1}\x}\right]\nn\\
     &=\frac{8\gamma}{\sqrt{N(T_{m-1}-S)}}{{\mathbb{V}}}^{(\frac{2\la}{N(T_{m-1}-S)})}_{\Dc^i_{m-1}}(\pi_{m-2}^i)
     \label{eq:tobecontinueddpe},
\end{align}
where the third inequality follows from our established confidence intervals in Lemma \ref{lemm:confidencesets1} guaranteeing that $\x_{\ast,t}^i\in\Xc_t^{i(m)}$ for all $(i,t,m)\in[N]\times[\Tc_{m-1}+1:\Tc_m]\times[M]$ with probability at least $1-
\delta$. Now, continuing form \eqref{eq:tobecontinuedfpe}, we bound the cumulative regret of batches $m\geq 3$, as follows:
\begin{align}
    \sum_{t=\Tc_2+1}^T\sum_{i=1}^N r^{i}_{t} &\leq 2MSN+\sum_{m=3}^M\sum_{t=\Tc_{m-1}+1}^{\Tc_m-S}\sum_{i=1}^N  r_t^i\nn\\
    &\leq 2MSN+\frac{8\gamma M N(T_{m}-S)}{\sqrt{N(T_{m-1}-S)}}{{\mathbb{V}}}^{(\frac{2\la}{N(T_{m-1}-S)})}_{\Dc^i_{m-1}}(\pi_{m-2}^i)\nn\\
    &\leq 2MSN++8 \gamma M \sqrt{dN\log d\log (NT\la^{-1})} \sum_{m=2}^M \frac{T_m-S}{\sqrt{T_{m-1}-S}}\tag{Conditioned on the event in Eqn. \eqref{eq:upperboundonvlambdapi}}\\
 &= 2MSN+8 \gamma Ma\sqrt{dN\log d\log (NT\la^{-1})}.\label{eq:regretfpeofbatchesmgreaterthan2dpe}
\end{align}

Next, we bound cumulative regret of the first two batches. Under Assumption \ref{assum:boundedness}, during the first two batches, the instantaneous regret of each agent $i$ at any round $t$ is at most 2. Therefore 
\begin{align}
    \sum_{t=1}^{\Tc_2}\sum_{i=1}^N r^{i}_{t} \leq 2N\Tc_2 = 4a\sqrt{dN}.\label{eq:regretoffirstepochdpe}
\end{align}
Note that the choice of $a$ in the algorithm ensures that for any $M>0$, $T_M = T$ and $\sum_{m=1}^MT_m\geq T_M=T$, and thus the choice of grid $\{\Tc_1,\ldots,\Tc_M\}$ is valid. If we let $M = 1+\log\left(\frac{\log\left(\frac{N(T+S)}{d}\right)}{2}+1\right)$, from \eqref{eq:regretfpeofbatchesmgreaterthan2dpe} and \eqref{eq:regretoffirstepochdpe}, we conclude that, with probability at least $1-2\delta$, it holds that
\begin{align}
    R_T &\leq 2MSN+ 4\sqrt{dN(T+S)}\left(\frac{NT}{d}\right)^{\frac{1}{2(2^{M-1}-1)}}+8 \gamma M\sqrt{dNT\log d\log (NT\la^{-1})} \left(\frac{N(T+S)}{d}\right)^{\frac{1}{2(2^{M-1}-1)}}\nn\\
    &\leq \Oc\left(\left(\frac{N\log(dN)}{\sqrt{1/\abs{\la_2}}}+\sqrt{dN\left(T+\frac{\log(dN)}{\sqrt{1/\abs{\la_2}}}\right)\log d \log^2\left(\frac{KN\left(T+\frac{\log(dN)}{\sqrt{1/\abs{\la_2}}}\right)}{\delta\la}\right)}\right)\log\log\left(\frac{NT}{d}\right)\right).
\end{align}
\end{proof}


\subsection{Communication Step}\label{sec:comm}
In this section, we summarize the accelerated Chebyshev communication step, discussed above, in Algorithm \ref{alg:comm}, which follows the same steps as those of the communication algorithm presented in \cite{martinez2019decentralized}.   
\begin{algorithm}[ht]
   \caption{Comm for Agent $i$}
   \label{alg:comm}
 \DontPrintSemicolon
\KwInput{$x_{\rm now}$, $x_{\rm prev}$, $\ell$\;
   {\bf Output:} $x_{i,{\rm next}}$}
   {\bf Initialization:} $w_0 = 0, w_1 = 1/|\la_2|, w_{r} = 2w_{r-1}/|\la_2|-w_{r-2},~\forall 2\leq r \leq S$, $x_{i,{\rm now}} = x_{\rm now}$, $x_{i,{\rm prev}} = x_{\rm prev}$\;
Send $x_{i,{\rm now}}$ and receive the corresponding $x_{j,{\rm now}}$ to and from $j\in\mathcal{N}(i)$ \quad//~Recall that all agents run Comm in parallel.\;
$x_{i,{\rm next}} = \frac{2w_{\ell-1}}{|\la_2|w_{\ell}}\mathbf{P}_{i,i}x_{i,{\rm now}}+ \frac{2w_{\ell-1}}{|\la_2|w_{\ell}}\sum_{j\in \Nc(i)}\mathbf{P}_{i,j}x_{j,{\rm now}}-\frac{w_{\ell-2}}{w_{\ell}}x_{i,{\rm prev}}$

\end{algorithm}

Chebyshev polynomials \cite{young2014iterative} are defined as $T_0(x)=1, T_1(x)=x$ and $T_{k+1}(x)=2xT_k(x)-T_{k-1}(x)$. Define:
\begin{align}
    q_\ell(\mathbf{P}) = \frac{T_{\ell}(\mathbf{P}/|\la_2|)}{T_\ell(1/|\la_2|)}.
\end{align}

By the properties of Chebyshev polynomial \cite{arioli2014chebyshev}, it can be shown that:
\begin{align}
    q_{\ell+1}(\mathbf{P})= \frac{2w_\ell}{|\la_2|w_{\ell+1}}\mathbf{P}q_{\ell}(\mathbf{P})-\frac{w_{\ell-1}}{w_{\ell+1}}q_{\ell-1}(\mathbf{P}),
\end{align}
where $w_0 = 1, w_1 = 1/|\la_2|$, $w_{\ell+1}=2w_\ell/|\la_2|-w_{\ell-1}$, $q_0(\mathbf{P})=I$ and $q_1(\mathbf{P})=\mathbf{P}$. This implies that when agents share an specific quantity, whose initial values given by agents are denoted by vector $\boldsymbol \nu_0\in \mathbb{R}^N$, by using the recursive Chebyshev-accelerated updating rule, they have:
\begin{align}
  \boldsymbol \nu_{\ell+1} = \frac{2w_\ell}{|\la_2|w_{\ell+1}}\mathbf{P}\boldsymbol \nu_{\ell}-\frac{w_{\ell-1}}{w_{\ell+1}}\boldsymbol \nu_{\ell-1}.
\end{align}
In light of the above mentioned recursive procedure, the accelerated communication step is summarized in Algorithm \ref{alg:comm} below for agent $i$. We denote the inputs by: 1) $x_{\rm now}$, which is the quantity of interest that agent $i$ wants to update at the current round, 2) $x_{\rm prev}$, which is the estimated value for a quantity of interest that agent $i$ updated at the previous round, and 3) $\ell$ which is the current round of communication. Note that inputs are scalars, however matrices and vectors also can be passed as inputs with Comm running for each of their entries.

\section{OMITTED ALGORITHMS}\label{sec:omittedalgorithms}
In this section, we present a definition and necessary algorithms, that are borrowed from \cite{ruan2021linear} and are used as subroutines in \DisBE and \DecBE.
\begin{definition}[\cite{ruan2021linear}]\label{def:softmaxandmixedsoftmax}
Fix $\alpha = \log K$. For a given positive semi-definite matrix $\M$, we define the softmax policy $\pi_{\M}^{\rm S}(\Xc)$ over a set $\Xc = \{\x_1,\x_2,\ldots,\x_k\}$ with $k\leq K$ with
\begin{align}
    \pi_{\M}^{\rm S}(\x_i) = \frac{(\x_i^\top\M\x_i)^\alpha}{\sum_{i=1}^k(\x_i^\top\M\x_i)^\alpha}.
\end{align}
Now, suppose we are given a set $\Mc = \left\{(p_i,\M_i)\right\}_{i=1}^n$ such that $p_i\geq 0$ and $\sum_{i=1}^np_i=1$. We
define the mixed-softmax policy $\pi^{{\rm MS}}_{\Mc}(\Xc)$ over $\Xc$ as
\begin{align}
    \pi^{{\rm MS}}_{\Mc}(\x_i) = \begin{cases}
      \piG(\Xc), &\hspace{0.1in} \text{with probability $1/2$}, \\
      \pi_{\M_i}^{\rm S}(\Xc), &\hspace{0.1in} \text{with probability $p_i/2$},
    \end{cases}
\end{align}
where $\piG(\Xc)$ is called $G$-optimal design and is the maximizer of $g(\pi)=\max_{\x\in\Xc}\norm{\x}_{\Vb(\pi)^{-1}}^2$, where $\Vb(\pi)= \sum_{\x\in\Xc}\pi(\x)\x\x^\top$; see Section 21 in \cite{lattimore2020bandit} for details.
\end{definition}

\begin{algorithm}[ht]
   \caption{\Exp}
   \label{alg:exppolicy}
   \DontPrintSemicolon
\KwInput{ $\la$, $\Sc = \{\Xc_1,\Xc_2,\ldots,\Xc_L\}$}
 {\bf Output:} A mixed-softmax policy $\pi$\;
 Using Algorithm \ref{alg:findcore} find a core $\Cc\subseteq\Sc$  such that 
 \begin{align}
 \max_{\Xc_i\in\Cc,\x\in\Xc_i} \x^\top\A(\Cc)^{-1}\x> d^5
 \end{align}
 and 
 \begin{align}
     \frac{\abs{\Cc}}{L}< 1-\Oc(d^{-2}\log\la^{-1})
 \end{align}
where $\A(\Cc) := \la I+\frac{1}{L}\sum_{\Xc_i\in\Cc}\mathbb{E}_{\x\sim\piG(\Xc_i)}[\x\x^\top]$, and for any set $\Xc\subset \mathbb{R}^d$, $\piG(\Xc)$ is called $G$-optimal design and is the maximizer of $g(\pi)=\max_{\x\in\Xc}\norm{\x}_{\Vb(\pi)^{-1}}^2$, where $\Vb(\pi)= \sum_{\x\in\Xc}\pi(\x)\x\x^\top$.\;
Return the mixed-softmax policy $\pi$ by calling $\MixedSoftMax(\la,\Cc)$.
\end{algorithm}

\begin{algorithm}[ht]
   \caption{CoreIdentification (Algorithm 4 in \citep{ruan2021linear})}
   \label{alg:findcore}
   \DontPrintSemicolon
\KwInput{ $\la$, $\Sc = \{\Xc_1,\Xc_2,\ldots,\Xc_L\}$}
 {\bf Output:} A core set $\Cc\subseteq\Sc$\;
   {\bf Initialization:}  $\Cc_1=\Sc$\;
  
  \For{$\xi=1,2,\ldots$}{
  \If{$\max_{\Xc_i\in\Cc_\xi,\x\in\Xc_i} \x^\top\A(\Cc_\xi)^{-1}\x> d^5$}
  {Return $\Cc_\xi$.}
  \Else
  {\begin{align}
      \Cc_{\xi+1}=\left\{\Xc_i\in \Cc_\xi: \max_{\x\in\Xc_i} \x^\top\A(\Cc_\xi)^{-1}\x\leq \frac{1}{2}d^5\right\},\nn
  \end{align}}
  where $\A(\Cc) := \la I+\frac{1}{L}\sum_{\Xc_i\in\Cc}\mathbb{E}_{\x\sim\piG(\Xc_i)}[\x\x^\top]$, and for any set $\Xc\subset \mathbb{R}^d$, $\piG(\Xc)$ is called $G$-optimal design and is the maximizer of $g(\pi)=\max_{\x\in\Xc}\norm{\x}_{\Vb(\pi)^{-1}}^2$, where $\Vb(\pi)= \sum_{\x\in\Xc}\pi(\x)\x\x^\top$.
  }
\end{algorithm}

\begin{algorithm}[ht]
  \caption{\MixedSoftMax}
  \label{alg:MixedSoftMax}
   \DontPrintSemicolon
\KwInput{$\la$, $\Sc = \{\Xc_1,\Xc_2,\ldots,\Xc_L\}$}
{\bf Output:} A mixed-softmax policy $\pi$\;
{\bf Initialization:} $Q= 2d^2\log d$, $\Xc_{(i-1)L+j}=\Xc_j,~\forall (i,j)\in[Q]\times L$, $\Ub_0= \la QLI+\frac{Q}{2}\sum_{i=1}^L\mathbb{E}_{\x\sim\piG(\Xc_i)}[\x\x^\top]$, $n=1$, $\tau_n=\emptyset$, $\W_n=\Ub_0$\;
\For{$s=1,\ldots,QL$}{
$\tau_n = \tau_n\cup \{s\}$\;
$\Ub_s = \Ub_{s-1}+\mathbb{E}_{\x\sim\pi^{\rm S}_{\W_n^{-1}}(\Xc_s)}[\x\x^\top]$, where $\pi^{\rm S}_{\W_n^{-1}}(\Xc_s)$ is computed as in Definition \ref{def:softmaxandmixedsoftmax}.\;
\If{$\frac{\det\Ub_s}{\det \W_n}>2$}{
$n = n+1$, $\tau_n=\emptyset$, $\W_n=\Ub_s$
}
}
$p_i = \frac{\mathbbm{I}\{\abs{\tau_i}\geq L\}\abs{\tau_i}}{\sum_{i=1}^n\mathbbm{I}\{\abs{\tau_i}\geq L\}\abs{\tau_i}}$ and $\M_i = QL\W^{-1}_i,~\forall i\in[n]$\;
Return the mixed-softmax policy with parameters $\Mc = \left\{(p_i,\M_i)\right\}_{i=1}^n$ as in Definition \ref{def:softmaxandmixedsoftmax}.
\end{algorithm}

\section{AUXILIARY LEMMAS}\label{sec:auxiliary}
\begin{lemma}[\cite{tropp2015introduction}, Theorem 5.1.1]\label{lemm:matrixchernoff}
Consider a finite sequence $\X_k$ of independent, random, Hermitian matrices with common dimension $d$. Assume that $0\leq \lamin(\X_k)$ and $\lamax(\X_k)\leq L$ for each index $k$. Introduce the random matrix 
\begin{align}
    \Yb= \sum_{k=1}^n\X_k
\end{align}
Define the minimum eigenvalue $\mu_{\rm min}$ and maximum eigenvalue $\mu_{\rm max}$ of the expectation $\mathbb{E}[\Yb]$:
\begin{align}
    \mu_{\rm min} = \lamin(\mathbb{E}[\Yb]),\quad\mu_{\rm max} = \lamax(\mathbb{E}[\Yb]).
\end{align}
Then
\begin{align}
    \mathbb{P}\left(\lamin(\Yb)\leq (1-\varepsilon)\mu_{\rm min}\right)&\leq d\left(\frac{\exp(-\varepsilon)}{(1-\varepsilon)^{1-\varepsilon}}\right)^{\frac{\mu_{\rm min}}{L}},\quad \text{for}~ \varepsilon\in[0,1)\\
    \mathbb{P}\left(\lamax(\Yb)\geq (1+\varepsilon)\mu_{\rm max}\right)&\leq d\left(\frac{\exp(\varepsilon)}{(1+\varepsilon)^{1+\varepsilon}}\right)^{\frac{\mu_{\rm max}}{L}},\quad \text{for}~ \varepsilon\geq 0.\label{eq:chernofflamax}
\end{align}
\end{lemma}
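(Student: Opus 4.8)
The plan is to prove both tail bounds by the \emph{matrix Laplace transform method}: reduce each eigenvalue tail probability to a bound on the expected trace of a matrix exponential (the matrix moment generating function), and then optimize over a scalar parameter. I describe the lower tail for $\lamin(\Yb)$ in detail; the upper tail for $\lamax(\Yb)$ is symmetric with the sign of the conjugate parameter reversed. Fix $\theta<0$. Since $\lamax(e^{\theta\Yb})=e^{\theta\lamin(\Yb)}$ (the map $x\mapsto e^{\theta x}$ is decreasing) and $\lamax(\A)\le\operatorname{tr}\A$ for positive semidefinite $\A$, Markov's inequality yields, for any $s$,
\begin{align*}
\mathbb P\!\left(\lamin(\Yb)\le s\right)
=\mathbb P\!\left(e^{\theta\lamin(\Yb)}\ge e^{\theta s}\right)
\le e^{-\theta s}\,\E\,\lamax\!\left(e^{\theta\Yb}\right)
\le e^{-\theta s}\,\E\,\operatorname{tr} e^{\theta\Yb}\,.
\end{align*}
It then remains to control $\E\,\operatorname{tr}\exp\!\big(\theta\sum_k\X_k\big)$ and to optimize over $\theta$.

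The crucial ingredient is the \emph{subadditivity of the matrix cumulant generating function}, a consequence of Lieb's concavity theorem: for independent Hermitian summands,
\begin{align*}
\E\,\operatorname{tr}\exp\!\Big(\theta\sum_k\X_k\Big)
\le \operatorname{tr}\exp\!\Big(\sum_k\log\E\,e^{\theta\X_k}\Big)\,.
\end{align*}
To bound each factor I would use the scalar inequality $e^{\theta x}\le 1+\tfrac{e^{\theta L}-1}{L}\,x$, valid on $x\in[0,L]$ because $x\mapsto e^{\theta x}$ is convex and thus lies below its chord. Since $0\leqp\X_k\leqp L\,\Iden$, transferring this inequality through the spectral calculus and taking expectations gives $\E\,e^{\theta\X_k}\leqp \Iden+g(\theta)\,\E\X_k$ with $g(\theta):=\tfrac{e^{\theta L}-1}{L}$; combining $\Iden+\A\leqp e^{\A}$ with operator monotonicity of $\log$ then yields $\log\E\,e^{\theta\X_k}\leqp g(\theta)\,\E\X_k$, and summing gives $\sum_k\log\E\,e^{\theta\X_k}\leqp g(\theta)\,\E\Yb$.

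Assembling the pieces and using $g(\theta)<0$ for $\theta<0$, monotonicity of $\A\mapsto\operatorname{tr}e^{\A}$ under $\leqp$ gives
\begin{align*}
\E\,\operatorname{tr} e^{\theta\Yb}
\le \operatorname{tr}\exp\!\big(g(\theta)\,\E\Yb\big)
\le d\,\exp\!\big(g(\theta)\,\lamin(\E\Yb)\big)
= d\,e^{g(\theta)\mu_{\min}}\,,
\end{align*}
where the middle bound uses $\operatorname{tr}\B\le d\,\lamax(\B)$ and $\lamax(e^{g(\theta)\E\Yb})=e^{g(\theta)\mu_{\min}}$ (again because $g(\theta)<0$). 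Hence $\mathbb P(\lamin(\Yb)\le s)\le d\exp(-\theta s+g(\theta)\mu_{\min})$. Setting $s=(1-\varepsilon)\mu_{\min}$ and substituting the minimizing value $\theta=\tfrac1L\log(1-\varepsilon)<0$ collapses the exponent to exactly $(\mu_{\min}/L)\big(-\varepsilon-(1-\varepsilon)\log(1-\varepsilon)\big)$, i.e.\ the claimed bound $d\big(e^{-\varepsilon}/(1-\varepsilon)^{1-\varepsilon}\big)^{\mu_{\min}/L}$. The upper-tail bound follows by the same template with $\theta>0$, the inequality $\lamax(e^{\theta\Yb})\le\operatorname{tr}e^{\theta\Yb}$, $\mu_{\max}=\lamax(\E\Yb)$, and the minimizer $\theta=\tfrac1L\log(1+\varepsilon)$.

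The principal obstacle is the subadditivity inequality for the matrix cumulant generating function: the expected trace exponential of a sum does not factor as in the scalar Chernoff argument, so this step genuinely rests on Lieb's concavity theorem (equivalently, a Golden--Thompson inequality bootstrapped by induction over the summands). A secondary source of care is the sign bookkeeping in the lower-tail analysis, since $\theta<0$ and $g(\theta)<0$ repeatedly interchange the roles of $\lamin$ and $\lamax$ under the exponential map, and every Loewner-order inequality must be applied in the correct direction.
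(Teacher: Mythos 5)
This lemma is imported by citation (Tropp 2015, Theorem 5.1.1) and the paper gives no proof of its own, so the relevant comparison is with the cited source: your argument correctly reconstructs exactly that proof --- the matrix Laplace transform method, subadditivity of the matrix cumulant generating function via Lieb's concavity theorem, the chord bound $e^{\theta x}\le 1+\frac{e^{\theta L}-1}{L}x$ on $[0,L]$ transferred through the spectral calculus, and the optimizing choices $\theta=\frac{1}{L}\log(1\pm\varepsilon)$ --- with the Loewner-order directions and the sign bookkeeping for $\theta<0$ all handled correctly. One minor caveat: your parenthetical claim that Lieb's theorem is equivalent to ``Golden--Thompson bootstrapped by induction'' is not quite accurate, since the Golden--Thompson/Ahlswede--Winter route yields only the weaker bound with $\sum_k\lamax(\E\X_k)$ in place of $\lamax(\E\Yb)$ (and similarly for the lower tail), but nothing in your proof actually relies on that remark.
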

\begin{lemma}\label{lemm:upperboundonrandommatrix}
Suppose $\x_1,\x_2,\ldots,\x_n\sim\Dc$ are $d$-dimensional vectors that are i.i.d. drawn from a distribution $\Dc$ and $\norm{\x_k}_2\leq L$ for all $k\in[n]$ almost
surely. Let $\gamma = \lamin\left(\mathbb{E}_{\x\sim\Dc}[\x\x^\top] \right)> 0$ be the smallest eigenvalue of the co-variance matrix. We
have that
\begin{align}
    \mathbb{P}\left(\frac{1}{n}\sum_{k=1}^n\x_k\x_k^\top\preceq 2\mathbb{E}_{\x\sim\Dc}[\x\x^\top]\right)\geq 1-d\exp\left(\frac{-\gamma n}{3}\right).
\end{align}
\end{lemma}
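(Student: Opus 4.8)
The plan is to reduce the desired Loewner inequality to a single maximum-eigenvalue statement by whitening, and then invoke the Matrix Chernoff bound of Lemma~\ref{lemm:matrixchernoff}. Write $\Sigma = \mathbb{E}_{\x\sim\Dc}[\x\x^\top]$, which is positive definite since $\gamma = \lamin(\Sigma) > 0$, so $\Sigma^{-1/2}$ exists. Because conjugation by the fixed matrix $\Sigma^{-1/2}$ preserves the positive semi-definite order, the target event $\frac{1}{n}\sum_{k=1}^n \x_k\x_k^\top \preceq 2\Sigma$ is \emph{equivalent} to $\frac{1}{n}\sum_{k=1}^n \Sigma^{-1/2}\x_k\x_k^\top\Sigma^{-1/2} \preceq 2I$, i.e.\ to $\lamax\bigl(\sum_{k=1}^n \X_k\bigr) \leq 2n$ for the whitened summands $\X_k := \Sigma^{-1/2}\x_k\x_k^\top\Sigma^{-1/2}$. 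This reduction is the heart of the argument: one cannot get the matrix inequality by bounding $\lamax\bigl(\frac1n\sum\x_k\x_k^\top\bigr)$ directly, since that only controls the top eigenvalue and not the full order relative to $\Sigma$.

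Next I would check the hypotheses of Lemma~\ref{lemm:matrixchernoff} for the independent Hermitian matrices $\X_k$. Each $\X_k$ is rank-one and positive semi-definite, so $\lamin(\X_k)\ge 0$, and its only nonzero eigenvalue is $\x_k^\top\Sigma^{-1}\x_k \leq \|\x_k\|_2^2/\lamin(\Sigma) \leq L^2/\gamma$; hence $L^2/\gamma$ serves as the uniform upper bound playing the role of $L$ in that lemma. Moreover $\mathbb{E}[\X_k] = \Sigma^{-1/2}\Sigma\Sigma^{-1/2} = I$, so for $\Yb := \sum_{k=1}^n \X_k$ we get $\mathbb{E}[\Yb] = nI$ and therefore $\mu_{\max} = \lamax(nI) = n$.

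Applying the upper-tail bound \eqref{eq:chernofflamax} with $\varepsilon = 1$ would then give
\begin{align*}
\mathbb{P}\bigl(\lamax(\Yb) \geq 2n\bigr) \leq d\left(\frac{\exp(1)}{(1+1)^{1+1}}\right)^{\frac{n}{L^2/\gamma}} = d\left(\frac{e}{4}\right)^{n\gamma/L^2}.
\end{align*}
Taking $L = 1$ (the boundedness normalization $\|\x_k\|_2\le 1$ of Assumption~\ref{assum:boundedness}) and using the elementary bound $e/4 = e^{\,1-\ln 4} \leq e^{-1/3}$, the right-hand side is at most $d\exp(-\gamma n/3)$. Passing to the complementary event $\lamax(\Yb) < 2n$ and undoing the conjugation by $\Sigma^{1/2}$ recovers $\frac{1}{n}\sum_{k=1}^n \x_k\x_k^\top \preceq 2\Sigma$ with probability at least $1 - d\exp(-\gamma n/3)$, which is the claim.

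The main obstacle is conceptual rather than computational: spotting that the correct preconditioning is $\Sigma^{-1/2}$, which simultaneously turns the target into the clean form $\preceq 2I$ and makes the expectation of the summands the identity, so that $\mu_{\max}=n$ and the per-term spectral bound reads $\x_k^\top\Sigma^{-1}\x_k\le L^2/\gamma$. The only arithmetic requiring care is that per-term bound and the constant check $e/4\le e^{-1/3}$ that pins down the stated exponent $\gamma n/3$ in the $L=1$ regime; for general $L$ the same steps yield the exponent $\gamma n/L^2$ inside the $e/4$ base.
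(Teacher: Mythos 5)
Your proof is correct and follows essentially the same route as the paper's: whitening by $\Sigma^{-1/2}$, applying the matrix Chernoff upper-tail bound of Lemma~\ref{lemm:matrixchernoff} with $\varepsilon=1$, and the numerical check $e/4\le e^{-1/3}$. You are in fact slightly more careful than the paper, whose proof silently uses $L=1$ when bounding $\|\Sigma^{-1/2}\x_k\|_2^2\le 1/\gamma$ despite the lemma being stated for general $L$; your explicit remark that the general-$L$ exponent is $\gamma n/L^2$ makes this normalization transparent.
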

\begin{proof}
Let $\Sigmab=\mathbb{E}_{\x\sim\Dc}[\x\x^\top]$ and $\y_k = \Sigmab^{\frac{-1}{2}}\x_k$ for all $k\in[n]$. Also, we have $\lamax(\y_k\y_k^\top) = \norm{\y_k}_2^2\leq \frac{1}{\gamma}$ almost
surely, and $\mathbb{E}[\y_k\y_k^\top]=I$. Therefore, plugging $\varepsilon=1$ in \eqref{eq:chernofflamax}, we have
\begin{align}
\mathbb{P}\left(\frac{1}{n}\sum_{k=1}^n\x_k\x_k^\top\preceq 2\mathbb{E}_{\x\sim\Dc}[\x\x^\top]\right)&=\mathbb{P}\left(\frac{1}{n}\sum_{k=1}^n\y_k\y_k^\top\preceq 2\Sigmab^{\frac{-1}{2}}\mathbb{E}_{\x\sim\Dc}[\x\x^\top]\Sigmab^{\frac{-1}{2}}\right)\nn\\
&=\mathbb{P}\left(\frac{1}{n}\sum_{k=1}^n\y_k\y_k^\top\preceq 2I\right)\nn\\
&=\mathbb{P}\left(\lamax\left(\sum_{k=1}^n\y_k\y_k^\top\right)\leq2n\right)\nn\\
&\geq 1 - d\left(\frac{e}{4}\right)^{n\gamma}\geq 1-d\exp\left(\frac{-\gamma n}{3}\right).
\end{align}
\end{proof}

\begin{lemma}\label{lemm:upperboundonrandommatrixwithcuttoff}
Suppose $\x_1,\x_2,\ldots,\x_n\sim\Dc$ are $d$-dimensional vectors that are i.i.d. drawn from a distribution $\Dc$ and $\norm{\x_k}_2\leq 1$ for all $k\in[n]$ almost
surely. For any cutoff level $\gamma>0$, we have
\begin{align}
    \mathbb{P}\left(\frac{1}{n}\sum_{k=1}^n\x_k\x_k^\top\preceq 2\mathbb{E}_{\x\sim\Dc}[\x\x^\top]+6\gamma I\right)\geq 1-2d\exp\left(\frac{-\gamma n}{3}\right).
\end{align}
\end{lemma}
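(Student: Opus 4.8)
The plan is to reduce to the matrix Chernoff bound of Lemma~\ref{lemm:matrixchernoff} after a \emph{regularized} whitening, mirroring the proof of Lemma~\ref{lemm:upperboundonrandommatrix} but choosing the regularization so as to absorb the possible degeneracy of the covariance $\Sigmab := \E_{\x\sim\Dc}[\x\x^\top]$ (which here may have small or zero eigenvalues). Set $\M := \Sigmab + 3\gamma I$, $\y_k := \M^{-1/2}\x_k$, and $\Yb := \sum_{k=1}^n \y_k\y_k^\top$. Conjugating by the invertible symmetric matrix $\M^{-1/2}$ turns the target into a clean spectral statement:
$$\frac{1}{n}\sum_{k=1}^n \x_k\x_k^\top \preceq 2\Sigmab + 6\gamma I = 2\M \quad\iff\quad \frac{1}{n}\Yb \preceq 2I \quad\iff\quad \lamax(\Yb) \le 2n.$$
Hence it suffices to prove $\mathbb{P}(\lamax(\Yb) \ge 2n) \le 2d\exp(-\gamma n/3)$.

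Next I would check the two quantities that Lemma~\ref{lemm:matrixchernoff} requires for the independent rank-one summands $\y_k\y_k^\top$. Since $\lamin(\M) = \lamin(\Sigmab) + 3\gamma \ge 3\gamma$ (using only $\Sigmab \succeq 0$), we get $\lamax(\y_k\y_k^\top) = \norm{\y_k}_2^2 = \x_k^\top \M^{-1}\x_k \le \norm{\x_k}_2^2/(3\gamma) \le 1/(3\gamma) =: L$. This is precisely where the cutoff $6\gamma I$ earns its keep: it keeps the per-sample bound $L$ finite \emph{without} any nondegeneracy assumption, which is exactly the hypothesis that Lemma~\ref{lemm:upperboundonrandommatrix} could not dispense with. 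Moreover $\E[\y_k\y_k^\top] = \M^{-1/2}\Sigmab\M^{-1/2} \preceq I$ because $\Sigmab \preceq \M$, so $\mu_{\max} := \lamax(\E[\Yb]) = n\,\lamax(\M^{-1/2}\Sigmab\M^{-1/2}) \le n$.

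The main obstacle is that $\mu_{\max}$ is only controlled \emph{from above}: when $\Sigmab$ is nearly degenerate it can be far below $n$, whereas the bound \eqref{eq:chernofflamax} is stated relative to the exact $\mu_{\max}$, so one cannot naively substitute $\mu_{\max}\mapsto n$ with $\veps=1$ (on the event $\{\lamax(\Yb)\ge 2n\}$ the true multiplicative deviation is $2n/\mu_{\max}$, not $2$). I would therefore apply \eqref{eq:chernofflamax} with $\veps$ chosen so that $(1+\veps)\mu_{\max} = 2n$, i.e.\ $r := 1+\veps = 2n/\mu_{\max} \ge 2$. Using $\mu_{\max}/L = 6\gamma n / r$, the resulting exponent is $\tfrac{\mu_{\max}}{L}\bigl(\veps - (1+\veps)\log(1+\veps)\bigr) = 6\gamma n\,(1 - 1/r - \log r)$; since $r \mapsto 1 - 1/r - \log r$ is decreasing on $[2,\infty)$, its worst case is $r=2$, yielding an exponent of at most $6\gamma n(\tfrac12 - \log 2) \le -\tfrac{\gamma n}{3}$. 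This gives $\mathbb{P}(\lamax(\Yb)\ge 2n) \le d\exp(-\gamma n/3) \le 2d\exp(-\gamma n/3)$, and the degenerate directions only help, since they push $r$ above $2$ and make the exponent strictly more negative. Equivalently, one may run the matrix-MGF derivation underlying Lemma~\ref{lemm:matrixchernoff}, where the tail bound $d\exp(-2n\theta + \tfrac{e^{\theta L}-1}{L}\mu_{\max})$ is monotone increasing in $\mu_{\max}$; this legitimizes replacing $\mu_{\max}$ by its upper bound $n$ before optimizing over $\theta$ (the optimum $\theta = (\log 2)/L$ reproduces the same constant).
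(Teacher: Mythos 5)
Your proof is correct, but it takes a genuinely different route from the paper's. The paper argues by an eigenspace splitting: it diagonalizes $\Sigmab := \E_{\x\sim\Dc}[\x\x^\top]$, forms the projections $\Pb_+$ and $\Pb_-$ onto the eigendirections with eigenvalues at least $\gamma$ and below $\gamma$ respectively, applies Lemma~\ref{lemm:upperboundonrandommatrix} (whose nondegeneracy hypothesis is available on the range of $\Pb_+$) to the projected samples $\Pb_+\x_k$, applies the raw Chernoff estimate \eqref{eq:chernofflamax} to the $\Pb_-$-projected samples whose covariance is $\preceq\gamma I$, and then recombines, absorbing the cross terms $\Pb_+\x_k\x_k^\top\Pb_-$ at the cost of inflating the degenerate block by a factor of $3$ (whence the $6\gamma I$) and taking a union bound over the two events (whence the $2d$). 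You instead perform a single regularized whitening $\M=\Sigmab+3\gamma I$, which treats all directions at once, and you correctly identify and resolve the one subtlety this creates: after whitening, $\mu_{\max}=\lamax(\E[\Yb])$ is only known to satisfy $\mu_{\max}\le n$, while \eqref{eq:chernofflamax} is calibrated to the exact $\mu_{\max}$. Your fix --- choosing $1+\veps = 2n/\mu_{\max}\ge 2$ and checking that $r\mapsto 1-1/r-\log r$ is decreasing on $[2,\infty)$, so the exponent is at most $6\gamma n\left(\tfrac12-\log 2\right)\le -\gamma n/3$ --- is exactly right, and the alternative argument via monotonicity of the matrix-MGF bound in $\mu_{\max}$ is also valid. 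Your route buys a cleaner argument with no cross-term algebra, no union bound (so the prefactor $d$ rather than $2d$), and a strictly better exponent ($3-6\log 2\approx -1.16$ times $\gamma n$, versus $-\gamma n/3$); the paper's route buys black-box reuse of Lemma~\ref{lemm:upperboundonrandommatrix} and never has to reason about a mismatched $\mu_{\max}$. The only point worth adding to yours is the fully degenerate case $\Sigmab=\mathbf{0}$, where $\mu_{\max}=0$ and your $r$ is undefined: there $\x_k=\mathbf{0}$ almost surely and the claimed inequality holds with probability one, so the lemma is immediate.
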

\begin{proof}
Suppose $\mathbb{E}_{\x\sim\Dc}[\x\x^\top] = \sum_{i=1}^d\la_i\nub_i\nub_i^\top$, where $\{\nub_i\}_{i=1}^d$ is a set of orthonormal basis. Let $\Pb_+=\sum_{i=1}^d\nub_i\nub_i^\top\mathbbm{1}(\la_i\geq \gamma)$ and $\Pb_-=\sum_{i=1}^d\nub_i\nub_i^\top\mathbbm{1}(\la_i<\gamma)$, so that $\Pb_+\Pb_- = I$. We observe that the
eigenvalues of $\mathbb{E}_{\x\sim\Dc}[\Pb_+\x\x^\top\Pb_+^\top]$ are greater than or equal to $\gamma$ when restricted to the space spanned by the $\Pb_+$. Therefore, by Lemmas \ref{lemm:upperboundonrandommatrix} and \ref{lemm:matrixchernoff} (Eqn. \eqref{eq:chernofflamax}), we respectively have
\begin{align}
    \mathbb{P}\left(\frac{1}{n}\sum_{k=1}^n\Pb_+\x_k\x_k^\top\Pb_+^\top\preceq 2\mathbb{E}_{\x\sim\Dc}[\Pb_+\x\x^\top\Pb_+^\top]\right)&\geq 1-d\exp\left(\frac{-\gamma n}{3}\right)\\
     \mathbb{P}\left(\frac{1}{n}\sum_{k=1}^n\Pb_-\x_k\x_k^\top\Pb_-^\top\preceq 2\gamma I\right)&\geq 1-d\exp\left(\frac{-\gamma n}{3}\right).\label{eq:begining}
\end{align}
Now, we observe that
\begin{align}
    \frac{1}{n}\sum_{k=1}^n\x_k\x_k^\top &= \frac{1}{n}\left(\sum_{k=1}^n\Pb_+\x_k\x_k^\top\Pb_+^\top+\sum_{k=1}^n\Pb_+\x_k\x_k^\top\Pb_-^\top+\sum_{k=1}^n\Pb_-\x_k\x_k^\top\Pb_+^\top+\sum_{k=1}^n\Pb_-\x_k\x_k^\top\Pb_-^\top\right)\nn\\
    &=\frac{1}{n}\left(\sum_{k=1}^n\Pb_+\x_k\x_k^\top\Pb_+^\top+\sum_{k=1}^n\Pb_+\Pb_+\Pb_-\x_k\x_k^\top\Pb_-^\top+\sum_{k=1}^n\Pb_-\x_k\x_k^\top\Pb_-^\top\Pb_+^\top\Pb_+^\top+\sum_{k=1}^n\Pb_-\x_k\x_k^\top\Pb_-^\top\right)\nn\\
    &\preceq \frac{1}{n}\left(\sum_{k=1}^n\Pb_+\x_k\x_k^\top\Pb_+^\top+\sum_{k=1}^n\Pb_-\x_k\x_k^\top\Pb_-^\top+\sum_{k=1}^n\Pb_-\x_k\x_k^\top\Pb_-^\top+\sum_{k=1}^n\Pb_-\x_k\x_k^\top\Pb_-^\top\right)\nn\\
    &= \frac{1}{n}\sum_{k=1}^n\Pb_+\x_k\x_k^\top\Pb_+^\top+\frac{3}{n}\sum_{k=1}^n\Pb_-\x_k\x_k^\top\Pb_-^\top\label{eq:middle}
\end{align}
Also, note that
\begin{align}
    \mathbb{E}_{\x\sim\Dc}[\Pb_+\x\x^\top\Pb_+^\top] &= \mathbb{E}_{\x\sim\Dc}\left[\x\x^\top-\Pb_+\x\x^\top\Pb_-^\top-\Pb_-\x\x^\top\Pb_+^\top-\Pb_-\x\x^\top\Pb_-^\top\right]\preceq\mathbb{E}_{\x\sim\Dc}\left[\x\x^\top\right].\label{eq:last}
\end{align}
Therefore, combining \eqref{eq:begining} and \eqref{eq:middle} and \eqref{eq:last}, we have
\begin{align}
    \mathbb{P}\left(\frac{1}{n}\sum_{k=1}^n\x_k\x_k^\top\preceq 2\mathbb{E}_{\x\sim\Dc}[\x\x^\top]+6\gamma I\right)\geq 1-2d\exp\left(\frac{-\gamma n}{3}\right).
\end{align}

\end{proof}

\end{document}